\documentclass{article} 
\usepackage{iclr2025_conference,times}

\usepackage{arxiv}

\usepackage[utf8]{inputenc} 
\usepackage[T1]{fontenc}    
\usepackage{hyperref}
\usepackage{url}            
\usepackage{booktabs}       
\usepackage{amsfonts}       
\usepackage{nicefrac}       
\usepackage{microtype}      
\usepackage{xcolor}         
\usepackage{etoc}           
\usepackage{pifont}
\usepackage{adjustbox}
\usepackage{wrapfig}
\usepackage{multirow,mathtools}
\usepackage{rotating}
\usepackage{lscape}
\usepackage{longtable}
\usepackage{subcaption}
\usepackage{amssymb}
\usepackage{fontawesome}
\usepackage[font={small}]{caption}

\usepackage{multirow}
\usepackage{graphicx}
\graphicspath{{figures/}}

\newcommand{\Def}[0]{\mathrel{\mathop:}=}

\usepackage{pifont}

\usepackage{color, colortbl}
\definecolor{Gray}{gray}{0.93}
\definecolor{Orange}{rgb}{1,0.5,0}
\definecolor{DGray}{gray}{0.83}
\definecolor{LightCyan}{rgb}{0.88,1,1}

\usepackage[T1]{fontenc}

\definecolor{darkgreen}{rgb}{0.0, 0.65, 0.0}
\definecolor{darkred}{rgb}{0.5, 0.0, 0.0}
\definecolor{darkblue}{rgb}{0.0, 0.0, 0.5}
\definecolor{darkyellow}{rgb}{0.65, 0.65, 0}

\definecolor{mr}{RGB}{251, 111, 111}

\usepackage{microtype}
\usepackage{graphicx}
\usepackage{subcaption}
\usepackage{booktabs}
\usepackage{hyperref}

\usepackage{wrapfig}
\usepackage{pifont}
\usepackage{color, colortbl}

\usepackage{blindtext}
\usepackage{lipsum}

\usepackage{multirow}
\usepackage{graphicx}
\usepackage{listings}

\usepackage{bbm}

\usepackage [english]{babel}
\usepackage [autostyle, english = american]{csquotes}

\usepackage{amsmath}
\usepackage{amssymb}
\usepackage{mathtools}
\usepackage{amsthm}

\usepackage[capitalize,noabbrev]{cleveref}

\theoremstyle{plain}
\newtheorem{theorem}{Theorem}[section]
\newtheorem{proposition}[theorem]{Proposition}

\theoremstyle{definition}

\theoremstyle{remark}
\newtheorem{remark}[theorem]{Remark}
\usepackage[textsize=tiny]{todonotes}

\usepackage{pifont}
\usepackage{url}
\usepackage[most]{tcolorbox}
\usepackage{lipsum}
\usepackage{wrapfig}
\usepackage{booktabs}
\usepackage{multirow,mathtools } 

\usepackage{adjustbox}
\MakeOuterQuote{"}
\usepackage{algorithm}
\usepackage{algorithmic}
\usepackage{enumitem}
\usepackage{dsfont}

\usepackage{amsmath,amsfonts,bm}

\def\eqref#1{(\ref{#1})}

\def\1{\bm{1}}

\DeclareMathAlphabet{\mathsfit}{\encodingdefault}{\sfdefault}{m}{sl}
\SetMathAlphabet{\mathsfit}{bold}{\encodingdefault}{\sfdefault}{bx}{n}

\newcommand{\Var}{\mathrm{Var}}

\DeclareMathOperator*{\argmin}{arg\,min}

\DeclareMathOperator{\diag}{diag}

\DeclareMathOperator{\sign}{sign}

\DeclareMathOperator{\std}{std}
\DeclareMathOperator{\Bern}{Bern}
\DeclareMathOperator{\clip}{clip}

\everydisplay{\small}

\title{
Rethinking Muon Beyond Pretraining: Spectral Failures and High-Pass Remedies for VLA and RLVR
}

\author{Chongyu Fan$^{\dag}$ ~~Gaowen Liu$^{\ddag}$ ~~Mingyi Hong$^{\P}$ ~~Ramana Rao Kompella$^{\ddag}$ ~~Sijia Liu$^{\dag,\S}$\\
  $^\dag$Michigan State University ~~$^\ddag$Cisco ~~$^\P$University of Minnesota ~~$^\S$IBM Research
}
\date{}

\begin{document}

\pagestyle{fancy}
\fancyhf{}
\cfoot{\thepage}

\maketitle

\vspace{-14mm}
\begin{center}
\href{https://github.com/OPTML-Group/Pion}{\faGithub\ \texttt{GitHub}} \quad\textbar\quad
\href{https://chongyu-fan.netlify.app/posts/pion/}{\faGlobe\ \texttt{{Project Page}}}
\end{center}
\vspace{4mm}

\etocsettocdepth.toc{none}

\begin{abstract}
Muon (\underline{M}oment\underline{U}m \underline{O}rthogonalized by \underline{N}ewton--Schulz) is a matrix-aware optimizer that leverages Newton--Schulz (NS) iterations to enforce spectral gradient orthogonalization by driving all singular values of the momentum matrix toward $1$. While this \textit{uniform spectral whitening} enhances exploration and outperforms AdamW in LLM pretraining, we show it could lead to fundamental limitations beyond pretraining in two increasingly important regimes: (i) cross-modality \textit{vision-language-action} (VLA) training, where inherently low-rank action-module gradients cause amplification of noisy tail directions, and (ii) \textit{reinforcement learning with verifiable rewards} (RLVR), where low-SNR gradients and the need to preserve per-head specialization inherited from prior training make whitening unstable.
To address these challenges, we propose \textbf{Pion} (s\underline{P}ectral h\underline{I}gh-pass \underline{O}ptimization on mome\underline{N}tum), a drop-in replacement for Muon that preserves its computational efficiency while replacing uniform spectral whitening with a two-stage \textit{Promotion+Suppression} mechanism, which we call the \textit{high-pass NS} iteration. This design induces a sharp spectral high-pass effect, anchoring dominant singular values at $1$ while suppressing noisy tail components toward $0$, with controllable filter strength. 
To preserve pretrained per-head heterogeneity, Pion also supports a \textit{per-head} mode that applies updates independently across attention heads via a simple reshape, at no extra cost. Extensive experiments demonstrate consistent gains over Muon and AdamW across both VLA and RLVR regimes. In VLA training on LIBERO and LIBERO-Plus, Pion consistently outperforms both baselines across $\ell_1$-regression (VLA-Adapter) and flow-matching (VLANeXt) architectures, \textit{e.g.}, reaching $100\%$ success rate on LIBERO Object after $1{,}500$ training steps with VLA-Adapter, vs.\ $97.0\%$ for Muon and only $32.2\%$ for AdamW. {The advantage of Pion further extends to a real Franka Research 3 robot with a $\pi_{0.5}$ backbone under the DROID setup on three grasp\mbox{-}and\mbox{-}place tasks.} In RLVR post-training on Qwen3-1.7B/4B with GRPO and GMPO, Pion also outperforms AdamW on MATH and GSM8K while Muon collapses to zero.
\end{abstract}

\section{Introduction}
\label{sec: intro}

AdamW has been the dominant optimizer for deep learning. A recent line of \textit{matrix-aware} optimizers \citep{gupta2018shampoo,vyas2024soap,jordan2024muon,liu2025muon} departs from this element-wise paradigm by exploiting the spectral geometry of weight matrices. Among them, \textbf{Muon} \citep{jordan2024muon,liu2025muon} approximates steepest descent under the spectral norm via multi-step Newton--Schulz (NS) iterations that orthogonalize the momentum matrix. This design has achieved consistent gains in large language model (LLM) pretraining and inspired a family of variants \citep{li2025normuon,si2025adamuon,he2025root,amsel2025polar,ahn2025dion,wang2026taming,he2025low,pan2025unbiased,lang2026powering}. 

Despite this progress, Muon’s effectiveness \textit{beyond} pretraining remains underexplored. In this work, we ask whether its core mechanism, the matrix sign operation (\textit{i.e.}, gradient orthogonalization that drives all singular values toward $1$), remains a desirable inductive bias in non-pretraining regimes.


Inspired by this, we study two representative paradigms beyond pretraining: (i) \textit{multimodal training}, which adapts a base model to new modalities, with our focus on vision-language-action (\textbf{VLA}) models \citep{kim2024openvla,black2024pi_0,intelligence2025pi_,wang2026vla,kim2025fine} built on vision-language models (VLMs); and (ii) \textit{reinforcement-learning-based post-training}, with our focus on RL with verifiable rewards (\textbf{RLVR}) \citep{shao2024deepseekmath,guo2025deepseek,zhang2025survey}.

Therefore, the key research question we address in this work is:
\begin{tcolorbox}[before skip=2mm, after skip=2mm, boxsep=0cm, top=0.1cm, bottom=0.1cm, boxrule=0.6pt]
\begin{center}
\textit{\textbf{(Q)} 
Does Muon exhibit promise or limitations in underexplored training paradigms such as VLA and RLVR? If limitations arise, what are the causes and remedies?
}
\end{center}
\end{tcolorbox}



To address (Q), we attribute Muon’s limitations in both VLA and RLVR to a shared \textit{spectral mismatch}. In VLA, the action gradient is highly low-rank, while in RLVR the policy gradient is low-SNR. In both cases, informative directions concentrate in a few leading singular values, with the remaining tail dominated by noise (\textit{e.g.}, spectral floor or stochastic estimation noise). Muon’s NS iteration uniformly whitens this spectrum, elevating noisy tail directions to the same magnitude as the informative head and thereby corrupting the update. In addition, Muon applies NS to each weight matrix as a \textit{single} block, ignoring the per-head specialization in attention projections inherited from pretraining. This prevents Muon from respecting the heterogeneous update scales required across heads during post-training. The closest related line of work is \textit{Low-Rank Muon} \citep{he2025low,pan2025unbiased,lang2026powering}, which projects the momentum onto a top-$k$ subspace (via SVD or random sketching) before applying NS. However, it (i) has been studied primarily in LLM pretraining rather than regimes such as VLA or RLVR; (ii) relies on a fixed rank $k$ that cannot adapt across layers or training steps; and (iii) incurs non-trivial per-step SVD or sketching overhead, resulting in significantly poorer scalability than NS iterations in standard Muon.

We exploit the structure of NS to design a direct drop-in alternative to Muon, avoiding computationally intensive spectral operations such as SVD or sketching. Since each NS step reshapes normalized singular values via a scalar polynomial, improving NS reduces to redesigning this polynomial map.
Building on this view, we propose \textbf{Pion} (s\underline{P}ectral h\underline{I}gh-pass \underline{O}ptimization on mome\underline{N}tum), which splits the NS iterations into a two-stage \textit{Promotion+Suppression} sequence. The polynomial coefficients are determined by constraints that first promote dominant singular values and then suppress the tail. This yields a \textit{soft} high-pass filter that anchors leading singular values at $1$ while driving the tail toward $0$, with per-step cost identical to Muon.
We further introduce a \textit{per-head} mode that reshapes each attention projection along its head dimension and applies the high-pass NS independently per head, thereby respecting the heterogeneous update scales required across heads beyond pretraining.

$\bullet$ 
We identify fundamental limitations of Muon in VLA and RLVR (beyond pretraining) for the first time, arising from its uniform spectral whitening, which amplifies noise in low-rank gradients (\textit{e.g.}, VLA action heads) or low-SNR gradients (\textit{e.g.}, RLVR).

$\bullet$
We propose \textit{Pion}, which redesigns NS into a two-stage \textit{Promotion+Suppression} polynomial iteration (termed \textit{high-pass NS}) that preserves leading singular directions while suppressing noise, at per-step cost identical to Muon. Pion further supports a \textit{per-head mode} that applies the iteration independently across attention heads via a simple reshape, incurring no additional cost.

$\bullet$ 
On VLA training with $\ell_1$-regression and flow-matching heads over LIBERO and LIBERO-Plus as well as on a real Franka Research 3 robot using a $\pi_{0.5}$ backbone \citep{intelligence2025pi_}, 
and on RLVR post-training with GRPO and GMPO using Qwen3-1.7B/4B on MATH and GSM8K, Pion consistently outperforms AdamW and Muon while matching Muon’s computational efficiency.

\section{Related Work}
\label{sec: related_work}

\noindent \textbf{Muon and matrix-aware optimizers.}
\textit{Matrix-aware} optimizers exploit the spectral geometry of weights: Shampoo/SOAP \citep{gupta2018shampoo,vyas2024soap} use Kronecker-factored preconditioners at high memory cost, while Muon \citep{jordan2024muon,liu2025muon} orthogonalizes momentum via NS iterations. Variants improve Muon's per-parameter LR \citep{li2025normuon,si2025adamuon}, noise robustness \citep{he2025root}, NS coefficients \citep{amsel2025polar}, distributed orthonormalization \citep{ahn2025dion}, and low-rank momentum \citep{wang2026taming,he2025low}, but all retain its \textit{uniform} whitening or rely on costly SVD/sketching. Pion replaces uniform whitening with a polynomial-iteration spectral \textit{high-pass} at no additional overhead.

\noindent \textbf{Vision-language-action models.}
VLA models turn pretrained VLMs into closed-loop robot policies \citep{kim2024openvla,black2024pi_0,intelligence2025pi_,zhong2025survey}, differing mainly in the action head -- $\ell_1$-regression \citep{wang2026vla,kim2025fine,wu2026vlanext,goyal2025vla}, flow-matching \citep{lipman2022flow,black2024pi_0}, tokenization \citep{pertsch2025fast}, and discrete/diffusion decoders \citep{liang2025discrete,wen2025llada,li2024cogact} -- with further work on compactness \citep{shukor2025smolvla,wen2025tinyvla}, prompting \citep{zheng2024tracevla,zhang2026vlm4vla}, and benchmarks \citep{liu2023libero,mees2022calvin,o2024open,li2024evaluating}. The cross-modal VLA \textit{optimizer} is overlooked; we show its action-module gradient is low-rank and calls for a rank-adaptive optimizer.

\noindent \textbf{RLVR and policy optimization for LLM reasoning.}
RLVR \citep{shao2024deepseekmath,guo2025deepseek,yang2025qwen3,zhang2025survey} turns programmatic verifiers into a post-training reward, building on classical policy gradients \citep{williams1992simple,schulman2015trust,schulman2017proximal} and RLHF \citep{ouyang2022training,bai2022constitutional,ethayarajh2024kto,li2023remax}. Subsequent work mostly refines the GRPO \citep{shao2024deepseekmath} \textit{objective} -- importance-ratio normalization \citep{zhao2025geometric,zheng2025group}, clipping/IS \citep{yu2025dapo,wang2025aspo,mao2025clip,liu2026length,su2025klear}, critic-free advantage \citep{hu2025reinforce++}, KL \citep{zhang2025design}, exploration \citep{li2026back,fan2026cyclicreflex}, off-policy stability \citep{zheng2025prosperity,roux2025tapered}, and infra/dynamics \citep{sheng2025hybridflow,kwon2023efficient,liu2025understanding,zhu2025path,yue2025does}. Orthogonal to these, we target the \textit{optimizer}: per-head Pion yields stable, AdamW-matching gains where Muon collapses on the low-SNR RLVR gradient.

\section{Muon and Two Underexplored Training Regimes: VLA and RLVR}
\label{sec: preliminary}

\noindent \textbf{Muon as spectral optimization.}
Muon~\citep{jordan2024muon} is a matrix-aware optimizer whose core principle is to update a weight matrix $\boldsymbol{\Theta} \in \mathbb{R}^{m \times n}$ along the \textit{steepest descent direction under the spectral norm}. 
Given a stochastic gradient $\mathbf G_t$ at iteration $t$ as well as a momentum buffer $\mathbf M_t = \mu \mathbf M_{t-1} + \mathbf  G_t$ (with $\mu$ denoting the momentum coefficient), Muon updates the weight as
\begin{align}
    \boldsymbol{\Theta}_t = \boldsymbol{\Theta}_{t-1} - \eta \, \mathrm{msign}(\mathbf{M}_t),
    \label{eq:Muon_basic}
\end{align}
where $\eta > 0$ is the step size, and 
$\mathrm{msign}(\cdot)$ denotes a matrix sign operator, also known as \textit{gradient orthogonalization}, which transforms the momentum $\mathbf{M}_t$ in the spectral domain by mapping its singular values to $1$ while preserving the singular vectors. This gives rise to
\begin{align}
    \mathrm{msign}(\mathbf{M}) = \mathbf U \mathrm{sign}(\boldsymbol{\Sigma}) \mathbf V^\top =  \mathbf U \mathbf V^\top
    \label{eq:msign}
\end{align}
where the iteration index $t$ is omitted for brevity. Here, $\mathbf{M} = \mathbf U \boldsymbol{\Sigma} \mathbf V^\top$ denotes the \textit{compact} singular value decomposition (SVD) of $\mathbf{M}$, where $\mathbf U$ and $\mathbf V$ are the left and right singular vector matrices, and $\boldsymbol{\Sigma}$ is the $r \times r$ diagonal matrix collecting the $r = \mathrm{rank}(\mathbf M)$ strictly positive singular values. The sign operator then yields $\mathrm{sign}(\boldsymbol{\Sigma}) = \mathbf I_r$, returning $1$ for every (strictly positive) singular value.
%


\noindent \textbf{Newton--Schulz (NS) iterations in Muon.}
 As shown in \eqref{eq:msign}, Muon induces a spectrally isotropic update by assigning equal magnitude to all singular directions, which promotes strong exploration during training. However, computing $\mathrm{msign}(\mathbf{M})$ via SVD incurs significant computational overhead and is impractical for large model training. In practice, Muon instead approximates the matrix sign operator using a small number of NS (Newton--Schulz) iterations. 
 
 The rationale behind the NS iteration is based on the equivalent form
$\mathrm{msign}(\mathbf{M}) = \mathbf{M}(\mathbf{M}^\top \mathbf{M})^{-\frac{1}{2}}$,
which reduces the problem to computing $(\mathbf{M}^\top \mathbf{M})^{-\frac{1}{2}}$. This inverse square root is then approximated via a polynomial iteration derived from a local Taylor expansion around the identity. As a result, NS iteratively applies low-order matrix polynomials to approximate $(\mathbf{M}^\top \mathbf{M})^{-1/2}$, and thus $\mathrm{msign}(\mathbf{M})$, without requiring explicit matrix decomposition. Specifically, for a general matrix $\mathbf{X}$, the matrix sign operator $\mathrm{msign}(\mathbf{X})$ is approximated via  NS iteration of the following form \citep{jordan2024muon}
\begin{align}
    \mathbf{X} \;\leftarrow\; a\,\mathbf{X} + b\,\mathbf{X}\mathbf{X}^\top\mathbf{X} + c\,\mathbf{X}(\mathbf{X}^\top\mathbf{X})^2, ~~\text{with}~~\text{$(a, b, c) = (3.4445, -4.7750, 2.0315)$},
    \label{eq: ns_matrix}
\end{align}
where the input is pre-normalized as $\mathbf{X} \leftarrow \mathbf{X}/(\|\mathbf{X}\|_{\mathrm{F}} + \epsilon)$ (with small $\epsilon \geq 0$) to bound all singular values within $[0,1]$, and $\|\cdot\|_{\mathrm{F}}$ denotes the Frobenius norm.
Setting $\mathbf{X} = \mathbf{M}_t$, the NS iterations are used in place of \eqref{eq:msign} to approximate the $\mathrm{msign}$ operation in the Muon update \eqref{eq:Muon_basic}.

\noindent \textbf{Underexplored regimes for Muon beyond LLM pretraining.}
Muon is widely used for LLM pretraining. We show that Muon-type optimizers also hold significant potential beyond this setting. However, the conventional Muon design exhibits important limitations in these settings (as will be shown in Sec.\,\ref{sec: motivation}), leading to suboptimal performance and hindering its broader adoption.
Throughout our work, we focus on two underexplored training regimes for Muon: (i) multimodal training of VLA (vision-language-action) models, and (ii) post-training via RLVR (reinforcement learning with verifiable rewards), where Muon remains less explored than AdamW.

\noindent \textbf{(i) VLA} trains a policy on offline demonstrations $\mathcal{D} = \{(\mathbf{x}, \mathbf{c}, \mathbf{a})\}$ to map visual observations $\mathbf{x}$ and language instructions $\mathbf{c}$ to continuous robot actions $\mathbf{a}$. Internally, the policy is factorized into a VLM (vision-language model) backbone and an action head, parameterized as $\boldsymbol{\Theta} = \{\boldsymbol{\Theta}_{\mathrm{VLM}}, \boldsymbol{\Theta}_{\mathrm{action}}\}$. We consider two representative designs for the action head $\boldsymbol{\Theta}_{\mathrm{action}}$ (training losses detailed in \textbf{Appendix\,\ref{sec:prelim_vla_heads}}): a \textit{$\ell_1$-regression head} \citep{wang2026vla,kim2025fine}, and a \textit{flow-matching head} \citep{lipman2022flow,black2024pi_0,wu2026vlanext}.

 \textbf{(ii) RLVR} is a \textit{post-training} paradigm in which the supervised fine-tuning (SFT)-initialized policy is further updated by policy gradient against a rule-based, verifiable reward \citep{shao2024deepseekmath}. Unlike SFT, which matches token-level teacher signals on offline demonstrations, RLVR alternates between three stages at every iteration: \textit{rollout}, \textit{scoring}, and \textit{policy update}. We instantiate the policy update via two algorithms, GRPO \citep{shao2024deepseekmath} and GMPO \citep{zhao2025geometric} (training objectives formalized in \textbf{Appendix\,\ref{sec:prelim_rlvr_algs}}).

\section{Rethinking Muon in Heterogeneous and Noisy Training Regimes}
\label{sec: motivation}

In this section, we show that the default Muon design exhibits fundamental limitations in VLA and RLVR, revealing opportunities for improved optimizer design.



\noindent \textbf{Rank adaptiveness in cross-modality VLA training.}
VLA models jointly train three heterogeneous modules, a vision encoder, a language backbone, and an action head \citep{kim2024openvla,black2024pi_0}, whose gradients can differ significantly in their intrinsic dimensionality. 
To quantify this heterogeneity, we use the \textit{effective rank} (\textit{erank}) \citep{roy2007effective} of a gradient matrix $\mathbf{G} \in \mathbb{R}^{m \times n}$ (\textit{w.l.o.g.}, $n \leq m$), defined via the entropy of its singular value spectrum:
\begin{align}
    \mathrm{erank}(\mathbf{G}) \Def \exp \bigl( H(\mathbf p) \bigr),
    \quad
    H(\mathbf p) = -\sum_{i=1}^{n} p_i \log p_i,
    \quad 
     p_i = \frac{\sigma_i(\mathbf{G})}{\sum_{j=1}^{n} \sigma_j(\mathbf{G})},
    \label{eq: erank}
\end{align}
where $\mathbf p = [p_1, \ldots, p_n]^\top$,  and 
$\sigma_i(\mathbf{G})$ denotes the $i$-th singular value of $\mathbf{G}$.
A higher erank indicates that the gradient energy is distributed across many directions.


\textbf{Fig.\,\ref{fig:vla_optimizer_compare}-(a)} reports the average per-module erank along the trajectory of training VLA-Adapter on LIBERO Object. The vision module maintains the highest erank, the language module is intermediate, and the action module consistently exhibits the \textit{lowest} erank. This ordering is stable across training steps, with intra-module variance (column-wise) much smaller than inter-module variance (row-wise). It also aligns with the information capacity of each modality: vision inputs encode rich pixel-level statistics, language tokens use high-dimensional embeddings to disambiguate a large vocabulary, while each action is just a seven-dimensional vector 
encoding the incremental end-effector translation, rotation, and a binary gripper command. Given this low-rank structure, applying Muon uniformly to the action module inflates every normalized singular value toward $1$, making Muon ill-suited for the action module despite its effectiveness on the higher-rank vision and language modules.

\begin{figure}[htbp]
    \centering
    \vspace{-4mm}
    \begin{tabular}{ccc}
    \hspace*{-3mm}  \includegraphics[width=0.29\textwidth]{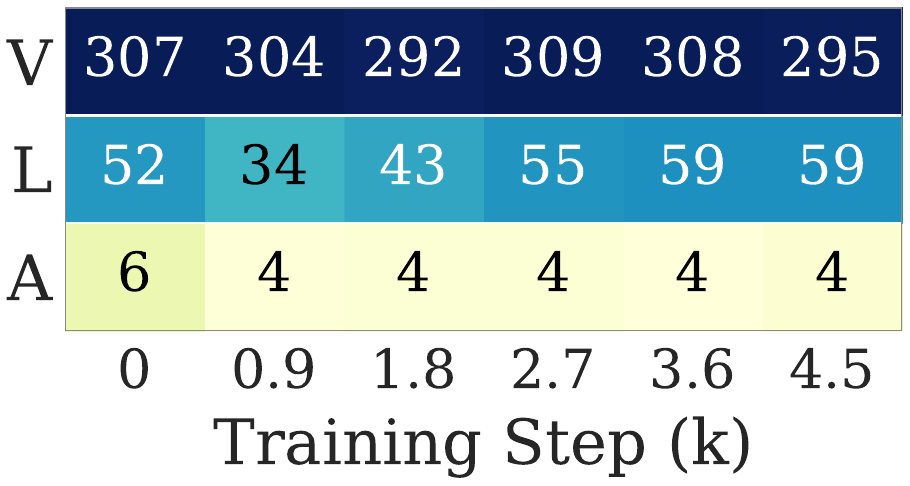} &
    \includegraphics[width=0.29\textwidth]{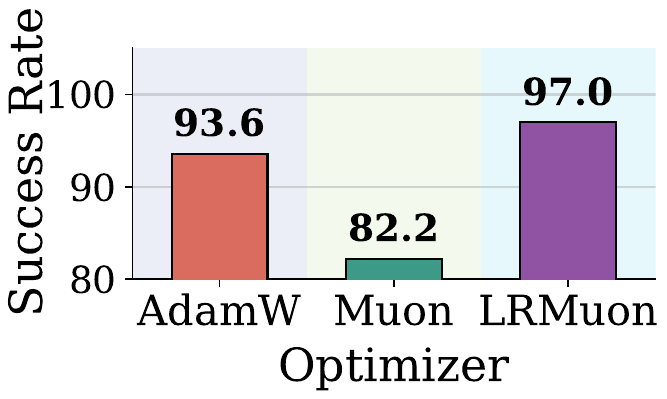} &
    \includegraphics[width=0.29\textwidth]{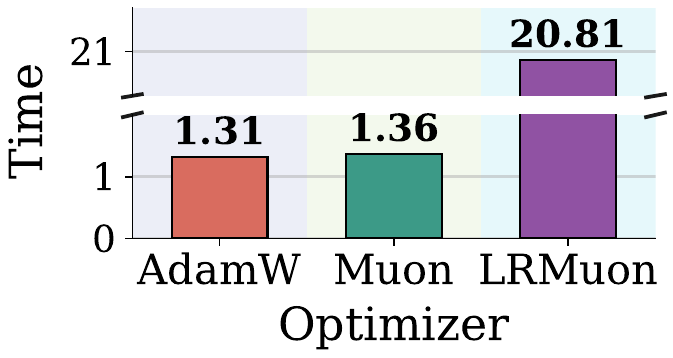}
    \\
   \hspace*{-5mm} \small{(a) Per-module gradient erank}
    &
    \small{(b) Test success rate}
    &
    \small{(c) Total training time (hrs)}
    \end{tabular}
    \vspace{-1mm}
    \caption{\small{Limitations of Muon in VLA training (VLA-Adapter on LIBERO Object). (a) Average per-module gradient erank (V/L/A) along the training trajectory of $4.5$k steps, recorded every $900$ steps. (b)--(c) 
    Test success rates on LIBERO Object for models trained for $4.5$k steps, along with total training time (hours), under three optimizer configurations: AdamW / Muon / LRMuon for the action module, with AdamW fixed for VL modules. 
    }}
    \label{fig:vla_optimizer_compare}
    \vspace{-2mm}
\end{figure}

\textit{Can existing Muon variants address the limitation in VLA training?}
A natural candidate is \textit{Low-rank Muon} (\textbf{LRMuon}) \citep{he2025low,pan2025unbiased,lang2026powering}, which projects the momentum onto a low-rank subspace (via SVD or Gaussian sketching) prior to gradient orthogonalization. This approach can adapt to the low-rank structure of the action-module gradients. However, both SVD and Gaussian sketching incur substantially higher computational cost than NS, leading to slower training.
To validate this, \textbf{Fig.\,\ref{fig:vla_optimizer_compare}-(b,c)} reports the success rate on the LIBERO Object evaluation set together with the total training time, under three optimizer configurations that share the same AdamW updates on the vision and language modules and \textit{differ only in the action module:} (i) \textbf{AdamW}, (ii) \textbf{Muon}, and (iii) \textbf{LRMuon} (see \textbf{Alg.\,\ref{alg:svdmuon_optimizer}} in \textbf{Appendix\,\ref{sec:svdmuon}} for details). We deliberately fix the V/L optimizer to AdamW, 
so that \textit{the comparison isolates the effect of the action-module optimizer}.
As shown, Muon underperforms AdamW, as expected from the rank heterogeneity shown in Fig.\,\ref{fig:vla_optimizer_compare}-(a).
In addition, LRMuon achieves the highest success rate, confirming the benefit of rank-aware optimization for the action module; however, it incurs about $15\times$ higher training cost than AdamW and Muon.

%
%
Motivated by the above, we summarize the first limitation of Muon below.


\begin{tcolorbox}[before skip=2mm, after skip=0.0cm, boxsep=0.0cm, middle=0.0cm, top=0.1cm, bottom=0.1cm, boxrule=0.6pt]
\textit{\textbf{(Limitation 1)}  
Lack of rank adaptiveness:  Conventional Muon is not adaptive to rank heterogeneity across modules, leading to suboptimal performance, while explicit low-rank projection introduces significant computational overhead, limiting scalability.
}
\end{tcolorbox}
\vspace*{2mm}


\noindent \textbf{SNR tolerance for RLVR post-training.}
Despite recent progress applying Muon to SFT-based (pre-)training \citep{liu2025muon,si2025adamuon,li2025normuon}, its effectiveness in post-training, particularly for RLVR, remains largely unexplored.
To understand this gap, we examine how SFT and RLVR, as two post-training paradigms, differ in terms of gradient signal-to-noise ratio (SNR). Unlike LLM pretraining, post-training typically requires only moderate modifications to weights \citep{gan2026neural}, making optimization more sensitive to noise. Meanwhile, as discussed in \textbf{Sec.\,\ref{sec: preliminary}}, a key characteristic of Muon is its strong exploration behavior induced by the uniform spectral sign function \eqref{eq:msign}, which can amplify noise during training.

Motivated by the above, we analyze the per-step gradient SNR of a layer’s weight matrix, defined as
\begin{align}
    \mathrm{SNR}(\mathbf{G}) \Def \frac{\bigl\|\,\mathbb{E}[\mathbf{G}]\,\bigr\|_{\mathrm{F}}^2}{\mathbb{E}\bigl[\,\|\mathbf{G} - \mathbb{E}[\mathbf{G}]\|_{\mathrm{F}}^2\,\bigr]},
    \label{eq: snr}
\end{align}
where $\mathbf{G}$ denotes the stochastic gradient with respect to a layer’s weight matrix, and the expectation is taken over the batch. A higher SNR indicates a cleaner gradient signal.


\begin{wrapfigure}{r}{0.43\textwidth}
    \centering
    \vspace{-4mm}
    \begin{tabular}{cc}
    \hspace*{-3mm}
    \includegraphics[width=0.225\textwidth]{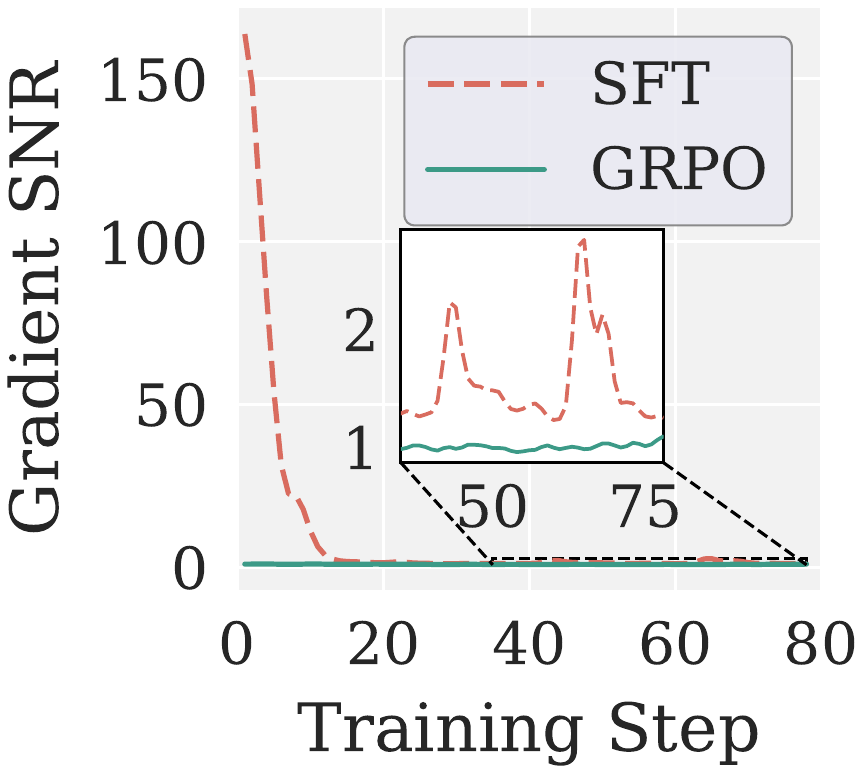}
    &
    \hspace*{-5mm}
    \includegraphics[width=0.215\textwidth]{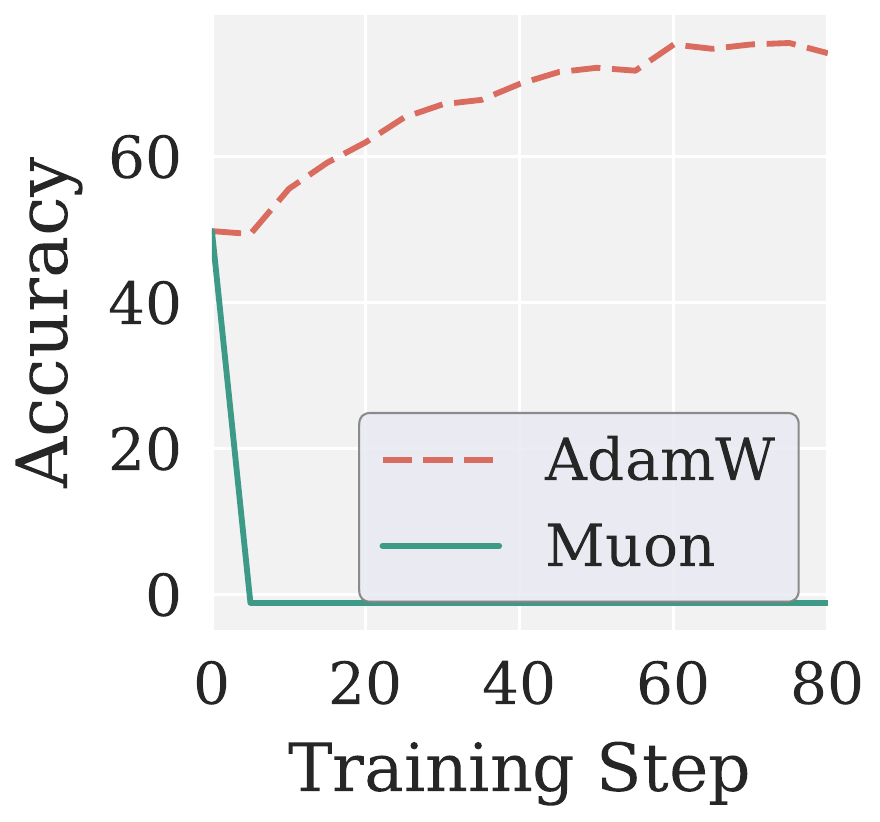}
    \\
    \small{(a)}
    &
    \hspace*{2mm}\small{(b)}
    \end{tabular}
    \vspace*{-2mm}
    \caption{\small{(a) Gradient SNR of SFT vs. GRPO (AdamW, Qwen3-1.7B) on MATH levels 3--5. (b) MATH500 accuracy of Qwen3-1.7B via GRPO (AdamW vs. Muon).}}
    \label{fig:rl_analysis}
    \vspace*{-4mm}
\end{wrapfigure}

We use GRPO \citep{shao2024deepseekmath} as the representative RLVR algorithm, train Qwen3-1.7B on MATH levels 3--5 \citep{liu2025understanding}, and evaluate on MATH500. \textbf{Fig.\,\ref{fig:rl_analysis}-(a)} compares the gradient SNR of SFT and GRPO, both optimized with AdamW. As shown, GRPO consistently exhibits a much \textit{lower} SNR than SFT throughout training. 
We attribute this gap to two primary sources of additional noise in GRPO. First, GRPO has coarser \textit{supervision granularity}: SFT receives token-level teacher signals, whereas GRPO relies on trajectory-level rewards, resulting in a significantly sparser learning signal per token. Second, GRPO relies on \textit{stabilization mechanisms}: Importance sampling, clipping, and group-relative normalization in \eqref{eq:grpo_obj} reweight or suppress portions of per-token gradients, thereby further increasing gradient variance. As a result, GRPO gradients exhibit a \textit{low-SNR} structure, a regime in which Muon’s spectral whitening becomes counterproductive. A detailed derivation is provided in \textbf{Appendix\,\ref{sec:snr}}.

\textbf{Fig.\,\ref{fig:rl_analysis}-(b)} reports the evaluation accuracy of GRPO under AdamW and Muon. As shown, GRPO using AdamW steadily improves accuracy throughout training, whereas GRPO using Muon exhibits a \textit{model collapse}: the accuracy drops from the initial checkpoint and converges to near zero. 
This behavior confirms that Muon’s uniform spectral whitening amplifies noisy directions in low-SNR GRPO gradients to the same magnitude as informative ones, rapidly corrupting the policy. A further limitation is that Muon’s $\mathrm{msign}$ (via NS iterations) operates on each layer-wise weight matrix as a \textit{single} block, ignoring the \textit{per-head specialization} established during pretraining in attention projections.


We summarize the above limitation of Muon as evidenced in RLVR post-training below.
\begin{tcolorbox}[before skip=2mm, after skip=0.0cm, boxsep=0.0cm, middle=0.0cm, top=0.1cm, bottom=0.1cm, boxrule=0.6pt]
\textit{\textbf{(Limitation 2)}  
Lack of noise adaptiveness: Muon’s uniform spectral whitening amplifies noisy directions in low-SNR gradients, making it ill-suited for noise-sensitive post-training regimes.
}
\end{tcolorbox}
\vspace*{2mm}

Both Limitations 1 and 2 stem from the inappropriate spectral exploration induced by the $\mathrm{msign}$ operator (\textit{i.e.}, via NS iterations). This motivates us to improve the design of NS iterations in the next section to enhance Muon's adaptiveness to rank heterogeneity and resilience to low-SNR gradients.

\section{Pion: s\underline{P}ectral h\underline{I}gh-pass \underline{O}ptimization on mome\underline{N}tum}
\label{sec: method}

\noindent \textbf{A unifying spectral view of Muon’s limitations: informative head vs.\ noisy tail.}
Although the two limitations of Sec.\,\ref{sec: motivation} originate from different sources (low erank for VLA, low SNR for RLVR), they share a \textit{common spectral signature}: in the SVD of $\mathbf{M}_t$, the few \textit{leading} singular values carry the informative descent direction, while the long \textit{tail} of small singular values is dominated by noise (spectral floor for low erank, stochastic estimation noise for low SNR). 
Muon's $\mathrm{msign}$, by driving \textit{every} $\sigma_i$ to $1$, lifts this tail to the same magnitude as the head and corrupts the update in both regimes. This motivates a single remedy, a \textit{spectral high-pass} that \textit{retains large singular values} (anchoring them near $1$) and \textit{suppresses small singular values} (contracting them toward $0$), in  contrast to Muon's uniform whitening (\textbf{Fig.\,\ref{fig:pion_iter}-(a)}). We realize this with \textbf{Pion} (s\underline{P}ectral h\underline{I}gh-pass \underline{O}ptimization on mome\underline{N}tum), which inherits Muon's control flow and per-step cost and differs only in the coefficients of its NS iteration. 

\noindent \textbf{A two-stage high-pass NS mechanism as a remedy.}
A single NS step \eqref{eq: ns_matrix} on $\mathbf{X} = \mathbf{U}\boldsymbol{\Sigma}\mathbf{V}^\top$ factors through the SVD as $\mathbf{U}\bigl(a\boldsymbol{\Sigma} + b\boldsymbol{\Sigma}^3 + c\boldsymbol{\Sigma}^5\bigr)\mathbf{V}^\top$ via the identity $\mathbf{X}(\mathbf{X}^\top \mathbf{X})^j = \mathbf{U}\boldsymbol{\Sigma}^{2j+1}\mathbf{V}^\top$. Hence the NS step preserves $(\mathbf{U}, \mathbf{V})$ and \textit{independently} reshapes each $\sigma_i \in [0, 1]$ through the polynomial
\begin{align}
    f(\sigma;\, a, b, c) \Def a\sigma + b\sigma^3 + c\sigma^5.
    \label{eq: pion_poly}
\end{align}
Thus, designing an NS iteration reduces to designing $f$ on $[0, 1]$ (see \textbf{Appendix\,\ref{sec:poly_vs_power}} for the full derivation). A single polynomial $f$ in \eqref{eq: pion_poly} is insufficient to produce a sharp high-pass profile, so we split the NS iteration (with $k=5$ steps by default) into two stages: an early-stage \textbf{Promotion} polynomial $f_{\mathrm{p}}$ (\textbf{Fig.\,\ref{fig:pion_iter}-(b)}) applied for $k_{\mathrm{p}}$ steps to reinforce dominant singular values, and a late-stage \textbf{Suppression} polynomial $f_{\mathrm{s}}$ (\textbf{Fig.\,\ref{fig:pion_iter}-(c)}) applied for $k_{\mathrm{s}} = k - k_{\mathrm{p}}$ steps to attenuate smaller components, each with its own coefficients $(a,b,c)$.

\begin{figure}[htbp]
    \centering
    \begin{tabular}{cccc}
    \hspace{-5mm} \includegraphics[width=0.253\textwidth]{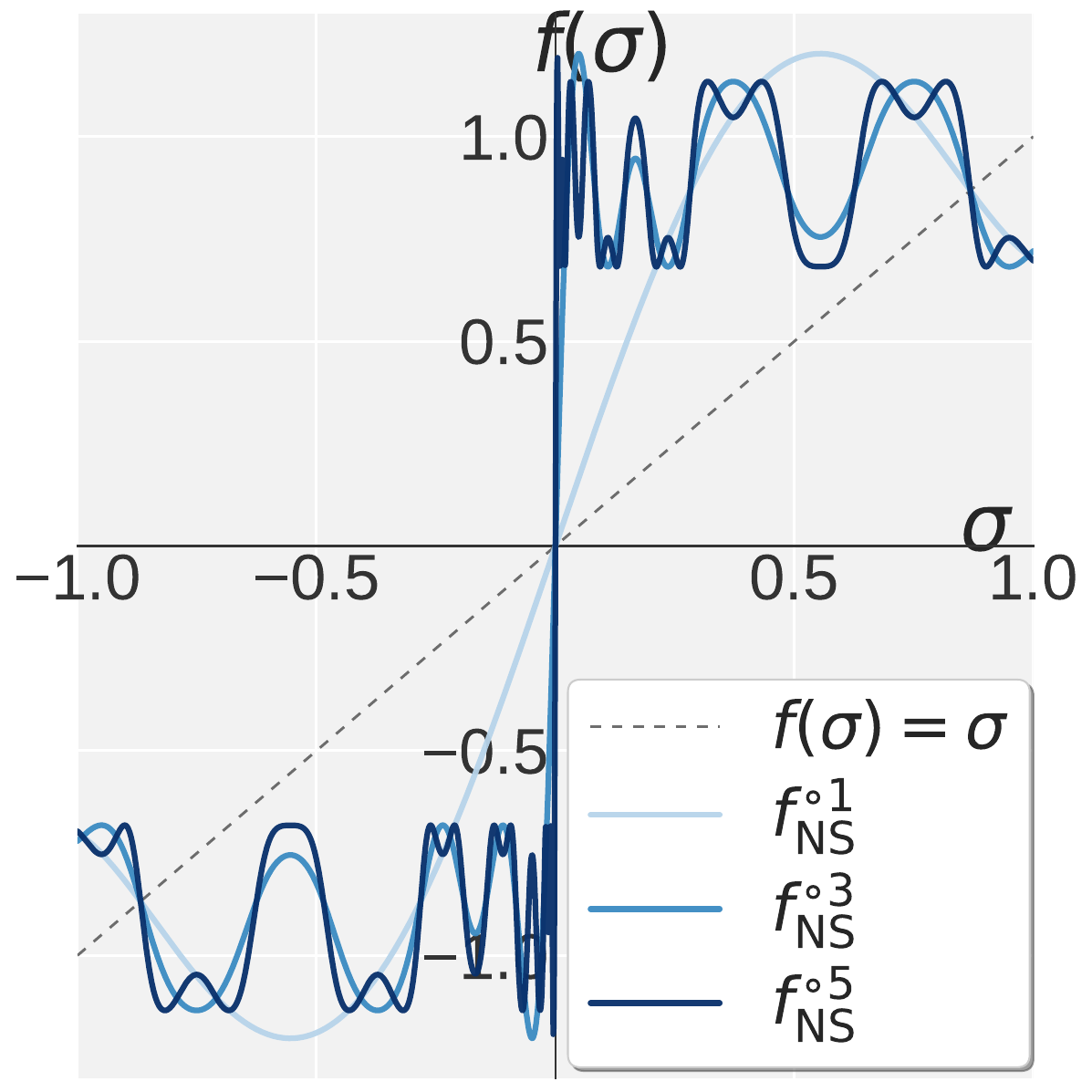} &
    \hspace{-5mm} \includegraphics[width=0.253\textwidth]{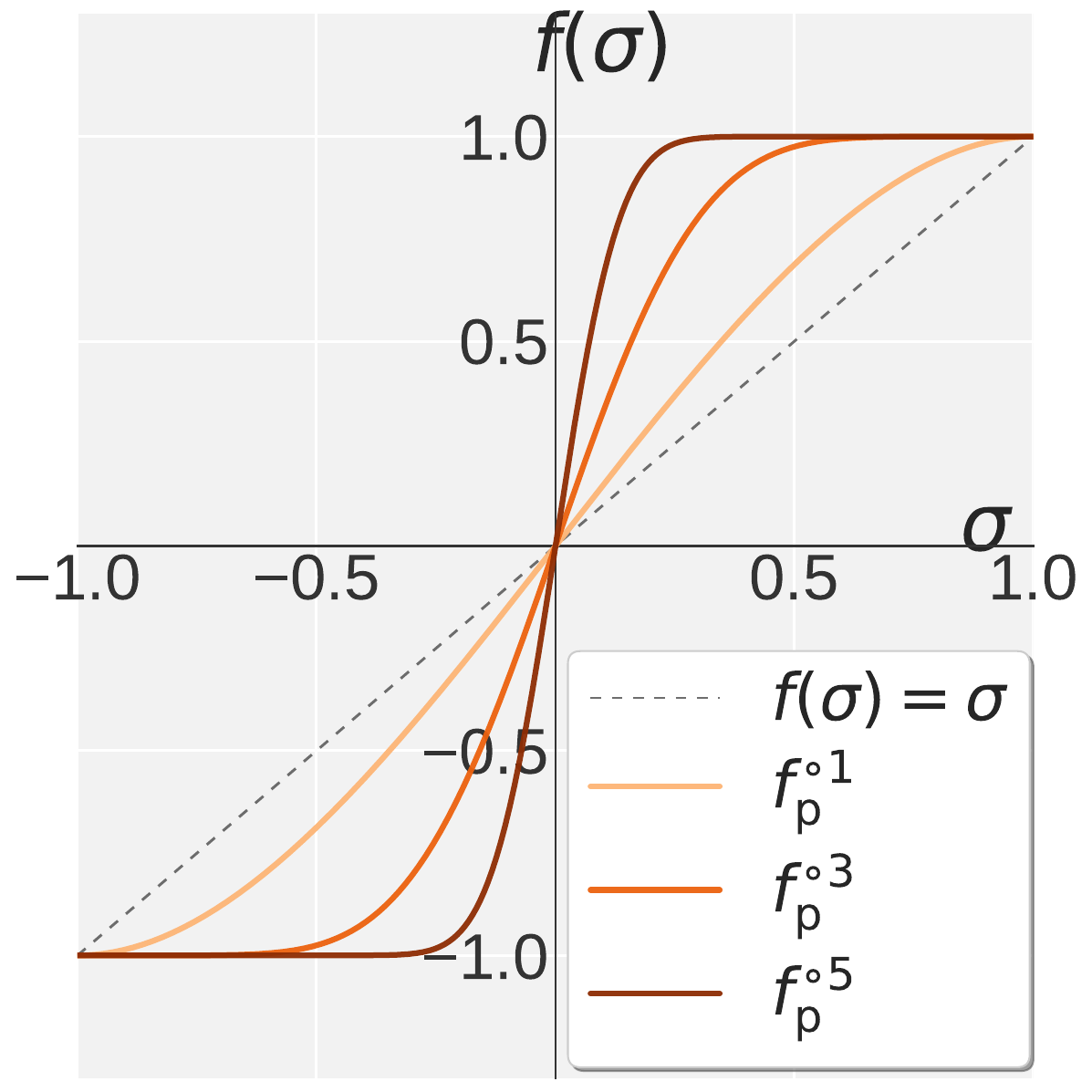} &
    \hspace{-5mm} \includegraphics[width=0.253\textwidth]{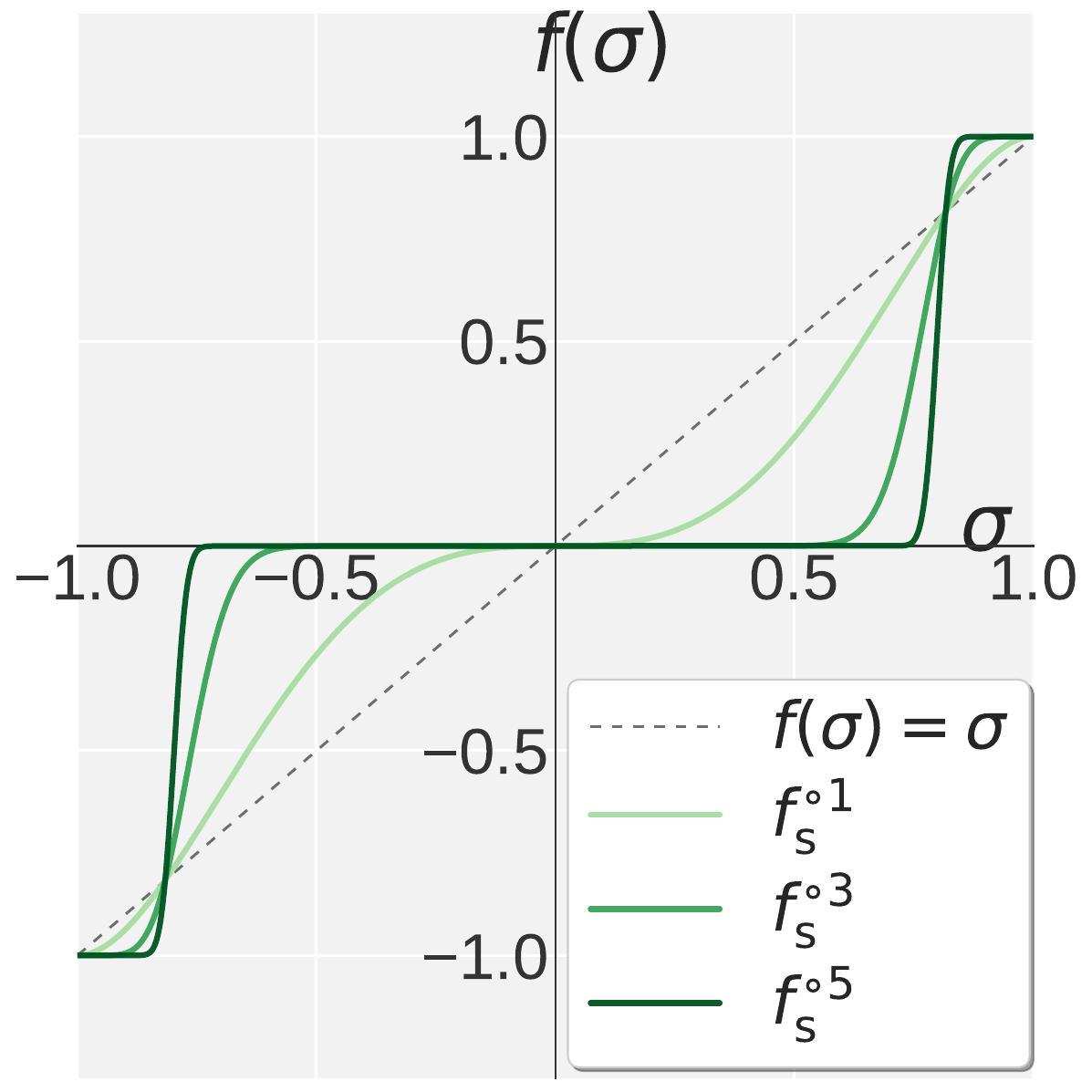} &
    \hspace{-5mm} \includegraphics[width=0.253\textwidth]{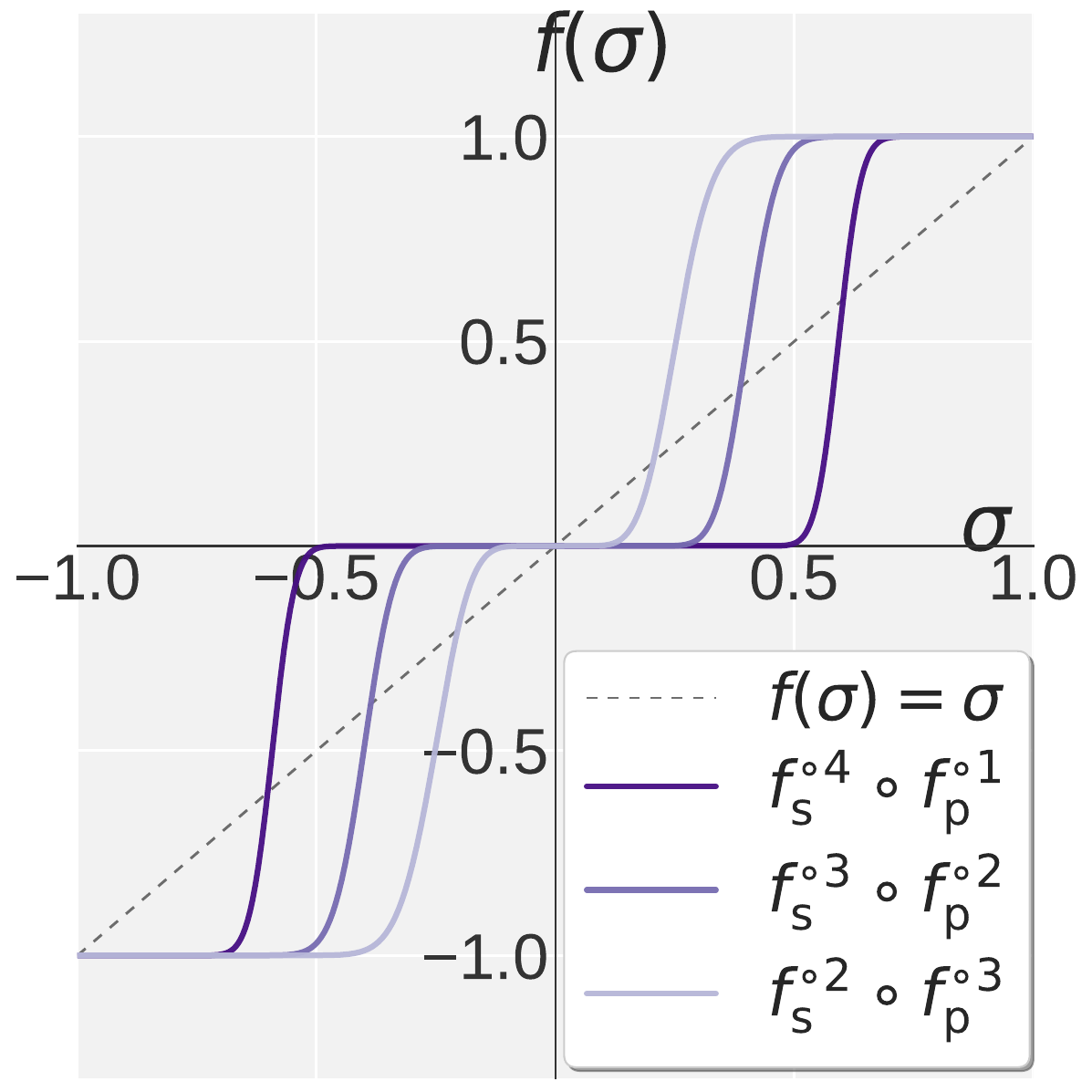} \\
    \small{(a) Muon NS} & \small{(b) Promotion $f_{\mathrm{p}}$} & \small{(c) Suppression $f_{\mathrm{s}}$} & \small{(d) High-pass NS}
    \end{tabular}
    \vspace{-1mm}
    \caption{\small{Visualization of $f(\sigma)$ in \eqref{eq: pion_poly} over $\sigma \in [0,1]$, with $f(\sigma)=\sigma$ shown as the identity reference. (a) $f^{t}_{\mathrm{NS}}$ denotes Muon's NS iteration applied $t$ times. (b) $f^{t}_{\mathrm{p}}$ denotes the Promotion polynomial $f_{\mathrm{p}}$ \eqref{eq: pion_promotion} applied $t$ times. (c) $f^{t}_{\mathrm{s}}$ denotes the Suppression polynomial $f_{\mathrm{s}}$ \eqref{eq: pion_suppression} applied $t$ times. (d) Pion's high-pass NS iteration (Alg.\,\ref{alg:pion_optimizer}): $f^{k_{\mathrm{s}}}_{\mathrm{s}}\circ f^{k_{\mathrm{p}}}_{\mathrm{p}}$ applies $k_{\mathrm{p}}$ Promotion steps followed by $k_{\mathrm{s}} = 5 - k_{\mathrm{p}}$ Suppression steps.}}
    \label{fig:pion_iter}
    \vspace{-1mm}
\end{figure}

The Promotion stage $f_{\mathrm{p}} \Def f(\,\cdot\,;\, a_{\mathrm{p}}, b_{\mathrm{p}}, c_{\mathrm{p}})$ monotonically amplifies all singular values $\sigma$, so as to (i) lift as many singular values as possible above the subsequent suppression threshold and (ii) preserve their relative ordering, ensuring that the later Suppression eventually removes only the smallest. The three coefficients in \eqref{eq: pion_poly} are pinned by two equality constraints and one inequality:
\textbf{(P1)} \textit{fixed point} $f_{\mathrm{p}}(1) = 1$ and \textbf{(P2)} \textit{first-order stationarity} $f_{\mathrm{p}}'(1) = 0$ (both shared with Suppression) anchor any direction already at $\sigma = 1$;
\textbf{(P3)} \textit{boundary concavity} $f_{\mathrm{p}}''(1) \leq 0$, together with (P2), ensures that $\sigma = 1$ is a maximum, \textit{i.e.}, prevents the Promotion from curving upward near $\sigma = 1$. See \textbf{Fig.\,\ref{fig:pion_iter}-(b)} for illustration.
As derived in \textbf{Appendix\,\ref{sec:pion_derivation}}, conditions (P1)--(P3) directly carve out the upper bound $a_{\mathrm{p}} \leq 1.875$, and additionally requiring $f_{\mathrm{p}}$ to be monotonically non-decreasing on $[0,1]$ (so that the relative ordering of singular values is preserved across each Promotion step) tightens the lower bound to $a_{\mathrm{p}} \geq 0$, yielding $a_{\mathrm{p}} \in [0,1.875]$. Since $f_{\mathrm{p}}'(0)=a_{\mathrm{p}}$ determines the slope at the origin, we set $a_{\mathrm{p}}=1.875$ to maximize promotion, thereby amplifying small singular values as strongly as possible. This choice uniquely determines the polynomial coefficients for the Promotion stage:
\begin{align}
    f_{\mathrm{p}}(\sigma) = a_{\mathrm{p}}\,\sigma + b_{\mathrm{p}}\,\sigma^3 + c_{\mathrm{p}} \,\sigma^5, ~~\text{with}~~\text{$(a_{\mathrm{p}}, b_{\mathrm{p}}, c_{\mathrm{p}}) = (1.875, - 1.25,0.375)$}.
    \label{eq: pion_promotion}
\end{align}
With these coefficients, the derivative becomes a perfect square,
$f_{\mathrm{p}}'(\sigma)=1.875\,(1-\sigma^2)^2 \geq 0$, ensuring monotonicity on $[0,1]$, as shown in Fig.\,\ref{fig:pion_iter}-(b).


The Suppression stage $f_{\mathrm{s}} \Def f(\,\cdot\,;\, a_{\mathrm{s}}, b_{\mathrm{s}}, c_{\mathrm{s}})$ pins large singular values at $1$ while contracting smaller ones toward $0$ (\textbf{Fig.\,\ref{fig:pion_iter}-(c)}).  
It inherits $f_{\mathrm{s}}(1)=1$ and $f_{\mathrm{s}}'(1)=0$, and adds the \textit{spectral filtering} condition $f_{\mathrm{s}}'(0)=0$, which removes the linear term near the origin so that small singular values are driven to $0$ by higher-order terms.
%
These constraints give the \textit{Suppression polynomial}:
\begin{align}
    f_{\mathrm{s}}(\sigma) = a_{\mathrm{s}}\,\sigma + b_{\mathrm{s}}\,\sigma^3 + c_{\mathrm{s}} \,\sigma^5, ~~\text{with}~~\text{$(a_{\mathrm{s}}, b_{\mathrm{s}}, c_{\mathrm{s}}) = (0,2.5,- 1.5)$}.
    \label{eq: pion_suppression}
\end{align}

\noindent \textbf{The Pion optimizer and its two application modes.}
Chaining $k_{\mathrm{p}}$ Promotion steps with $k_{\mathrm{s}}$ ($ = k - k_{\mathrm{p}}$) Suppression steps yields a \textbf{high-pass NS} iteration; the resulting Muon variant is termed \textbf{Pion} (see the full algorithm in \textbf{Appendix\,\ref{sec:pion_algorithms}}).
Fixing $k = 5$ preserves Muon's per-step cost, and $k_{\mathrm{p}} \in \{0, 1, \ldots, 5\}$ becomes the single hyperparameter that controls the high-pass cutoff: \textbf{Fig.\,\ref{fig:pion_iter}-(d)} shows that Pion exhibits a sharp transition between the pinned region ($\sigma \mapsto 1$) and the filtered region ($\sigma \mapsto 0$). Empirically, Suppression-dominant allocations with $k_{\mathrm{s}} \geq 3$ consistently perform best for VLA and RLVR training, as they more aggressively suppress noisy tail while preserving the informative head.

The high-pass NS admits \textit{two modes}: (i) the \textbf{default mode} applies the iteration to each weight matrix as a single block, mirroring Muon; (ii) the \textbf{per-head mode} first reshapes each attention projection along its head dimension into multiple per-head sub-matrices and runs the iteration independently on each. We use the default mode for VLA training (\textbf{Sec.\,\ref{sec: exp_vla}}) and the per-head mode for RLVR post-training (\textbf{Sec.\,\ref{sec: exp_rl}}), as explained next.

\begin{wrapfigure}{r}{0.44\textwidth}
    \centering
    \vspace*{-6mm}
    \begin{tabular}{cc}
    \hspace*{-4mm}
    \includegraphics[width=0.218\textwidth]{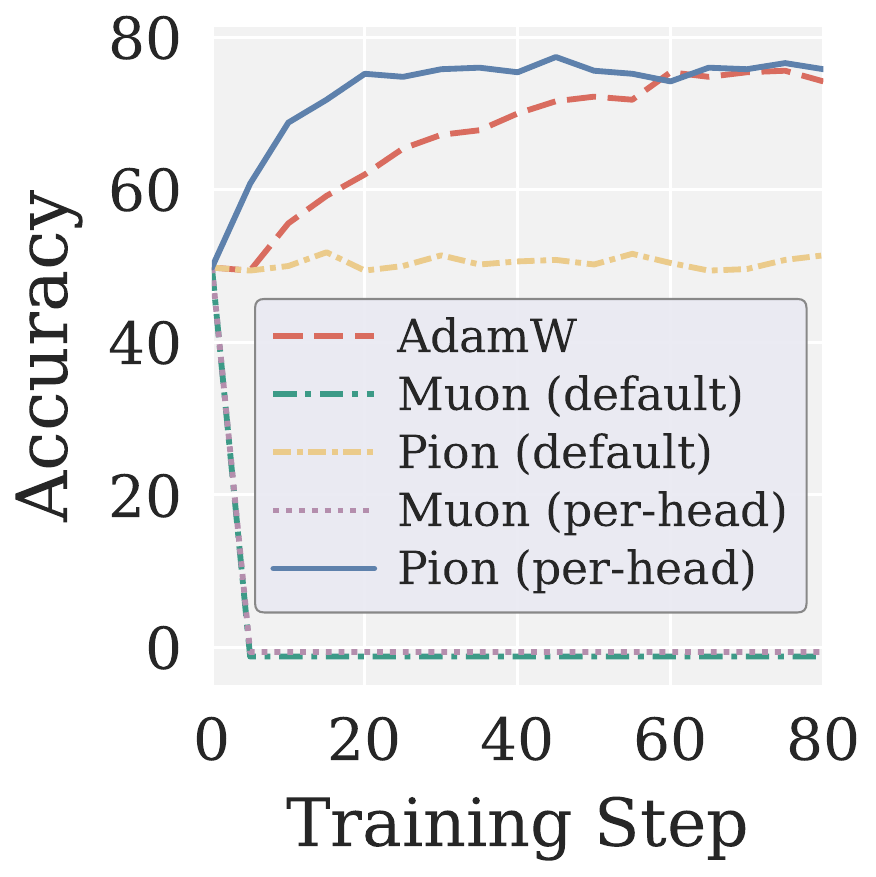}
    &
    \hspace*{-5mm}
    \includegraphics[width=0.232\textwidth]{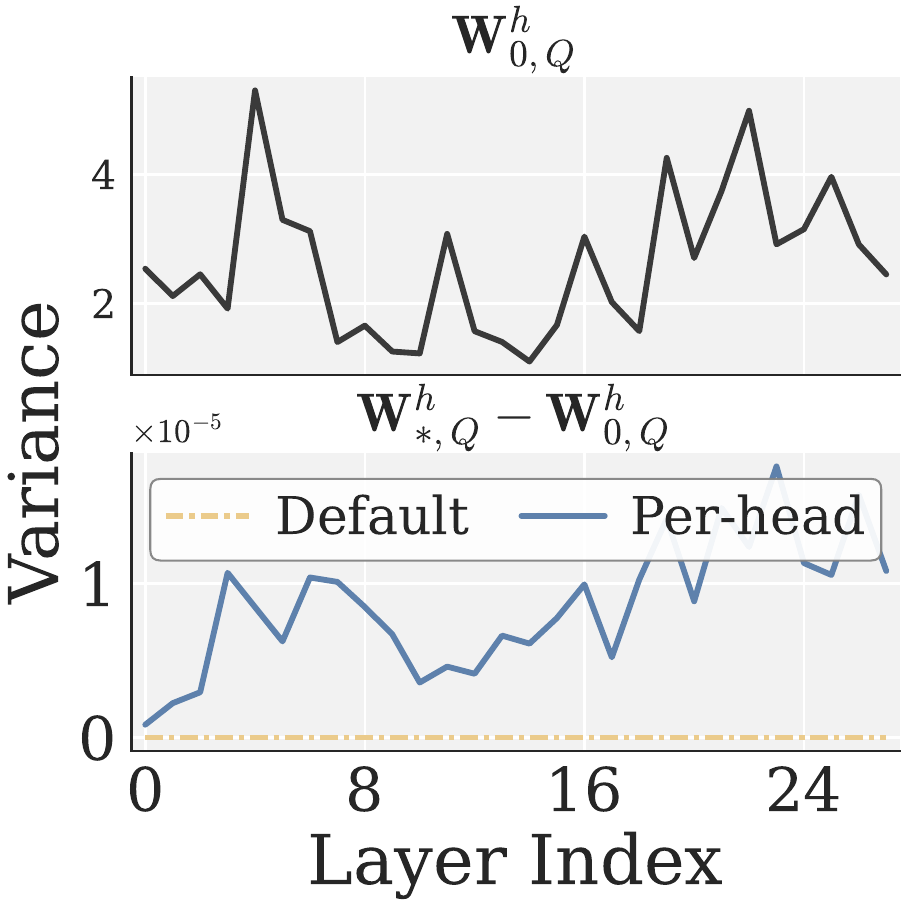}
    \\
    \small{(a)}
    &
    \hspace*{2mm}\small{(b)}
    \end{tabular}
    \vspace{-2mm}
    \caption{\small{Effect of per-head high-pass NS on RLVR (Qwen3-1.7B, GRPO on MATH levels 3--5). (a) MATH500 accuracy of AdamW, Muon (default vs. per-head), and Pion (default vs. per-head). (b) Cross-head Q-projection variance: before-RLVR weight $\mathrm{Var}(\|\mathbf{W}_{0,Q}^h\|_{\mathrm{F}})$ (top) and after-RLVR update $\mathrm{Var}(\|\mathbf{W}_{*,Q}^h-\mathbf{W}_{0,Q}^h\|_{\mathrm{F}})$ for default vs. per-head Pion (bottom).}}
    \label{fig:mh_analysis}
    \vspace{-3mm}
\end{wrapfigure}
\noindent \textbf{Why per-head high-pass NS is needed for RLVR.}
RLVR starts from an already-pretrained model whose attention layers exhibit heterogeneous per-head weight norms.
Such heterogeneity is functionally meaningful: per-head norms govern attention sharpness and gradient magnitudes (\textbf{Appendix\,\ref{sec:head_norm_analysis}}), so different heads naturally require updates at different scales.
However, both default-mode Pion and Muon apply NS iterations to each projection \textit{as a whole}, ignoring this per-head heterogeneity.
As a result, training becomes less effective, as shown in \textbf{Fig.\,\ref{fig:mh_analysis}-(a)}, where default-mode 
Pion underperforms AdamW and (default-mode) Muon collapses. We also observe that enabling the per-head mode for Muon does not improve performance, since the lack of noise adaptiveness (Limitation 2) remains the primary cause of its ineffectiveness in RLVR. The superior performance of per-head Pion suggests that spectral high-pass filtering is the primary driver of RLVR stability, while the per-head reshape serves as an auxiliary mechanism that preserves pretrained head structure.
To further justify per-head awareness in Pion, we analyze the Q projection sub-blocks $\{\mathbf{W}_{Q}^h\}_{h=1}^{H}$ across $H$ attention heads (\textbf{Fig.\,\ref{fig:mh_analysis}-(b)}). Let $\mathbf{W}_0$ and $\mathbf{W}_*$ denote the weights before and after RLVR, respectively. We measure per-head heterogeneity via the cross-head variance $\mathrm{Var}(\left\|\mathbf{W}_{0,Q}^h\right\|_{\mathrm{F}})$. Prior to RLVR, this variance is non-trivial across all 28 layers of Qwen3-1.7B (top). However, the update variance $\mathrm{Var}(\left\|\mathbf{W}_{*,Q}^h - \mathbf{W}_{0,Q}^h\right\|_{\mathrm{F}})$ under default-mode Pion is nearly flat (bottom), indicating uniform updates across heads that fail to reflect heterogeneity. In contrast, the \textit{per-head} mode reshapes projections along the head dimension, enabling heterogeneous, layer-dependent updates.

\section{Experiments}
\label{sec: exp}


\subsection{Experiment setups}
\label{sec: exp_setup}

\noindent \textbf{VLA setups.}
Two models are assessed: $\ell_1$-regression-based \textbf{VLA-Adapter} \citep{wang2026vla} and flow-matching-based \textbf{VLANeXt} \citep{wu2026vlanext}. Both are trained and tested on the four \textbf{LIBERO} suites \citep{liu2023libero}, with VLANeXt additionally evaluated on \textbf{LIBERO-Plus} \citep{fei2025libero}. {We further include a real-robot evaluation by finetuning \textbf{$\pi_{0.5}$} \citep{intelligence2025pi_} under the \textbf{DROID} setup \citep{khazatsky2025droid} on three grasp-and-place tasks.} We compare three optimizers: (i) \textbf{AdamW} globally; (ii) \textbf{Muon} on all 2D matrices (excluding embeddings/output layer), with AdamW elsewhere; and (iii) \textbf{Pion}, applying Pion to the action 2D matrices, Muon to vision/language 2D matrices (excluding embeddings/output layer), and AdamW elsewhere. Performance is measured by \textit{success rate} (\%). 

\noindent \textbf{RLVR setups.}
Experiments utilize \textbf{Qwen3-1.7B} and \textbf{Qwen3-4B} \citep{yang2025qwen3} optimized via \textbf{GRPO} \citep{shao2024deepseekmath} and \textbf{GMPO} \citep{zhao2025geometric}. Models are trained on \textbf{GSM8K} (training split) and \textbf{MATH} levels 3--5, and evaluated on the GSM8K test split \citep{cobbe2021training} and MATH500 \citep{hendrycks2021measuring}, respectively. Optimizer configurations mirror the VLA setups: (i) \textbf{AdamW}, (ii) \textbf{Muon}, and (iii) \textbf{Pion}, which adopts the per-head mode introduced in \textbf{Sec.\,\ref{sec: method}}. The evaluation metric is \textit{accuracy} (\%). 
See \textbf{Appendix\,\ref{sec:training_details}} for details.

\subsection{VLA experiment results}
\label{sec: exp_vla}

\begin{figure}[htb]
    \centering
    \vspace{-1mm}
    \begin{tabular}{cc}
        \multicolumn{2}{c}{\includegraphics[width=0.5\textwidth]{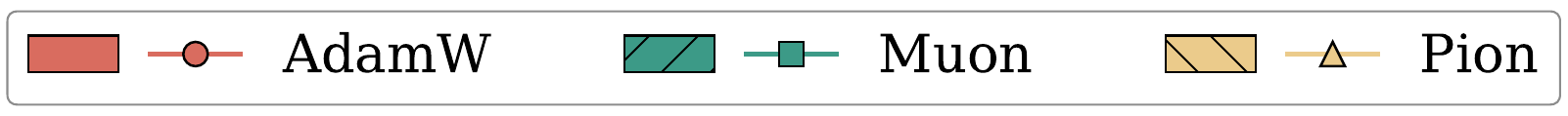}} \\[-1mm]
        \hspace*{-4mm}
        \includegraphics[width=0.455\textwidth]{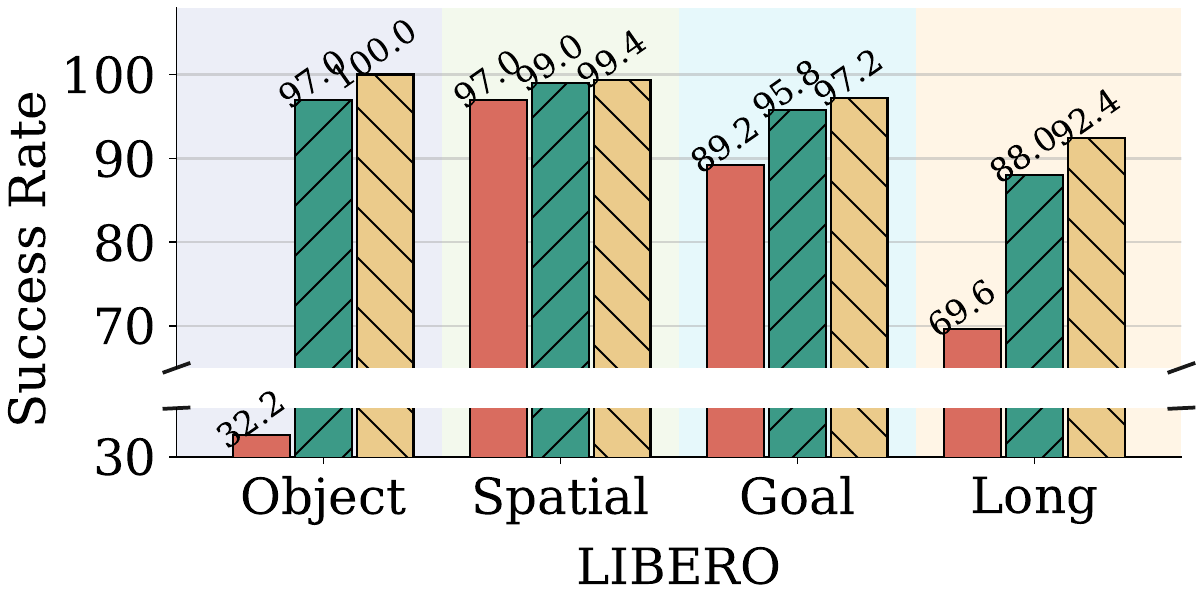}
        &
        \hspace*{-4mm}
        \includegraphics[width=0.223\textwidth]{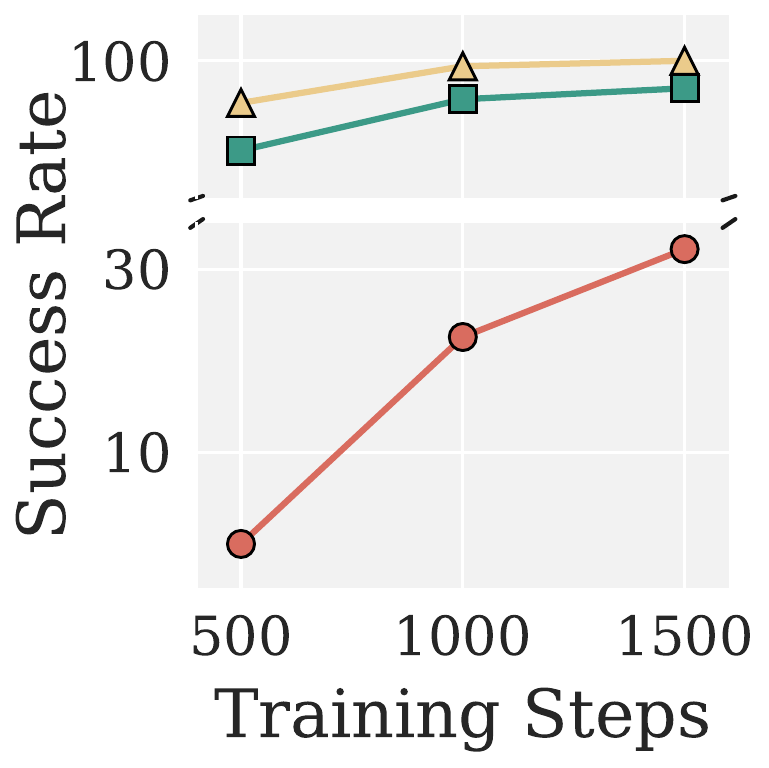}
        \\[-1.5mm]
        \small{(a) Overall Performance}
        &
        \hspace*{-2mm}\small{(b) Object}
    \end{tabular}
    \vspace{-2mm}
    \caption{\small{
    AdamW, Muon and Pion for VLA-Adapter on LIBERO. (a) Test success rates on LIBERO Object, Spatial, Goal and Long at the same training budget ($1{,}500$ steps for Object and $15{,}000$ steps for others). (b) Test success rates vs. training steps on Object.}}
    \label{fig:vlaadapter_main}
    \vspace{-4mm}
\end{figure}

\noindent \textbf{Advantages of Pion over Muon and AdamW for VLA-Adapter on LIBERO.} \textbf{Fig.\,\ref{fig:vlaadapter_main}} presents final success rates of VLA-Adapter on the four LIBERO task suites using AdamW, Muon, and Pion under a fixed budget per suite ($1{,}500$ steps for Object and $15{,}000$ steps for the others), along with learning curves for LIBERO 
Object. As shown in \textbf{Fig.\,\ref{fig:vlaadapter_main}-(a)}, Muon already outperforms AdamW on all four tasks, indicating that spectral steepest descent benefits multimodal training. Pion further improves over Muon on every task. This aligns with the spectral analysis in {Fig.\,\ref{fig:vla_optimizer_compare}}: the action module exhibits near-low-rank gradients, so Pion’s high-pass filter preserves informative singular directions while suppressing tail noise that Muon would otherwise amplify. Furthermore, \textbf{Fig.\,\ref{fig:vlaadapter_main}-(b)} shows that Pion reaches $95.4\%$ success at $500$ steps and saturates at $100\%$ by $1{,}500$ steps, while AdamW requires substantially more steps and Muon consistently lags behind, indicating that Pion’s spectral high-pass yields faster convergence on the action module. This also indicates that Pion improves training efficiency by requiring substantially fewer training steps to reach a high success-rate regime compared to AdamW and Muon.

\begin{table}[htbp]
    \vspace{-4mm}
    \centering
    \caption{\small{AdamW, Muon and Pion for VLANeXt on LIBERO and LIBERO-Plus. Columns 2--3: average test success rate on LIBERO/LIBERO-Plus; Columns 4--10: test success rate on LIBERO-Plus under different perturbations. Best score in each column is in \textbf{bold}.}}
    \vspace{1mm}
    \label{tab:libero_plus}
    \resizebox{\textwidth}{!}{%
    \begin{tabular}{ccc|ccccccc}
    \toprule
    \textbf{Optimizer} & \textbf{LIBERO} & \textbf{LIBERO-Plus} & \textbf{Background} & \textbf{Camera} & \textbf{Language} & \textbf{Layout} & \textbf{Light} & \textbf{Noise} & \textbf{Robot} \\
    \midrule
    AdamW& 79.45 & 64.57 & 68.97 & 70.38 & 54.50 & 61.80 & 76.35 & 66.37 & 47.04 \\
    Muon & 93.65 & 72.34 & 82.72 & 68.00 & 77.53 & 76.21 & 86.17 & 69.98 & 57.36 \\
    \midrule
    \rowcolor{blue!10}
    \textbf{Pion (Ours)} & \textbf{96.35} & \textbf{75.93} & \textbf{84.53} & \textbf{70.88} & \textbf{86.93} & \textbf{76.71} & \textbf{90.67} & \textbf{76.09} & \textbf{63.18} \\
    \bottomrule
    \end{tabular}}
\end{table}

\begin{wraptable}{r}{0.61\textwidth}
    \vspace{-4mm}
    \small
    \centering
    \caption{Qualitative LIBERO-Plus rollout (Object) of VLANeXt trained with AdamW, Muon, and Pion under the instruction \textit{``Grasp the container filled with a citrus-based beverage and deposit it into the woven holder designed.''}}
    \vspace{-1mm}
    \label{tab:vla_qualitative_main}
    \setlength{\tabcolsep}{3pt}
    \renewcommand{\arraystretch}{1.15}
    \begin{tabular}{@{}>{\centering\arraybackslash}m{0.115\textwidth}@{\hspace{4pt}}*{4}{>{\centering\arraybackslash}m{0.11\textwidth}}@{}}
    \toprule
    \multirow{2}{*}{\textbf{Optimizer}} & \multicolumn{4}{c}{\textbf{Frame index}} \\
    \cmidrule(lr){2-5}
    & \textbf{0} & \textbf{3} & \textbf{6} & \textbf{9} \\
    \midrule
    AdamW &
        \includegraphics[width=0.11\textwidth]{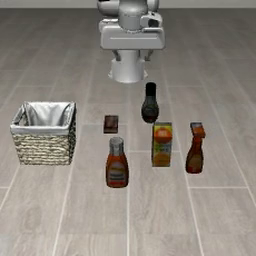} &
        \includegraphics[width=0.11\textwidth]{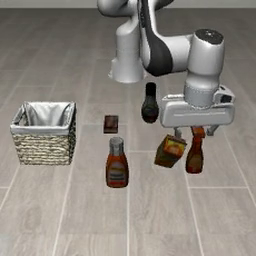} &
        \includegraphics[width=0.11\textwidth]{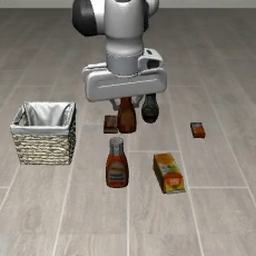} &
        \includegraphics[width=0.11\textwidth]{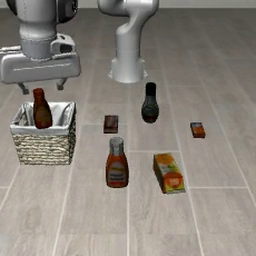} \\
    \midrule
    Muon &
        \includegraphics[width=0.11\textwidth]{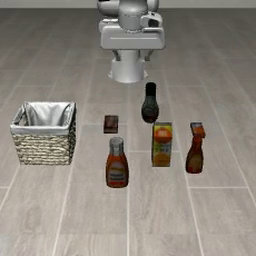} &
        \includegraphics[width=0.11\textwidth]{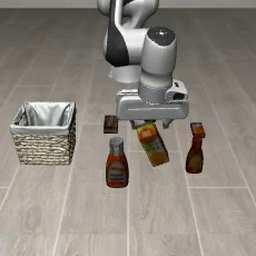} &
        \includegraphics[width=0.11\textwidth]{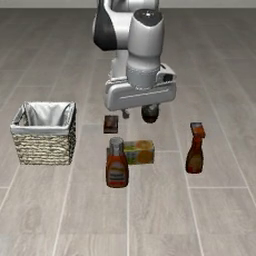} &
        \includegraphics[width=0.11\textwidth]{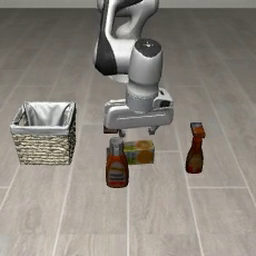} \\
    \midrule
    \textbf{Pion (Ours)} &
        \includegraphics[width=0.11\textwidth]{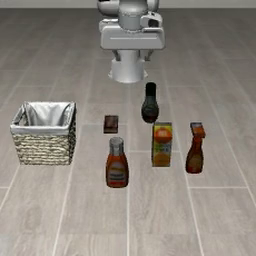} &
        \includegraphics[width=0.11\textwidth]{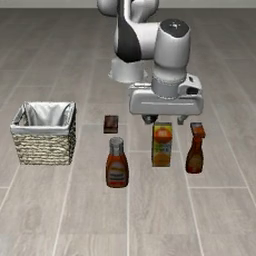} &
        \includegraphics[width=0.11\textwidth]{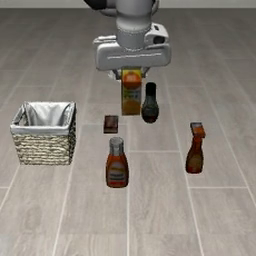} &
        \includegraphics[width=0.11\textwidth]{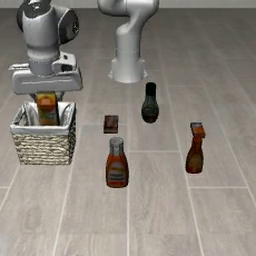} \\
    \bottomrule
    \end{tabular}
    \vspace{-2mm}
\end{wraptable}

\noindent \textbf{The Pion advantage extends to flow-matching VLAs and perturbed scenes.}
To validate that Pion’s benefit is not architecture-specific, we evaluate \textbf{VLANeXt} \citep{wu2026vlanext}, a flow-matching VLA. \textbf{Table\,\ref{tab:libero_plus}} reports success rates on LIBERO and LIBERO-Plus. The first two columns show task-averaged success rates, while the remaining columns break down performance under individual LIBERO-Plus perturbations. As shown, Pion consistently outperforms Muon and AdamW across all settings, confirming that the high-pass mechanism is \textit{model-agnostic} across both regression-based and flow-matching VLAs. Moreover, its advantage is \textit{preserved and amplified} on the more challenging LIBERO-Plus split, notably under Language (+9\%), Noise (+6\%), and Robot (+6\%) perturbations. This suggests that Pion yields more \textit{robust} policies under distribution shifts, tackling the limitation that Muon-style whitening could over-amplify non-generalizable noise directions. \textbf{Table~\ref{tab:vla_qualitative_main}} compares AdamW, Muon, and Pion on a LIBERO-Plus (Object) task (``Grasp the container filled with a citrus-based beverage and deposit it into the woven holder designed.''). \textit{AdamW} mis-grounds the instruction and grasps the wrong bottle. \textit{Muon} grasps the correct target but collides with a neighboring object during transport, corroborating that its uniform whitening over-amplifies noise and yields jittery trajectories (Sec.~\ref{sec: motivation}). \textit{Pion} alone succeeds, executing a clean, collision-free rollout. \textbf{Appendix\,\ref{sec: qualitative_examples}} provides additional examples on the four task suites.

\begin{table}[htbp]
    \centering
    \caption{\small{AdamW, Muon, and Pion on real-robot grasp-and-place tasks using $\pi_{0.5}$ \citep{intelligence2025pi_} backbone under the DROID setup \citep{khazatsky2025droid}. Each entry reports the success rate (\%) over $30$ randomized initial configurations. The best score in each column is in \textbf{bold}.}}
    \vspace{1mm}
    \label{tab:real_robot}
    \small
    \setlength{\tabcolsep}{4pt}
    \begin{tabular}{ccccc}
    \toprule
    \textbf{Optimizer} & \textbf{Cucumber\,$\to$\,Plate} & \textbf{Cube\,$\to$\,Plate} & \textbf{Cube\,$\to$\,Bowl} & \textbf{Average} \\
    \midrule
    AdamW & 40.0 & 33.3 & 20.0 & 31.1 \\
    Muon  & 56.7 & 33.3 & 26.7 & 38.9 \\
    \midrule
    \rowcolor{blue!10}
    \textbf{Pion (Ours)} & \textbf{93.3} & \textbf{83.3} & \textbf{80.0} & \textbf{85.6} \\
    \bottomrule
    \end{tabular}
   \vspace{-3mm}
\end{table}

\noindent \textbf{Real-robot evaluation.}
We further validate Pion on a physical robot by finetuning $\pi_{0.5}$ \citep{intelligence2025pi_} under the DROID setup \citep{khazatsky2025droid} on three grasp-and-place tasks (\textit{Cucumber\,$\to$\,Plate}, \textit{Cube\,$\to$\,Plate}, \textit{Cube\,$\to$\,Bowl}). All three optimizers are trained for the same $20{,}000$ steps under the same training dataset. \textbf{Table\,\ref{tab:real_robot}} reports the trial-level success rate over $30$ randomized trials per task. Pion sharply outperforms both baselines on every task, lifting the average success rate from 31.1\% (AdamW) and 38.9\% (Muon) to 85.6\%. 
Crucially, this substantial performance gain over AdamW and Muon is achieved under a low-budget VLA training regime consisting of only 20,000 training steps, which is much fewer than those typically used in standard AdamW-based VLA training.
This step-efficiency advantage mirrors the margin observed in simulation (Fig.\,\ref{fig:vlaadapter_main}-(b)), confirming that Pion's training-efficiency gain carries over from simulation to real hardware. We attribute this to Pion's high-pass spectral filtering on the action module, whose benefit is further amplified under the tighter precision tolerances of physical manipulation. Qualitative rollouts are provided in \textbf{Appendix\,\ref{sec:real_robot}} (\textbf{Table\,\ref{tab:real_robot_rollouts}}).

\noindent \textbf{Additional results.}
Three studies on VLA-Adapter (\textbf{Appendix\,\ref{sec: vla_ablations}}) show that (i) Pion outperforms LRMuon across all top-$k$ ranks while matching Muon's total training time (\textbf{Fig.\,\ref{fig:vlaadapter_rank_time}}); (ii) per-head Pion on the action head also beats Muon and AdamW but underperforms the default mode (\textbf{Table\,\ref{tab:vlaadapter_libero}}); and (iii) a modality-wise optimizer sweep prefers Muon on vision/language and Pion on action, validating our assignment (\textbf{Table\,\ref{tab:vla_modality_assignment}}).

\subsection{RLVR experiment results}
\label{sec: exp_rl}

\begin{figure*}[htbp]
    \centering
    \vspace*{-2mm}
    \begin{tabular}{cccc}
        \hspace*{-3mm}
        \includegraphics[width=0.21\textwidth]{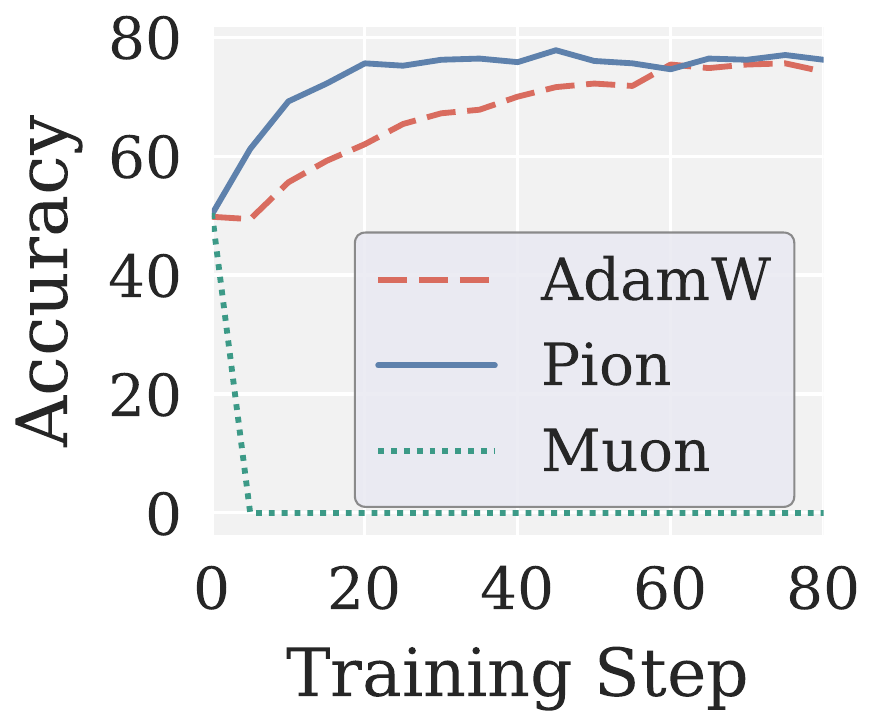}
        & \hspace*{-3mm}
        \includegraphics[width=0.21\textwidth]{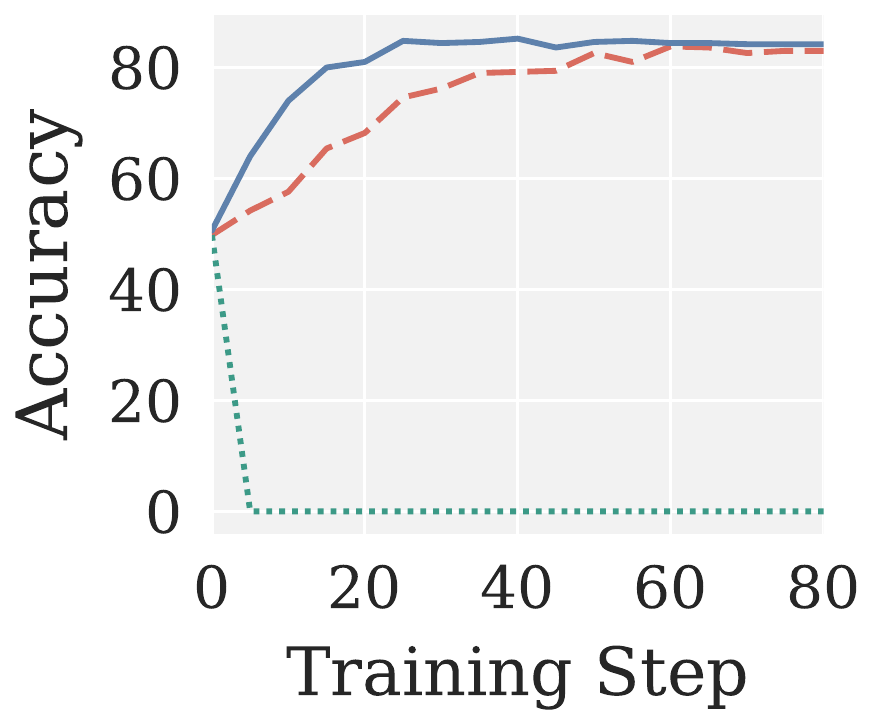}
        & \hspace*{-3mm}
        \includegraphics[width=0.21\textwidth]{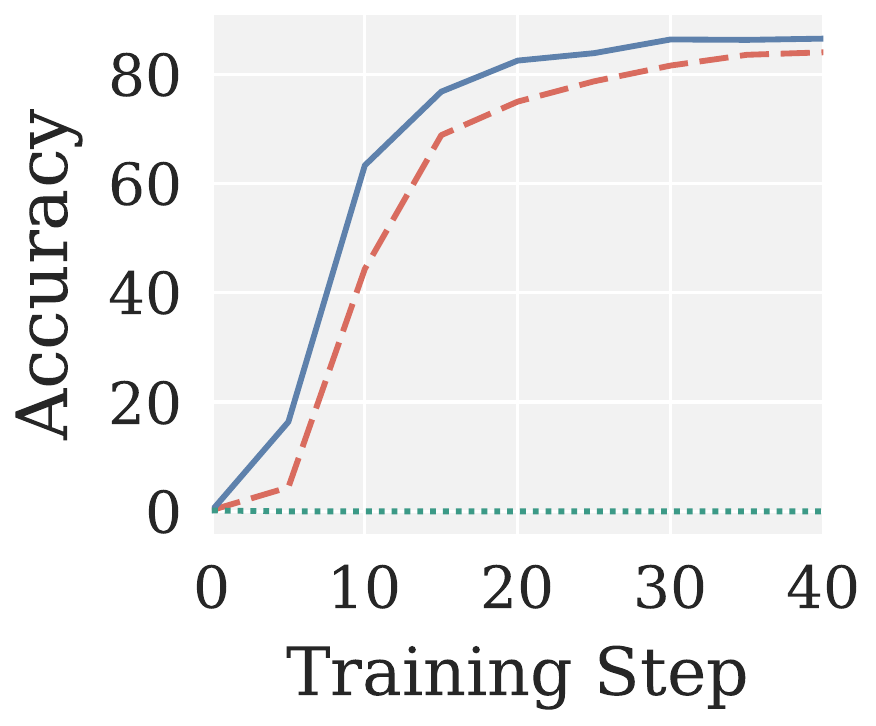}
        & \hspace*{-3mm}
        \includegraphics[width=0.21\textwidth]{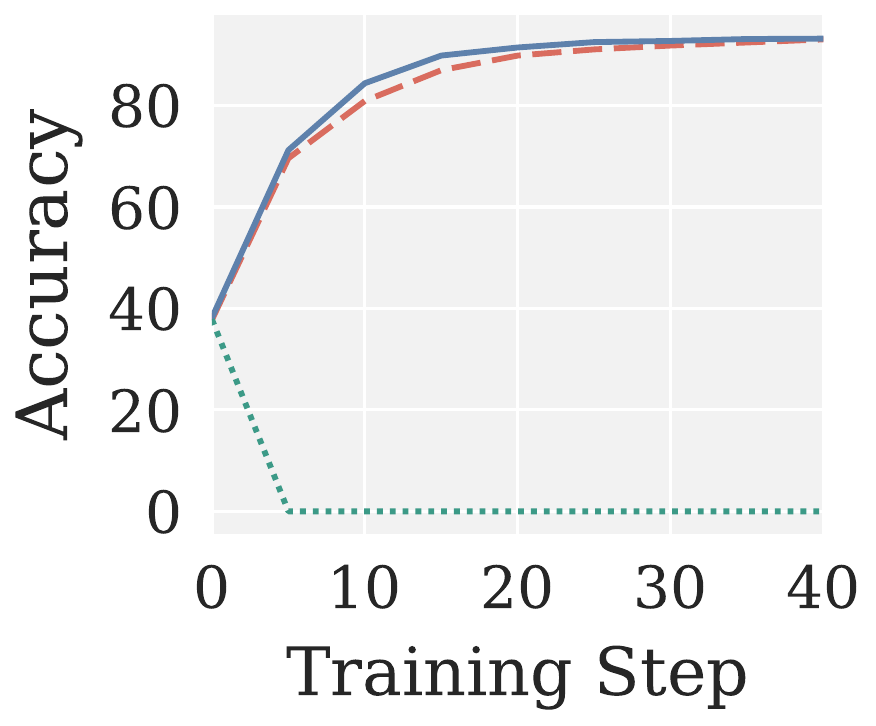}
        \\
        \hspace*{-3mm} \footnotesize{(a) GRPO, 1.7B, MATH}
        & \hspace*{-3mm} \footnotesize{(b) GRPO, 4B, MATH}
        & \hspace*{-3mm} \footnotesize{(c) GRPO, 1.7B, GSM8K}
        & \hspace*{-3mm} \footnotesize{(d) GRPO, 4B, GSM8K}
        \\[2mm]
        \hspace*{-3mm}
        \includegraphics[width=0.21\textwidth]{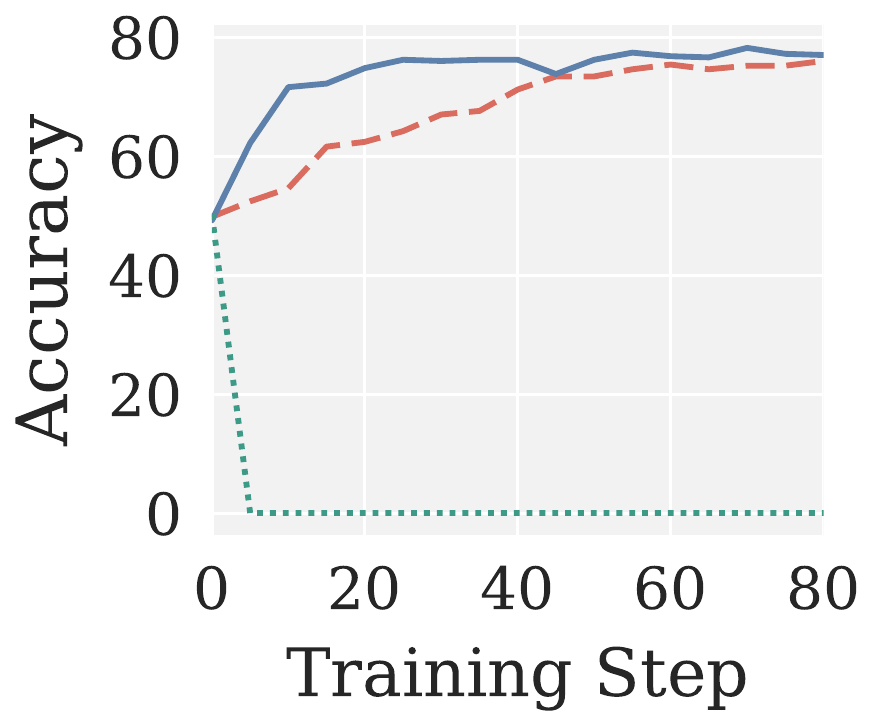}
        & \hspace*{-3mm}
        \includegraphics[width=0.21\textwidth]{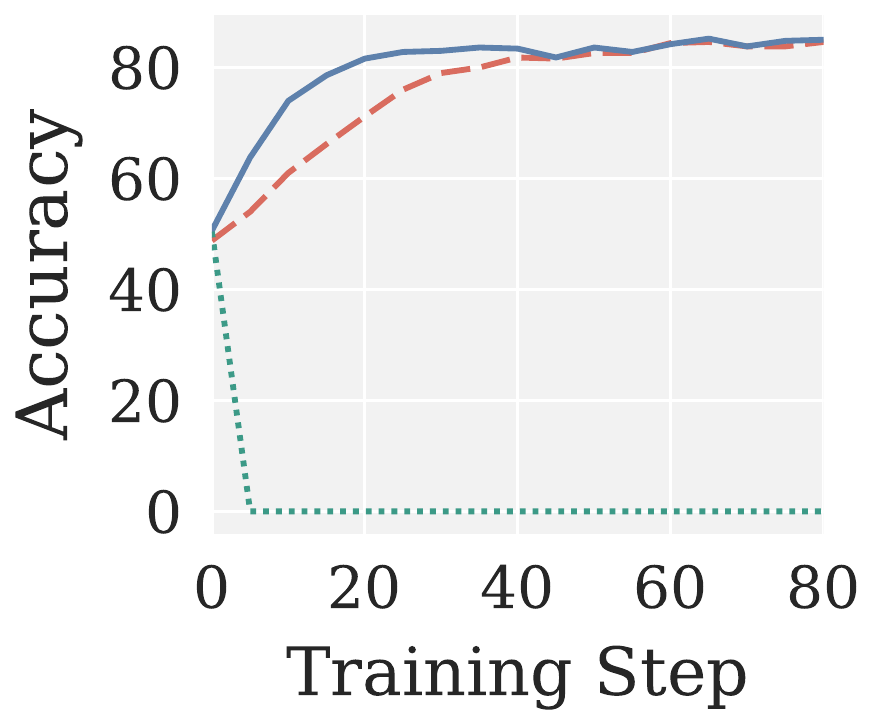}
        & \hspace*{-3mm}
        \includegraphics[width=0.21\textwidth]{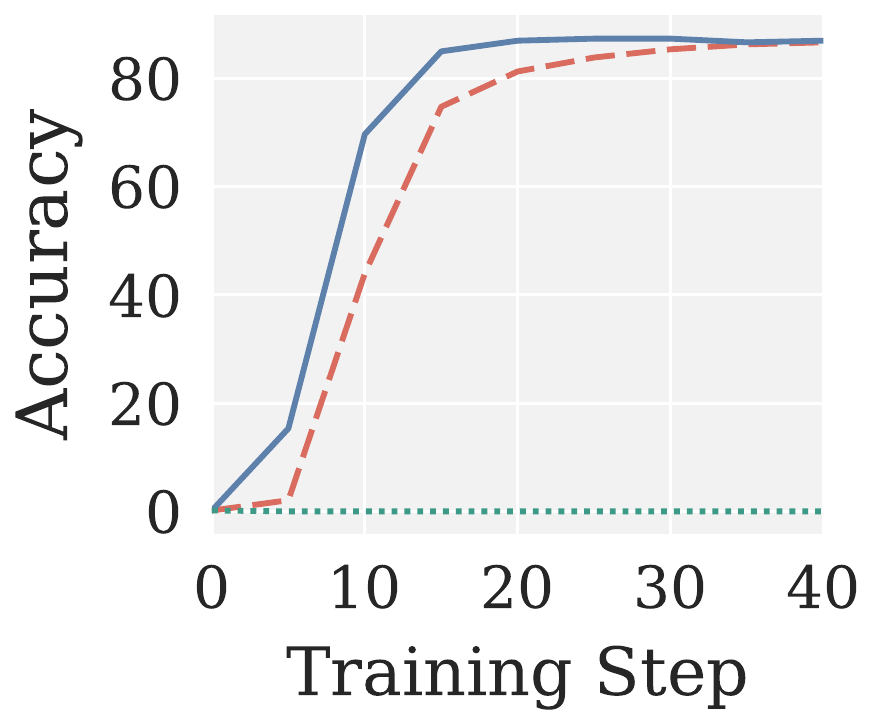}
        & \hspace*{-3mm}
        \includegraphics[width=0.21\textwidth]{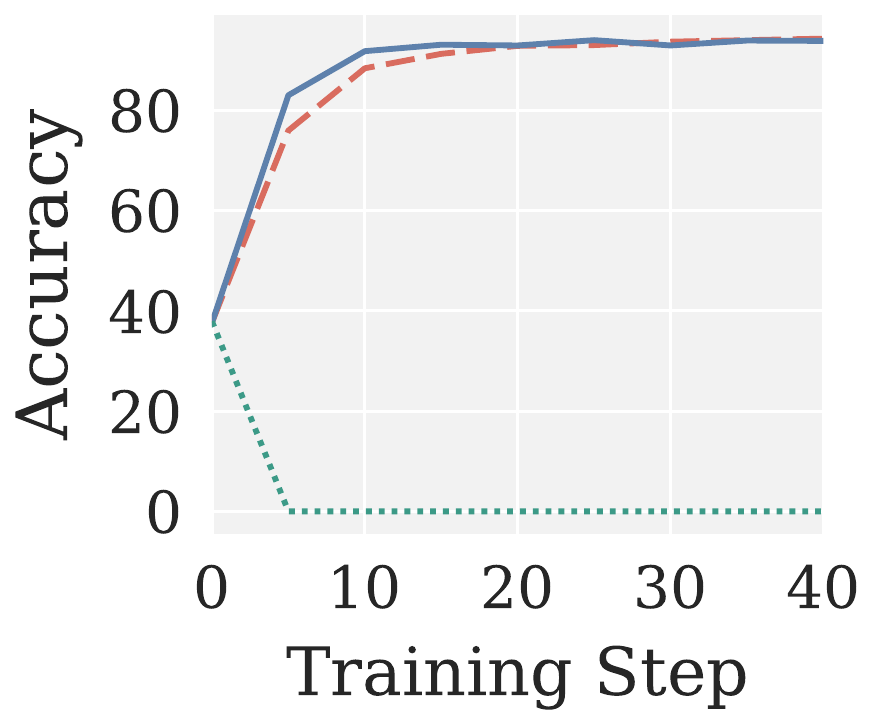}
        \\
        \hspace*{-3mm} \footnotesize{(e) GMPO, 1.7B, MATH}
        & \hspace*{-3mm} \footnotesize{(f) GMPO, 4B, MATH}
        & \hspace*{-3mm} \footnotesize{(g) GMPO, 1.7B, GSM8K}
        & \hspace*{-3mm} \footnotesize{(h) GMPO, 4B, GSM8K}
    \end{tabular}
    \vspace{-2mm}
    \caption{\small{AdamW, Muon and Pion on RLVR: validation accuracy vs. training step across eight settings, spanning two algorithms (GRPO, GMPO), two model sizes (Qwen3-1.7B, Qwen3-4B), and two benchmarks (MATH: train on levels 3--5 / evaluate on MATH500; GSM8K: train/test splits).}}
    \vspace{-3mm}
    \label{fig:rl_results}
\end{figure*}

\begin{wrapfigure}{r}{0.22\textwidth}
    \vspace*{-4mm}
    \centering
    \includegraphics[width=0.22\textwidth]{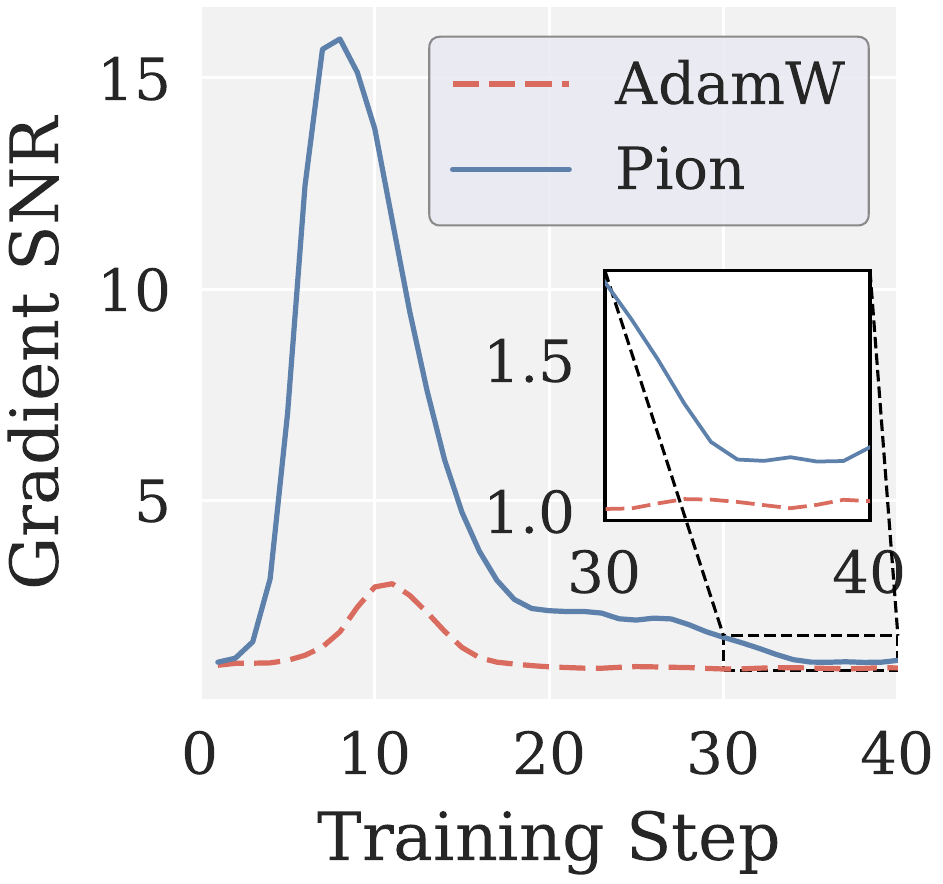}
    \vspace*{-5mm}
    \caption{\small{Gradient SNR of Pion vs. AdamW (Qwen3-1.7B, GRPO on GSM8K).}}
    \label{fig:rl_snr_results}
    \vspace*{-6mm}
\end{wrapfigure}

\noindent \textbf{Pion succeeds while Muon collapses.} \textbf{Fig.\,\ref{fig:rl_results}} shows validation accuracy vs.\ training steps across eight settings (GRPO/GMPO $\times$ Qwen3-1.7B/4B $\times$ MATH/GSM8K) using AdamW, Muon, and Pion. Muon consistently fails: accuracy remains near zero throughout training and often falls below the initial checkpoint. This aligns with our Limitation\,2 analysis in Sec.\,\ref{sec: motivation}: under low-SNR RLVR gradients, Muon’s uniform whitening amplifies noisy directions to the same magnitude as informative ones, leading to rapid policy collapse. In contrast, Pion recovers a meaningful training signal and \textit{outperforms} AdamW, as evidenced by faster convergence across all settings, demonstrating that spectral high-pass filtering is key to stable and effective RLVR. To further verify this, \textbf{Fig.\,\ref{fig:rl_snr_results}} shows that Pion consistently achieves higher SNR than AdamW throughout training.

\noindent \textbf{A reverse ablation: flipping the filter direction collapses on RLVR.}
To isolate that Pion’s gains stem specifically from its \textit{high-pass} NS design, we construct a low-pass counterpart, \textbf{Low-pass Muon} (LPMuon), as a direct mirror of Pion. LPMuon retains the same NS structure and per-step cost, but flips the coefficients to induce a {low-pass} mapping (contracting large singular values and amplifying small ones); see \textbf{Appendix\,\ref{sec:lowpass_muon}} for details. 
\textbf{Fig.\,\ref{fig:lowpassmuon}-(a)} confirms the resulting low-pass profile.
Yet, LPMuon fails to train: as shown in \textbf{Fig.\,\ref{fig:lowpassmuon}-(b)}, its accuracy remains at the initial checkpoint, in stark contrast to Pion. Together with Muon’s failure (no filtering) in Fig.\,\ref{fig:rl_results}, this reverse ablation isolates the \textit{direction} of spectral shaping as the key factor: Pion’s gains arise specifically from high-pass filtering.

\begin{figure}[htbp]
    \centering
    \begin{tabular}{cc}
        \hspace*{-4mm}
        \includegraphics[width=0.26\textwidth]{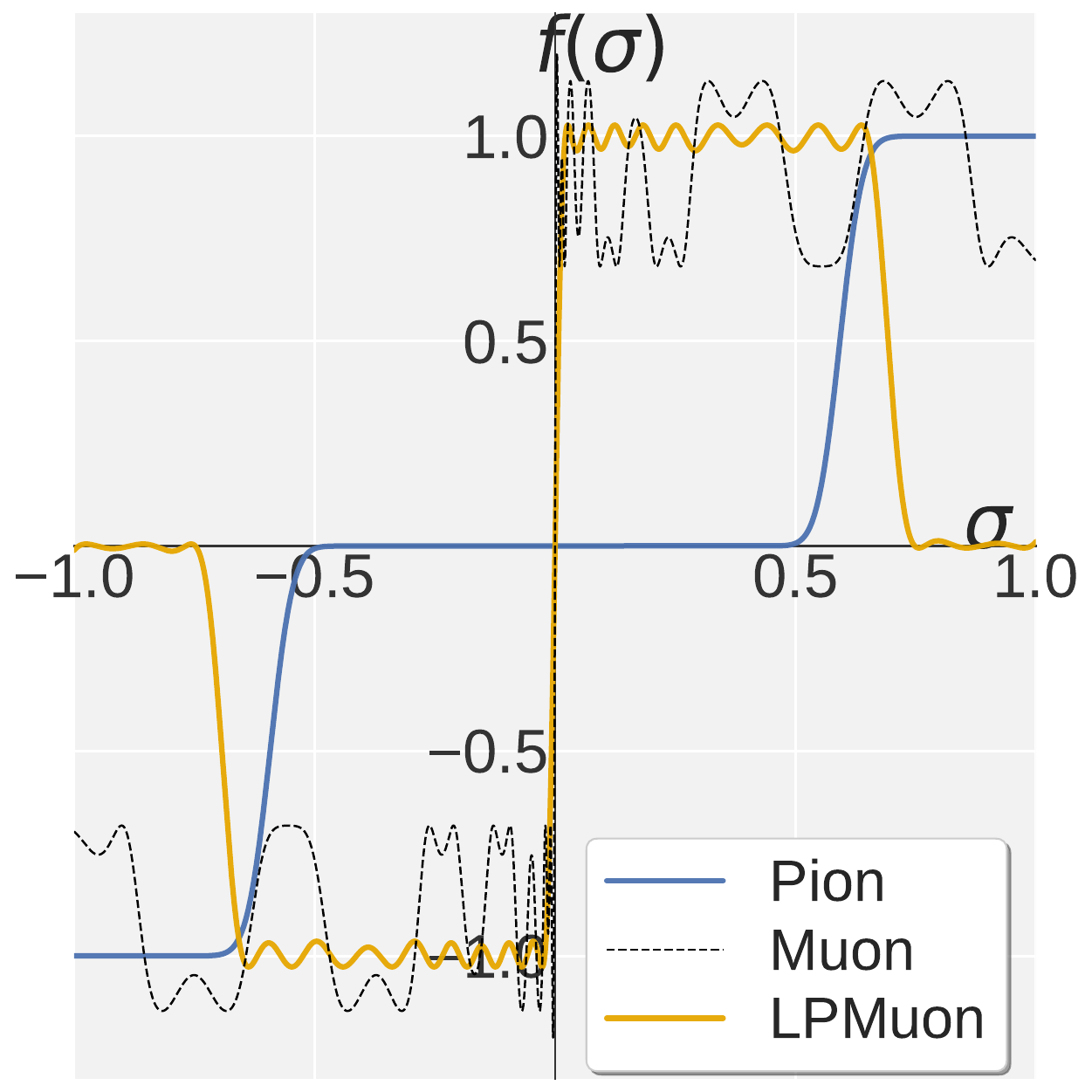}
        &
        \includegraphics[width=0.26\textwidth]{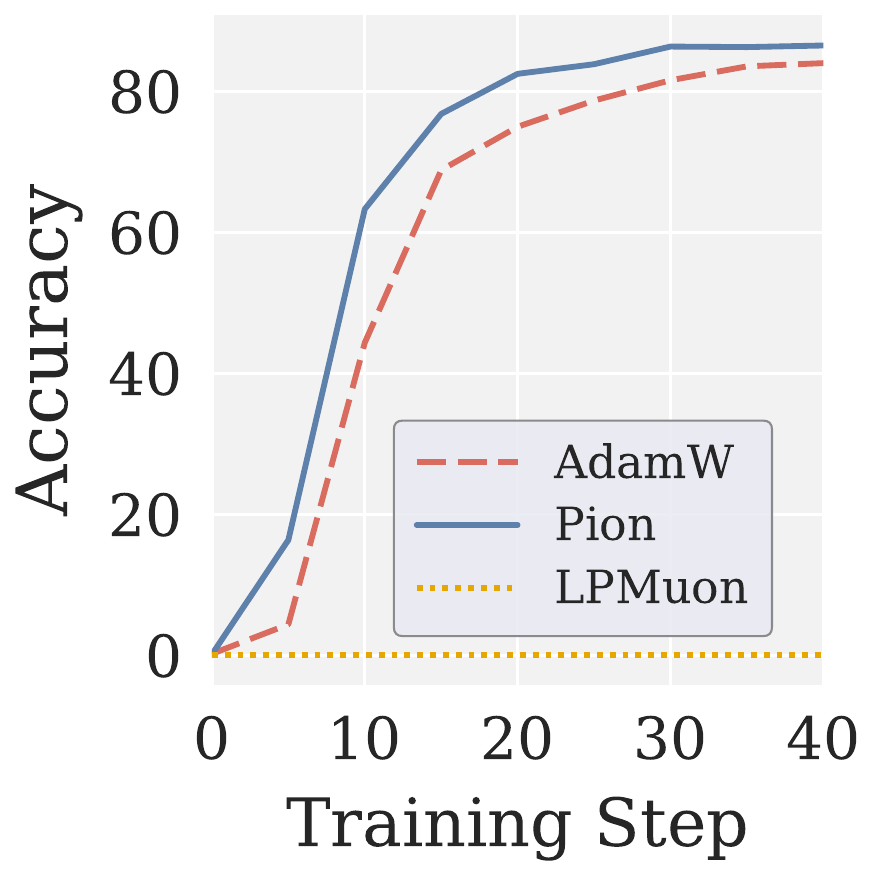}
        \\
        \hspace*{-2mm}\small{(a) Low-pass NS}
        &
        \small{(b) GSM8K accuracy}
    \end{tabular}
    \vspace{-2mm}
    \caption{\small{(a) Scalar map $f(\sigma)$ of LPMuon for $\sigma \in [0, 1]$. (b) Accuracy of AdamW, Pion, and LPMuon (Qwen3-1.7B, GRPO on GSM8K).}}
    \label{fig:lowpassmuon}
    \vspace{-3mm}
\end{figure}

\section{Conclusion}
\label{sec: conclusion}

We identified two limitations of Muon beyond LLM pretraining: lack of rank adaptiveness in cross-modality VLA training, and lack of noise adaptiveness in RLVR post-training. To address them, we proposed Pion, a drop-in replacement for Muon's NS iteration that uses a high-pass NS to preserve leading singular directions while suppressing the noisy tail, at the same per-step cost as Muon. Pion consistently outperforms AdamW and Muon across VLA training on LIBERO/LIBERO-Plus and RLVR post-training on Qwen3-1.7B/4B over MATH and GSM8K, including settings where Muon collapses. We discuss Pion's limitations (\textbf{Appendix\,\ref{sec: limitations}}) and broader impacts (\textbf{Appendix\,\ref{sec: broader_impact}}).

\section*{Acknowledgment}
This project is supported by the Cisco Faculty Research Award. The work of Chongyu Fan and Sijia Liu is also supported in part by the NSF CISE Core Program Award IIS-2504263, the NSF CAREER Award IIS-2338068, and the NSF Cyber-Physical Systems (CPS) Award CNS-2235231. We would also like to thank Gengyu Zhang for helpful discussions and feedback on the real-robot implementation of applying Pion to VLA training.

\bibliography{refs/vla, refs/rl, refs/muon}
\bibliographystyle{iclr2025_conference}


\clearpage
\newpage

\clearpage
\onecolumn
\section*{\Large{Appendix}}
\setcounter{section}{0}
\setcounter{figure}{0}
\setcounter{table}{0}
\makeatletter 
\renewcommand{\thesection}{\Alph{section}}
\renewcommand{\theHsection}{\Alph{section}}
\renewcommand{\thefigure}{A\arabic{figure}}
\renewcommand{\theHfigure}{A\arabic{figure}}
\renewcommand{\thetable}{A\arabic{table}}
\renewcommand{\theHtable}{A\arabic{table}}
\makeatother

\renewcommand{\thetable}{A\arabic{table}}
\setcounter{equation}{0}
\renewcommand{\theequation}{A\arabic{equation}}
\renewcommand{\theHequation}{A\arabic{equation}}

\etocsettocdepth.toc{subsection}
\etocsetnexttocdepth{subsection}
\etocsettocstyle{}{}
\tableofcontents
\clearpage

\section{Additional Preliminaries: VLA  Training and RLVR Training}
\label{sec:prelim_details}

This appendix provides the formal definitions deferred from Sec.\,\ref{sec: preliminary}: the two representative VLA action heads ($\ell_1$-regression and flow-matching) used as our cross-modality testbeds, and the two representative RLVR algorithms (GRPO and GMPO) used for our post-training experiments.

\subsection{VLA action heads and training objectives}
\label{sec:prelim_vla_heads}

We consider two representative designs for the action head of a VLA policy, each instantiating a different way of modeling the action distribution conditioned on the multimodal input $(\mathbf{x}, \mathbf{c})$.
\begin{itemize}[leftmargin=*, topsep=0pt, itemsep=0pt]
    \item 
    \textit{$\ell_1$-regression head} \citep{wang2026vla,kim2025fine}: A deterministic transformer maps the multimodal input $(\mathbf{x}, \mathbf{c})$ to a single action prediction $f_{\boldsymbol{\Theta}}(\mathbf{x}, \mathbf{c})$, trained with
    \begin{align}
        \mathcal{L}_{\mathrm{reg}}(\boldsymbol{\Theta}) \;=\; \mathbb{E}_{(\mathbf{x}, \mathbf{c}, \mathbf{a}) \sim \mathcal{D}}\bigl[\,\|f_{\boldsymbol{\Theta}}(\mathbf{x}, \mathbf{c}) - \mathbf{a}\|_1\,\bigr],
        \label{eq: l1_loss}
    \end{align}
    where $\| \cdot \|_1$ denotes the $\ell_1$ norm.
    \item \textit{Flow-matching head} \citep{lipman2022flow,black2024pi_0}: Rather than producing a single point estimate, the action head models the conditional distribution $p(\mathbf{a} | \mathbf{x}, \mathbf{c})$ via a continuous-time generative process that transports a Gaussian prior to the data action. Concretely, let $\mathbf{a}_1 \Def \mathbf{a}$ be the ground-truth action drawn from $\mathcal{D}$ and $\mathbf{a}_0 \sim \mathcal{N}(\mathbf{0}, \mathbf{I})$ be a noise sample. Along the linear interpolation path $\mathbf{a}_t = t\,\mathbf{a}_1 + (1-t)\,\mathbf{a}_0$ for $t \in [0,1]$, the target velocity field is the constant displacement $\tfrac{d \mathbf{a}_t}{dt} = \mathbf{a}_1 - \mathbf{a}_0$. The action head parameterizes a conditional velocity field $v_{\boldsymbol{\Theta}_{\mathrm{action}}}(\mathbf{a}_t, t | \mathbf{x}, \mathbf{c})$ that predicts this velocity, and is trained to regress the target via
    \begin{align}
        \mathcal{L}_{\mathrm{FM}}(\boldsymbol{\Theta}) \;=\; \mathbb{E}_{t \sim \mathcal{U}(0,1),\, \mathbf{a}_0 \sim \mathcal{N}(\mathbf{0}, \mathbf{I}),\, (\mathbf{x},\mathbf{c},\mathbf{a}_1) \sim \mathcal{D}}\Bigl[\,\bigl\|\,v_{\boldsymbol{\Theta}_{\mathrm{action}}}(\mathbf{a}_t, t | \mathbf{x}, \mathbf{c}) - (\mathbf{a}_1 - \mathbf{a}_0)\bigr\|_2^2\,\Bigr],
        \label{eq: fm_loss}
    \end{align}
    where $t \sim \mathcal{U}(0,1)$ denotes the uniform distribution over the interpolation timestep.
\end{itemize}
In our experiments (\textbf{Sec.\,\ref{sec: exp}}), the $\ell_1$-regression head is instantiated by VLA-Adapter \citep{wang2026vla} and the flow-matching head by VLANeXt \citep{wu2026vlanext}.

\subsection{RLVR training: GRPO and GMPO}
\label{sec:prelim_rlvr_algs}

We expand here on the three-stage RLVR loop sketched in Sec.\,\ref{sec: preliminary}.
At each iteration, RLVR alternates between
(a) \textit{rollout}: for each prompt $\mathbf{q}$, a group of $g$ responses $\{\mathbf{o}_i\}_{i=1}^{g}$ is sampled from the old policy $\pi_{\mathrm{old}}$;
(b) \textit{scoring}: each response $\mathbf{o}_i$ is assigned a scalar reward by a programmatic verifier, and the rewards within a group are normalized into a \textit{group-relative advantage} $\hat{a}_i \in \mathbb{R}$;
(c) \textit{policy update}: $\pi_{\boldsymbol{\Theta}}$ is optimized through a clipped importance-ratio objective.
We study two representative policy-gradient algorithms, GRPO and GMPO, which differ in how they aggregate the per-token importance ratio $r_{i,t}$. Throughout, we denote by $\clip(x, l, u) \Def \min\bigl(\max(x, l),\, u\bigr)$ the standard clipping operator that confines a scalar $x \in \mathbb{R}$ to the interval $[l, u]$.

\begin{itemize}[leftmargin=*, topsep=0pt, itemsep=0pt]
    \item 
\textit{GRPO} \citep{shao2024deepseekmath} aggregates the ratio at the token level via arithmetic averaging $r_{i,t}(\boldsymbol{\Theta}) \Def \pi_{\boldsymbol{\Theta}}(o_{i,t} | \mathbf{q}, \mathbf{o}_{i,<t}) \,/\, \pi_{\mathrm{old}}(o_{i,t} | \mathbf{q}, \mathbf{o}_{i,<t})$, where $o_{i,t}$ denotes the $t$-th token of $\mathbf{o}_i$ and $\mathbf{o}_{i,<t}$ its preceding prefix:
\begin{align}
    \mathcal{J}_{\mathrm{GRPO}}(\boldsymbol{\Theta}) = \mathbb{E}_{\mathbf{q},\,\{\mathbf{o}_i\}}\!\left[\frac{1}{g}\sum_{i=1}^{g} \frac{1}{|\mathbf{o}_i|}\sum_{t=1}^{|\mathbf{o}_i|} \min\Bigl(\,r_{i,t}(\boldsymbol{\Theta})\,\hat{a}_i,\;\, \clip\bigl(r_{i,t}(\boldsymbol{\Theta}),\; 1-\epsilon,\; 1+\epsilon\bigr)\,\hat{a}_i\,\Bigr)\right].
    \label{eq:grpo_obj}
\end{align}
\item
\textit{GMPO} \citep{zhao2025geometric} replaces the token-level arithmetic mean with a sequence-level geometric mean. Denoting the sequence product $p_i(\boldsymbol{\Theta}) \Def \prod_{t=1}^{|\mathbf{o}_i|} r_{i,t}(\boldsymbol{\Theta})$, GMPO optimizes
\begin{align}
    \mathcal{J}_{\mathrm{GMPO}}(\boldsymbol{\Theta}) = \mathbb{E}_{\mathbf{q},\,\{\mathbf{o}_i\}}\!\left[\frac{1}{g}\sum_{i=1}^{g} \Bigl|\,\min\Bigl(\,p_i(\boldsymbol{\Theta})\,\hat{a}_i,\;\, \clip\bigl(p_i(\boldsymbol{\Theta}),\; 1-\epsilon,\; 1+\epsilon\bigr)\,\hat{a}_i\,\Bigr)\,\Bigr|^{1/|\mathbf{o}_i|} \cdot \sign(\hat{a}_i)\right].
    \label{eq:gmpo_obj}
\end{align}
\end{itemize}

\clearpage
\newpage
\section{Low-rank Muon (LRMuon) Algorithm}
\label{sec:svdmuon}

We provide the full pseudocode for \textbf{Low-rank Muon} (LRMuon), used as a baseline in Sec.\,\ref{sec: motivation} and Sec.\,\ref{sec: exp_vla}. LRMuon follows the standard Muon optimization loop \eqref{eq:Muon_basic}, but replaces the NS approximation to $\mathrm{msign}(\mathbf{M}_t)$ \eqref{eq:msign} with an exact SVD-based \textbf{top-$k$ polar factor}. Concretely, given the compact SVD $\mathbf{M}_t = \mathbf{U}\boldsymbol{\Sigma}\mathbf{V}^\top$, LRMuon truncates to the top-$k$ singular subspace $(\mathbf{U}_k, \mathbf{V}_k)$ and uses the partial-isometry update $\mathbf{U}_k\mathbf{V}_k^\top$. The full procedure is summarized in Alg.\,\ref{alg:svdmuon_optimizer}.

\begin{algorithm}[H]
\caption{LRMuon Optimizer}
\label{alg:svdmuon_optimizer}
\begin{algorithmic}[1]
\REQUIRE Learning rate $\eta$, momentum coefficient $\mu$, target rank $k$
\STATE $\mathbf{M}_0 \leftarrow \mathbf{0}$
\FOR{$t = 1, 2, \dots$}
    \STATE $\mathbf{G}_t \leftarrow \nabla_{\boldsymbol{\Theta}} \mathcal{L}_t(\boldsymbol{\Theta}_{t-1})$;\quad $\mathbf{M}_t \leftarrow \mu\, \mathbf{M}_{t-1} + \mathbf{G}_t$ \quad ($\mathbf{M}_t \in \mathbb{R}^{m \times n}$)
    \STATE $\mathbf{U},\, \boldsymbol{\Sigma},\, \mathbf{V}^\top \leftarrow \mathrm{SVD}(\mathbf{M}_t)$
    \STATE $k_{\mathrm{eff}} \leftarrow \min\bigl(k,\, \mathrm{rank}(\mathbf{M}_t)\bigr)$
    \STATE $\mathbf{U}_k \leftarrow \mathbf{U}_{:,\, 1:k_{\mathrm{eff}}}$;\quad $\mathbf{V}_k^\top \leftarrow (\mathbf{V}^\top)_{1:k_{\mathrm{eff}},\, :}$
    \STATE $\mathbf{X} \leftarrow \mathbf{U}_k \mathbf{V}_k^\top$
    \STATE $\boldsymbol{\Theta}_t \leftarrow \boldsymbol{\Theta}_{t-1} - \eta\, \mathbf{X}$
\ENDFOR
\RETURN $\boldsymbol{\Theta}_t$
\end{algorithmic}
\end{algorithm}

\clearpage
\newpage
\section{SNR Analysis for SFT and RLVR}
\label{sec:snr}

This appendix justifies the empirical observation in Sec.\,\ref{sec: motivation} that RLVR has a much lower gradient SNR than SFT. We derive closed-form expressions for the per-step SNR of both estimators under matched batch size, and then account for the additional noise sources that are unique to RLVR.

\subsection{Setup and gradient estimators}

We adopt the notation of Sec.\,\ref{sec: preliminary} and Appendix\,\ref{sec:prelim_rlvr_algs}. For a prompt $\mathbf{q}$, the old policy $\pi_{\mathrm{old}}$ produces a group of $g$ responses $\{\mathbf{o}_i\}_{i=1}^{g}$ with lengths $|\mathbf{o}_i|$, and a verifier assigns binary rewards $R_i \in \{0, 1\}$. Throughout, denote
\begin{align}
\ell_{i,t}(\boldsymbol{\Theta}) = \log\pi_{\boldsymbol{\Theta}}(o_{i,t}\mid \mathbf{q}, \mathbf{o}_{i,<t}),
\qquad
r_{i,t}(\boldsymbol{\Theta}) = \frac{\pi_{\boldsymbol{\Theta}}(o_{i,t}\mid \mathbf{q}, \mathbf{o}_{i,<t})}{\pi_{\mathrm{old}}(o_{i,t}\mid \mathbf{q}, \mathbf{o}_{i,<t})},
\qquad
\hat{a}_i = \frac{R_i - \bar R}{\std(R) + \epsilon},
\label{eq:snr_defs}
\end{align}
where $\bar R = \tfrac{1}{g}\sum_j R_j$ and $\std(R)^2 = \tfrac{1}{g}\sum_j (R_j - \bar R)^2$. Throughout this appendix, for a random vector $\mathbf{X}$ we write $\Var(\mathbf{X}) \Def \mathbb{E}\|\mathbf{X} - \mathbb{E}[\mathbf{X}]\|^2 = \mathrm{tr}(\mathrm{Cov}(\mathbf{X}))$ for its total scalar variance, which coincides with the Frobenius-based denominator of the main-text SNR \eqref{eq: snr} once the gradient matrix is vectorized. Two identities that we invoke repeatedly follow directly from \eqref{eq:snr_defs}:
\begin{align}
\sum_{i=1}^{g} \hat{a}_i \;=\; 0,
\qquad
\frac{1}{g}\sum_{i=1}^{g} \hat{a}_i^2 \;=\; \frac{\std(R)^2}{(\std(R)+\epsilon)^2} \;\approx\; 1.
\label{eq:snr_advantage_identities}
\end{align}

\paragraph{SFT estimator.} On a labelled pair $(\mathbf{q}, \mathbf{o}^\star)$ of length $T$, the per-sample loss is $-\log\pi_{\boldsymbol{\Theta}}(\mathbf{o}^\star\mid \mathbf{q}) = -\sum_{t=1}^{T} \ell_t(\boldsymbol{\Theta})$, so the batch estimator over $g$ i.i.d.\ examples is
\begin{align}
\hat{\mathbf{g}}_{\mathrm{SFT}}
\;=\; -\frac{1}{g}\sum_{j=1}^{g}\sum_{t=1}^{T} \nabla_{\boldsymbol{\Theta}} \ell_{j,t}(\boldsymbol{\Theta}),
\label{eq:snr_sft_grad}
\end{align}
with a \emph{deterministic} coefficient $-1$ on every token.

\paragraph{GRPO estimator.} Differentiating the GRPO objective \eqref{eq:grpo_obj} inside the unclipped branch of the $\min$, and applying the log-derivative identity $\nabla_{\boldsymbol{\Theta}} r_{i,t}(\boldsymbol{\Theta}) = r_{i,t}(\boldsymbol{\Theta})\,\nabla_{\boldsymbol{\Theta}} \ell_{i,t}(\boldsymbol{\Theta})$, gives
\begin{align}
\nabla_{\boldsymbol{\Theta}} \mathcal{J}_{\mathrm{GRPO}}(\boldsymbol{\Theta})
\;=\; \mathbb{E}_{\mathbf{q},\{\mathbf{o}_i\}}\!\left[\frac{1}{g}\sum_{i=1}^{g}\frac{1}{|\mathbf{o}_i|}\sum_{t=1}^{|\mathbf{o}_i|} \mathbb{1}_{i,t}\,\hat{a}_i\,r_{i,t}(\boldsymbol{\Theta})\,\nabla_{\boldsymbol{\Theta}} \ell_{i,t}(\boldsymbol{\Theta}) \right],
\label{eq:snr_grpo_full}
\end{align}
where $\mathbb{1}_{i,t}\in\{0,1\}$ is the active indicator picking out tokens for which the $\min$ in \eqref{eq:grpo_obj} selects the unclipped branch. In the \emph{on-policy regime} $\boldsymbol{\Theta} = \boldsymbol{\Theta}_{\mathrm{old}}$ we have $r_{i,t}\equiv 1$ and $\mathbb{1}_{i,t}\equiv 1$, so \eqref{eq:snr_grpo_full} reduces to
\begin{align}
\hat{\mathbf{g}}_{\mathrm{GRPO}}
\;=\; \frac{1}{g}\sum_{i=1}^{g} \hat{a}_i\, \bar{\mathbf{S}}_i,
\qquad
\bar{\mathbf{S}}_i \;\Def\; \frac{1}{|\mathbf{o}_i|}\sum_{t=1}^{|\mathbf{o}_i|} \nabla_{\boldsymbol{\Theta}} \ell_{i,t}(\boldsymbol{\Theta}).
\label{eq:snr_grpo_grad}
\end{align}

\paragraph{Regularity assumptions.} For notational simplicity we treat all responses as having a common representative length $T$ (the lengths $|\mathbf{o}_i|$ are replaced by $T$ throughout; the analysis goes through when $T$ is interpreted as the average length, as long as $\|\bar{\mathbf{s}}\|^2 \ll \sigma_s^2$). We assume throughout that (i) per-token scores have constant variance $\Var(\nabla_{\boldsymbol{\Theta}} \ell_{i,t}) = \sigma_s^2$ across token positions and are uncorrelated across time steps; (ii) the rewards are i.i.d., $R_i\sim\Bern(p)$ with $p = p(\mathbf{q})\in(0,1)$; and (iii) conditional on $\{R_i\}$, the trajectories $\{\mathbf{o}_i\}$ are independent and the residual $\bar{\mathbf{S}}_i - \mathbb{E}[\bar{\mathbf{S}}_i \mid R_i]$ has mean zero with second moment $\sigma_s^2/T$.

\subsection{SFT variance and SNR}

\paragraph{Signal.} Taking expectation in \eqref{eq:snr_sft_grad} with $\bar{\mathbf{s}} \Def \mathbb{E}[\nabla_{\boldsymbol{\Theta}} \ell_{i,t}]$,
\begin{align}
\mathbb{E}[\hat{\mathbf{g}}_{\mathrm{SFT}}]
\;=\; -\frac{1}{g}\cdot g \cdot T\,\bar{\mathbf{s}}
\;=\; -T\,\bar{\mathbf{s}},
\qquad
\|\mathbb{E}[\hat{\mathbf{g}}_{\mathrm{SFT}}]\|^2
\;=\; T^2\|\bar{\mathbf{s}}\|^2.
\label{eq:snr_sft_signal}
\end{align}

\paragraph{Variance.} Samples are independent and tokens within a sample are uncorrelated, so
\begin{align}
\Var(\hat{\mathbf{g}}_{\mathrm{SFT}})
\;=\; \frac{1}{g^2}\sum_{j=1}^{g}\Var\!\Bigl(\sum_{t=1}^{T}\nabla_{\boldsymbol{\Theta}} \ell_{j,t}\Bigr)
\;=\; \frac{1}{g}\cdot \sum_{t=1}^{T}\Var(\nabla_{\boldsymbol{\Theta}} \ell_{1,t})
\;=\; \frac{T\,\sigma_s^2}{g},
\label{eq:snr_sft_var}
\end{align}
where the last equality treats $\sigma_s^2$ as the per-token noise scale. Combining \eqref{eq:snr_sft_signal}--\eqref{eq:snr_sft_var},
\begin{align}
\mathrm{SNR}_{\mathrm{SFT}}
\;\Def\; \frac{\|\mathbb{E}[\hat{\mathbf{g}}_{\mathrm{SFT}}]\|^2}{\Var(\hat{\mathbf{g}}_{\mathrm{SFT}})}
\;=\; \frac{T^2\|\bar{\mathbf{s}}\|^2}{T\sigma_s^2/g}
\;=\; g\,T\,\frac{\|\bar{\mathbf{s}}\|^2}{\sigma_s^2}.
\label{eq:snr_sft}
\end{align}

\subsection{GRPO variance and SNR (on-policy)}

To isolate the reward-dependent part of $\bar{\mathbf{S}}_i$, decompose
\begin{align}
\bar{\mathbf{S}}_i \;=\; \underbrace{\mathbb{E}[\bar{\mathbf{S}}_i \mid R_i]}_{\mathbf{u}_i} \;+\; \underbrace{\bar{\mathbf{S}}_i - \mathbb{E}[\bar{\mathbf{S}}_i \mid R_i]}_{\mathbf{v}_i},
\qquad
\mathbf{u}_i = \boldsymbol{\mu}_S^- + R_i\,\boldsymbol{\Delta},
\label{eq:snr_decomp}
\end{align}
where $\boldsymbol{\mu}_S^+ \Def \mathbb{E}[\bar{\mathbf{S}}_i \mid R_i = 1]$, $\boldsymbol{\mu}_S^- \Def \mathbb{E}[\bar{\mathbf{S}}_i \mid R_i = 0]$, and $\boldsymbol{\Delta} \Def \boldsymbol{\mu}_S^+ - \boldsymbol{\mu}_S^-$ is the expected score gap between successful and failed trajectories. Assumption (iii) gives $\mathbb{E}\|\mathbf{v}_i\|^2 = \sigma_s^2/T$.

\paragraph{Signal.} Substituting \eqref{eq:snr_decomp} into \eqref{eq:snr_grpo_grad},
\begin{align}
\hat{\mathbf{g}}_{\mathrm{GRPO}}
\;=\; \frac{1}{g}\sum_{i=1}^{g} \hat{a}_i\,\mathbf{u}_i \;+\; \frac{1}{g}\sum_{i=1}^{g} \hat{a}_i\,\mathbf{v}_i.
\label{eq:snr_grpo_split}
\end{align}
Using $\sum_i \hat{a}_i = 0$ from \eqref{eq:snr_advantage_identities}, the reward-dependent term becomes
\begin{equation}
    \frac{1}{g}\sum_{i=1}^{g} \hat{a}_i \mathbf{u}_i
    \;=\;
    \left(\frac{1}{g}\sum_{i=1}^{g} \hat{a}_i R_i\right)\boldsymbol{\Delta}.
\end{equation}
For finite group size, this coefficient depends on the number of successful responses
$K \Def \sum_{i=1}^{g} R_i$. Ignoring the small $\epsilon$ in the normalization, degenerate groups with $K \in \{0,g\}$ have zero advantage and hence contribute no signal. For non-degenerate groups, $1 \leq K \leq g-1$,
\begin{equation}
    \frac{1}{g}\sum_{i=1}^{g} \hat{a}_i R_i
    \;=\;
    \sqrt{\frac{K}{g}\left(1-\frac{K}{g}\right)}.
\end{equation}
Thus, with $K\sim\mathrm{Binomial}(g,p)$ and
\begin{equation}
    q_{\mathrm{nd}} \Def \Pr(0<K<g)=1-p^g-(1-p)^g,
    \qquad
    \rho_g(p) \Def
    \mathbb{E}\!\left[
    \sqrt{\frac{K}{g}\left(1-\frac{K}{g}\right)}
    \,\middle|\, 0<K<g
    \right],
\end{equation}
the first term has expectation $q_{\mathrm{nd}}\rho_g(p)\boldsymbol{\Delta}$, while the second term has zero expectation by assumption (iii). In the large-$g$ regime with $p$ bounded away from $0$ and $1$, $q_{\mathrm{nd}}\to1$ and $\rho_g(p)\to\sqrt{p(1-p)}$, recovering the simpler approximation used in the main text. Therefore
\begin{align}
\mathbb{E}[\hat{\mathbf{g}}_{\mathrm{GRPO}}] \;\approx\; q_{\mathrm{nd}}\rho_g(p)\,\boldsymbol{\Delta},
\qquad
\|\mathbb{E}[\hat{\mathbf{g}}_{\mathrm{GRPO}}]\|^2 \;\approx\; q_{\mathrm{nd}}^2\rho_g(p)^2\,\|\boldsymbol{\Delta}\|^2.
\label{eq:snr_grpo_signal}
\end{align}

\paragraph{Variance.} The first term in \eqref{eq:snr_grpo_split} contributes $O(\|\boldsymbol{\Delta}\|^2/g)$, while the second contributes $O(\sigma_s^2/(gT))$; in the low-SNR regime $T\|\boldsymbol{\Delta}\|^2 \ll \sigma_s^2$ the second term dominates. Conditioning on $\{R_i\}$ (which fixes the $\hat{a}_i$) and using assumption (iii),
\begin{align}
\mathbb{E}\!\left\|\frac{1}{g}\sum_{i=1}^{g} \hat{a}_i\,\mathbf{v}_i\right\|^2
\;=\; \frac{1}{g^2}\sum_{i=1}^{g} \hat{a}_i^2\,\mathbb{E}\|\mathbf{v}_i\|^2
\;=\; \frac{\sigma_s^2}{g\,T}\cdot\underbrace{\frac{1}{g}\sum_{i=1}^{g} \hat{a}_i^2}_{\approx\,1\text{ for non-degenerate groups, }0\text{ otherwise}}
\;\approx\; \frac{q_{\mathrm{nd}}\,\sigma_s^2}{g\,T}.
\label{eq:snr_grpo_var}
\end{align}
Combining \eqref{eq:snr_grpo_signal} and \eqref{eq:snr_grpo_var},
\begin{align}
\mathrm{SNR}_{\mathrm{GRPO}}
\;=\; \frac{\|\mathbb{E}[\hat{\mathbf{g}}_{\mathrm{GRPO}}]\|^2}{\Var(\hat{\mathbf{g}}_{\mathrm{GRPO}})}
\;\approx\; g\,T\,\frac{\kappa_g(p)\,\|\boldsymbol{\Delta}\|^2}{\sigma_s^2},
\qquad
\kappa_g(p) \Def q_{\mathrm{nd}}\rho_g(p)^2.
\label{eq:snr_grpo}
\end{align}

\subsection{On-policy SNR comparison}

Dividing \eqref{eq:snr_sft} by \eqref{eq:snr_grpo},
\begin{align}
\frac{\mathrm{SNR}_{\mathrm{SFT}}}{\mathrm{SNR}_{\mathrm{GRPO}}}
\;\approx\; \frac{\|\bar{\mathbf{s}}\|^2}{\kappa_g(p)\,\|\boldsymbol{\Delta}\|^2}.
\label{eq:snr_ratio_onpolicy}
\end{align}
Two regimes drive this ratio large: (i) extreme difficulty $p\to 0$ or $p\to 1$, where the effective reward signal $\kappa_g(p)$ vanishes because many groups become degenerate and the within-group success/failure contrast disappears; and (ii) low distinctiveness $\|\boldsymbol{\Delta}\|\ll\|\bar{\mathbf{s}}\|$, where successful and failed rollouts produce nearly identical score directions. In the large-$g$ non-degenerate approximation, $\kappa_g(p)\approx p(1-p)$, recovering the simpler ratio $\|\bar{\mathbf{s}}\|^2/(p(1-p)\|\boldsymbol{\Delta}\|^2)$. These are exactly the failure modes that dynamic sampling \citep{yu2025dapo} and mean-only normalization \citep{liu2025understanding} are designed to mitigate.

\subsection{Additional SNR degradation in GRPO}

The on-policy bound \eqref{eq:snr_grpo_var} is optimistic: practical GRPO runs deviate from on-policy and lose signal through clipping and degenerate reward groups, neither of which has an SFT counterpart.

\paragraph{Importance-sampling amplification.} When $\boldsymbol{\Theta}\neq\boldsymbol{\Theta}_{\mathrm{old}}$, each token gradient in \eqref{eq:snr_grpo_full} is weighted by $r_{i,t}$. Assuming that magnitudes and directions of $r_{i,t}$ and $\nabla_{\boldsymbol{\Theta}} \ell_{i,t}$ factorize in second moment,
\begin{align}
\mathbb{E}\|r_{i,t}\,\nabla_{\boldsymbol{\Theta}} \ell_{i,t}\|^2
\;=\; \mathbb{E}_{\pi_{\mathrm{old}}}[r_{i,t}^2]\cdot\mathbb{E}\|\nabla_{\boldsymbol{\Theta}} \ell_{i,t}\|^2
\;=\; (1+\chi^2)\,\sigma_s^2,
\qquad
\chi^2 \;\Def\; \mathbb{E}_{\pi_{\mathrm{old}}}[r_{i,t}^2] - 1,
\label{eq:snr_is}
\end{align}
where $\chi^2$ is the per-token chi-squared divergence between $\pi_{\boldsymbol{\Theta}}$ and $\pi_{\mathrm{old}}$ (equivalently, $e^{D_2(\pi_{\boldsymbol{\Theta}}\|\pi_{\mathrm{old}})} - 1$, where $D_2$ denotes the Rényi-2 divergence); it equals zero on-policy and grows with every inner gradient step. Thus, off-policy updates multiply the variance term in \eqref{eq:snr_grpo_var} by $(1+\chi^2)$.

\paragraph{Clipping-induced signal loss.} Let $\alpha = \Pr(\mathbb{1}_{i,t}=0)$ be the clip fraction. Modeling $\mathbb{1}_{i,t}$ as a Bernoulli$(1-\alpha)$ mask independent of the per-token score (valid in the mean-field sense), under random masking the conditional expectation of the per-response score $\bar{\mathbf{S}}_i$ scales by $(1-\alpha)$ while its variance scales by $(1-\alpha)$ as well, so signal-squared contributes a factor $(1-\alpha)^2$ and variance contributes $(1-\alpha)$, giving a net $(1-\alpha)$ attenuation on the GRPO SNR (equivalently, substituting the effective length $T \to (1-\alpha)T$ into \eqref{eq:snr_grpo}). This attenuation appears in the SFT/GRPO SNR ratio as
\begin{align}
\frac{1}{1-\alpha}.
\label{eq:snr_clip}
\end{align}

\paragraph{Degenerate reward groups.} For binary rewards, group normalization provides a useful advantage only when a group contains both successes and failures. As reflected in $\kappa_g(p)$ above, the probability of such a non-degenerate group is
\begin{align}
q_{\mathrm{nd}} \;=\; 1 - p^{g} - (1-p)^{g}.
\label{eq:snr_non_degenerate}
\end{align}
When $p\to0$ or $p\to1$, $q_{\mathrm{nd}}$ becomes small: many groups have zero reward variance, hence zero normalized advantage and no learning signal. This reduces the effective batch size and weakens the GRPO signal, beyond the large-$g$ approximation $p(1-p)$.

\subsection{Combined bound}

Combining the on-policy variance \eqref{eq:snr_grpo_var} with the importance-sampling factor \eqref{eq:snr_is} and the clipping attenuation \eqref{eq:snr_clip}, the off-policy variance and SNR ratio satisfy
\begin{align}
\Var(\hat{\mathbf{g}}_{\mathrm{GRPO}}^{\mathrm{full}})
\;\gtrsim\;
\frac{q_{\mathrm{nd}}\sigma_s^2}{g\,T}
\cdot \underbrace{(1+\chi^2)}_{\text{IS}\eqref{eq:snr_is}},
\label{eq:snr_full_var}
\end{align}
where the clipping attenuation \eqref{eq:snr_clip} is not absorbed into this variance bound because it scales signal-squared and variance simultaneously; instead, its net SNR effect is folded directly into the SFT/GRPO ratio:
\begin{align}
\frac{\mathrm{SNR}_{\mathrm{SFT}}}{\mathrm{SNR}_{\mathrm{GRPO}}^{\mathrm{full}}}
\;\gtrsim\;
\underbrace{\frac{\|\bar{\mathbf{s}}\|^2}{\kappa_g(p)\,\|\boldsymbol{\Delta}\|^2}}_{\text{credit assignment}}
\cdot (1+\chi^2)
\cdot \frac{1}{1-\alpha}.
\label{eq:snr_full_ratio}
\end{align}

\clearpage
\newpage
\section{SVD Factorization of Newton--Schulz Polynomial Iteration}
\label{sec:poly_vs_power}

This appendix provides the detailed derivation behind the claim in Sec.\,\ref{sec: method} that designing Pion's spectral high-pass at the matrix level reduces, via the SVD, to designing a scalar polynomial $f$ on $[0, 1]$. We show that the odd matrix polynomial used by a single Newton--Schulz (NS) step factors through the SVD as a scalar polynomial acting entrywise on the singular values, so that designing the matrix filter is equivalent to designing three scalar coefficients $(a, b, c)$. The chaining of multiple NS steps further composes these scalar polynomials, while leaving the singular vectors $(\mathbf{U}, \mathbf{V})$ unchanged throughout.

\paragraph{Setup.}
Let $\mathbf{X} \in \mathbb{R}^{m \times n}$ with $r \Def \mathrm{rank}(\mathbf{X})$, and let its \textit{compact} singular value decomposition (consistent with \eqref{eq:msign}) be
\begin{equation}
    \mathbf{X} \;=\; \mathbf{U}\,\boldsymbol{\Sigma}\,\mathbf{V}^\top,
    \qquad
    \mathbf{U} \in \mathbb{R}^{m \times r}, \;\; \mathbf{V} \in \mathbb{R}^{n \times r}, \;\; \mathbf{U}^\top \mathbf{U} = \mathbf{V}^\top \mathbf{V} = \mathbf{I}_r,
    \label{eq: svd}
\end{equation}
where $\boldsymbol{\Sigma} = \diag(\sigma_1, \ldots, \sigma_r) \succ 0$ collects the strictly positive singular values.

\paragraph{Polynomial iteration factors through the SVD.}
Consider the odd matrix polynomial used by a single quintic NS step:
\begin{equation}
    \mathcal{P}(\mathbf{X};\, a, b, c) \;\Def\; a\mathbf{X} + b\,\mathbf{X}\mathbf{X}^\top\mathbf{X} + c\,\mathbf{X}(\mathbf{X}^\top\mathbf{X})^2.
    \label{eq: poly_matrix}
\end{equation}
Using the SVD \eqref{eq: svd}, we have $\mathbf{X}^\top \mathbf{X} = \mathbf{V}\boldsymbol{\Sigma}^2\mathbf{V}^\top$. Since the thin right singular vector matrix satisfies $\mathbf{V}\mathbf{V}^\top \neq \mathbf{I}_n$ in general, the Gram-power identity should be read for positive powers:
\begin{equation}
    (\mathbf{X}^\top\mathbf{X})^k \;=\; \mathbf{V}\,\boldsymbol{\Sigma}^{2k}\,\mathbf{V}^\top
    \quad\text{for all } k\in\mathbb{N}_{\geq 1}.
    \label{eq: gram_power}
\end{equation}
For $k \geq 1$, left-multiplying the Gram power by $\mathbf{X} = \mathbf{U}\boldsymbol{\Sigma}\mathbf{V}^\top$ and using $\mathbf{V}^\top \mathbf{V} = \mathbf{I}_r$ yields
$\mathbf{X}(\mathbf{X}^\top\mathbf{X})^k = \mathbf{U}\boldsymbol{\Sigma}^{2k+1}\mathbf{V}^\top$; the same identity is immediate for $k=0$. Hence the key identity is
\begin{equation}
    \mathbf{X}(\mathbf{X}^\top\mathbf{X})^k \;=\; \mathbf{U}\,\boldsymbol{\Sigma}^{2k+1}\,\mathbf{V}^\top.
    \label{eq: odd_power}
\end{equation}
Substituting \eqref{eq: odd_power} into \eqref{eq: poly_matrix}, the matrix iteration collapses to
\begin{equation}
    \mathcal{P}(\mathbf{X};\, a, b, c) \;=\; \mathbf{U}\,\underbrace{\bigl(a\,\boldsymbol{\Sigma} + b\,\boldsymbol{\Sigma}^3 + c\,\boldsymbol{\Sigma}^5\bigr)}_{f(\boldsymbol{\Sigma};\, a, b, c)}\,\mathbf{V}^\top
    \;=\; \mathbf{U}\,f(\boldsymbol{\Sigma};\, a, b, c)\,\mathbf{V}^\top,
    \label{eq: svd_factorization}
\end{equation}
where $f(\sigma; a, b, c) = a\sigma + b\sigma^3 + c\sigma^5$ is the scalar polynomial from \eqref{eq: pion_poly} and $f(\boldsymbol{\Sigma})$ is understood as applying $f$ entrywise to the diagonal of $\boldsymbol{\Sigma}$. Equation~\eqref{eq: svd_factorization} has three important consequences:
\begin{itemize}[leftmargin=*, topsep=0pt, itemsep=0pt]
    \item \textbf{Per-singular-value control.} The matrix map $\mathbf{X} \mapsto \mathcal{P}(\mathbf{X})$ is exactly equivalent to the scalar map $\sigma_i \mapsto f(\sigma_i)$ applied independently to each singular value.
    \item \textbf{Invariance of singular vectors.} The left and right singular vectors $\mathbf{U}$ and $\mathbf{V}$ are preserved unchanged; only the singular values are reshaped.
    \item \textbf{Reduction to a 3-dim. coefficient design.} Specifying the full matrix-level filter reduces to specifying the three scalar coefficients $(a, b, c)$ that encode the desired shape of $f$ on $[0, 1]$.
\end{itemize}

\paragraph{Composition of NS steps.}
Composing $t$ NS steps $\mathcal{P}_t \circ \cdots \circ \mathcal{P}_1$ simply composes the scalar polynomials. If step $\mathcal{P}_i$ uses coefficients $(a_i, b_i, c_i)$ and induces the scalar map $f_i$, then by repeatedly applying \eqref{eq: svd_factorization},
\begin{equation}
    \bigl(\mathcal{P}_t \circ \cdots \circ \mathcal{P}_1\bigr)(\mathbf{X}) \;=\; \mathbf{U}\,\bigl(f_t \circ \cdots \circ f_1\bigr)(\boldsymbol{\Sigma})\,\mathbf{V}^\top.
    \label{eq: chained_factorization}
\end{equation}
This is exactly the chaining mechanism exploited by Pion to compose Promotion \eqref{eq: pion_promotion} for $k_{\mathrm{p}}$ steps and Suppression \eqref{eq: pion_suppression} for $k_{\mathrm{s}}$ steps into a single composite high-pass $f_{\mathrm{s}}^{\circ k_{\mathrm{s}}} \circ f_{\mathrm{p}}^{\circ k_{\mathrm{p}}}$ acting entrywise on $\boldsymbol{\Sigma}$, while leaving $(\mathbf{U}, \mathbf{V})$ untouched throughout.

\paragraph{Conclusion.}
The SVD factorization \eqref{eq: svd_factorization} reduces the problem of designing a matrix-level spectral filter to the problem of designing a scalar polynomial $f$ on $[0, 1]$. This justifies the treatment in Sec.\,\ref{sec: method}, where the entire Pion design (Promotion plus Suppression) is specified through scalar coefficients $(a_{\mathrm{p}}, b_{\mathrm{p}}, c_{\mathrm{p}})$ and $(a_{\mathrm{s}}, b_{\mathrm{s}}, c_{\mathrm{s}})$ acting on the normalized singular spectrum, with the singular vectors $(\mathbf{U}, \mathbf{V})$ of the gradient preserved exactly throughout the iteration.

\clearpage
\newpage
\section{Derivation of the Promotion and Suppression Polynomials}
\label{sec:pion_derivation}

\paragraph{Setup.}
Recall from \eqref{eq: pion_poly} the odd quintic scalar map that any single NS step induces on each normalized singular value $\sigma \in [0, 1]$:
\begin{equation}
    f(\sigma;\, a, b, c) \;=\; a\,\sigma + b\,\sigma^{3} + c\,\sigma^{5}, 
    \qquad
    f'(\sigma) \;=\; a + 3b\,\sigma^{2} + 5c\,\sigma^{4}, 
    \qquad
    f''(\sigma) \;=\; 6b\,\sigma + 20c\,\sigma^{3}.
    \label{eq: pion_poly_derivs}
\end{equation}
The Pion design problem is to choose two sets of coefficients $(a_{\mathrm{p}}, b_{\mathrm{p}}, c_{\mathrm{p}})$ and $(a_{\mathrm{s}}, b_{\mathrm{s}}, c_{\mathrm{s}})$ such that the chained iteration $f_{\mathrm{s}}^{\circ k_{\mathrm{s}}} \circ f_{\mathrm{p}}^{\circ k_{\mathrm{p}}}$ realizes a high-pass on $[0,1]$.

\subsection{Promotion polynomial $f_{\mathrm{p}}$}
\label{sec:promotion_derivation}

\paragraph{Design constraints.}
The Promotion stage must satisfy three constraints:
\begin{itemize}
    \setlength{\itemsep}{2pt}
    \item \textbf{(P1)} \textit{Fixed point}: $f_{\mathrm{p}}(1) = 1$, i.e., any singular value already at $1$ is left unchanged.
    \item \textbf{(P2)} \textit{First-order stationarity}: $f_{\mathrm{p}}'(1) = 0$, so that small perturbations around the fixed point $\sigma = 1$ are not amplified.
    \item \textbf{(P3)} \textit{Boundary concavity}: $f_{\mathrm{p}}''(1) \leq 0$, which prevents the Promotion map from curving upward near the anchored fixed point and pushing nearby singular values outside the normalized spectral range.
\end{itemize}

We motivate (P3) as follows. Since (P2) makes $\sigma = 1$ a stationary point of $f_{\mathrm{p}}$, the sign of $f_{\mathrm{p}}''(1)$ controls the local shape of $f_{\mathrm{p}}$ near $\sigma = 1$. If $f_{\mathrm{p}}''(1) > 0$, then $f_{\mathrm{p}}'$ is strictly increasing through $0$ at $\sigma = 1$ and hence strictly negative just to the left of $\sigma = 1$; consequently $f_{\mathrm{p}}$ is locally decreasing as $\sigma \uparrow 1$, so values $\sigma$ slightly below $1$ are mapped to $f_{\mathrm{p}}(\sigma) > f_{\mathrm{p}}(1) = 1$, leaving the spectral budget $[0, 1]$. Imposing $f_{\mathrm{p}}''(1) \leq 0$ rules out this upward curving; the strict case $f_{\mathrm{p}}''(1) < 0$ already gives a local maximum at $\sigma = 1$ via the standard second-derivative test, while the boundary case $f_{\mathrm{p}}''(1) = 0$ is degenerate at the second order and its consequences are pinned down by the global monotonicity analysis below. As we verify below, restricted to the one-parameter family fixed by (P1)--(P2), the boundary concavity (P3) together with a matching lower bound is in fact equivalent to global monotonicity of $f_{\mathrm{p}}$ on $[0, 1]$, so it preserves the relative ordering of singular values throughout. As an immediate corollary, the Promotion stage stays inside the spectral budget: $f_{\mathrm{p}}(\sigma) \leq f_{\mathrm{p}}(1) = 1$ for all $\sigma \in [0, 1]$.

\paragraph{Step 1: reduction to a one-parameter family via (P1)--(P2).}
By \eqref{eq: pion_poly_derivs}, conditions (P1) and (P2) yield
\begin{equation}
    a_{\mathrm{p}} + b_{\mathrm{p}} + c_{\mathrm{p}} \;=\; 1, 
    \qquad
    a_{\mathrm{p}} + 3 b_{\mathrm{p}} + 5 c_{\mathrm{p}} \;=\; 0.
    \label{eq: promotion_linear_system}
\end{equation}
Solving for $a_{\mathrm{p}}$ and $b_{\mathrm{p}}$ in terms of $c_{\mathrm{p}}$,
\begin{equation}
    b_{\mathrm{p}} \;=\; -\,\frac{1 + 4 c_{\mathrm{p}}}{2}, 
    \qquad 
    a_{\mathrm{p}} \;=\; \frac{3 + 2 c_{\mathrm{p}}}{2}.
    \label{eq: promotion_param}
\end{equation}

\paragraph{Step 2: applying (P3) to obtain feasible ranges of $(a_{\mathrm{p}}, b_{\mathrm{p}}, c_{\mathrm{p}})$.}
Substituting \eqref{eq: promotion_param} into the second-order derivative gives
\begin{equation}
    f_{\mathrm{p}}''(1) \;=\; 6 b_{\mathrm{p}} + 20 c_{\mathrm{p}} \;=\; -\,3 + 8 c_{\mathrm{p}},
    \label{eq: promotion_curvature_explicit}
\end{equation}
so (P3) is equivalent to
\begin{equation}
    c_{\mathrm{p}} \;\leq\; 0.375.
    \label{eq: cp_bound}
\end{equation}

Next, we derive the conditions that ensure $f_{\mathrm{p}}$ is monotonically non-decreasing on $[0, 1]$. Setting $u \Def \sigma^{2} \in [0, 1]$, define
\begin{equation}
    g(u) \;\Def\; f_{\mathrm{p}}'(\sigma) \;=\; a_{\mathrm{p}} + 3 b_{\mathrm{p}}\, u + 5 c_{\mathrm{p}}\, u^{2}.
\end{equation}
Then $g$ is a quadratic in $u$ with $g(1) = a_{\mathrm{p}} + 3 b_{\mathrm{p}} + 5 c_{\mathrm{p}} = 0$ by (P2). For $c_{\mathrm{p}} \neq 0$, this lets us factor $g$ as
\begin{equation}
    g(u) \;=\; 5\, c_{\mathrm{p}}\, (u - 1)(u - r), 
    \qquad 
    r \;=\; \frac{a_{\mathrm{p}}}{5\, c_{\mathrm{p}}} \;=\; \frac{3 + 2 c_{\mathrm{p}}}{10\, c_{\mathrm{p}}}.
    \label{eq: gp_factorization}
\end{equation}

Since $u - 1 \leq 0$ for all $u \in [0, 1]$, the inequality $g(u) \geq 0$ is equivalent to $5 c_{\mathrm{p}}\,(u - r) \leq 0$ on $[0, 1]$. We split on the sign of $c_{\mathrm{p}}$:
\begin{itemize}
    \setlength{\itemsep}{2pt}
    \item If $c_{\mathrm{p}} > 0$, we need $u \leq r$ for all $u \in [0, 1]$, i.e.\ $r \geq 1$. From \eqref{eq: gp_factorization}, $r \geq 1 \iff 3 + 2 c_{\mathrm{p}} \geq 10 c_{\mathrm{p}} \iff c_{\mathrm{p}} \leq 0.375$.
    \item If $c_{\mathrm{p}} < 0$, we need $u \geq r$ for all $u \in [0, 1]$, i.e.\ $r \leq 0$. From \eqref{eq: gp_factorization}, $r \leq 0 \iff 3 + 2 c_{\mathrm{p}} \geq 0 \iff c_{\mathrm{p}} \geq -1.5$.
    \item If $c_{\mathrm{p}} = 0$, then $g(u) = \tfrac{3}{2} - \tfrac{3}{2} u \geq 0$ on $[0, 1]$.
\end{itemize}
Combining the three cases yields the feasible range
\begin{equation}
    -1.5 \;\leq\; c_{\mathrm{p}} \;\leq\; 0.375 
    \quad \Longrightarrow \quad
    g(u) \geq 0 \text{ for all } u \in [0, 1].
    \label{eq: cp_global_bound}
\end{equation}
The upper bound in \eqref{eq: cp_global_bound} coincides with the local condition \eqref{eq: cp_bound} from (P3), and the lower bound corresponds (via \eqref{eq: promotion_param}) exactly to $a_{\mathrm{p}} \geq 0$. Hence, within the family pinned by (P1)--(P2), the boundary concavity (P3) together with $a_{\mathrm{p}} \geq 0$ is necessary and sufficient for global monotonicity of $f_{\mathrm{p}}$ on $[0, 1]$.

Combining \eqref{eq: cp_global_bound} with \eqref{eq: promotion_param}, we obtain the feasible coefficient ranges
\begin{equation}
    0 \;\leq\; a_{\mathrm{p}} \;\leq\; 1.875, \qquad
    -1.25 \;\leq\; b_{\mathrm{p}} \;\leq\; 2.5, \qquad
    -1.5 \;\leq\; c_{\mathrm{p}} \;\leq\; 0.375.
    \label{eq: pcoeff_ranges}
\end{equation}

\paragraph{Step 3: choosing the largest feasible slope at the origin.}
The slope $a_{\mathrm{p}} = f_{\mathrm{p}}'(0)$ controls how aggressively a single Promotion step lifts small singular values $\sigma \approx 0$ into the regime where Suppression eventually anchors them at $1$: since $f_{\mathrm{p}}(\sigma) \approx a_{\mathrm{p}}\,\sigma$ near the origin, small singular values are amplified by a factor of approximately $a_{\mathrm{p}}$ per step. We therefore choose $a_{\mathrm{p}}$ at its maximal feasible value, $a_{\mathrm{p}} = 1.875$, to promote rapid growth under a fixed budget of $k = 5$ NS iterations. This achieves equality in \eqref{eq: cp_bound}, yielding $c_{\mathrm{p}} = 0.375$ and, by \eqref{eq: promotion_curvature_explicit}, $f_{\mathrm{p}}''(1) = 0$. Substituting back into \eqref{eq: promotion_param} fixes
\begin{equation}
    (a_{\mathrm{p}}, b_{\mathrm{p}}, c_{\mathrm{p}}) \;=\; (1.875,\, -1.25,\, 0.375),
    \label{eq: promotion_coeffs}
\end{equation}
which recovers exactly \eqref{eq: pion_promotion}. At these coefficients, the derivative simplifies to a perfect square,
\begin{equation}
    f_{\mathrm{p}}'(\sigma) \;=\; 1.875 - 3.75\,\sigma^{2} + 1.875\,\sigma^{4} \;=\; 1.875\,\bigl(1 - \sigma^{2}\bigr)^{2} \;\geq\; 0
    \qquad \forall\, \sigma \in [0, 1],
    \label{eq: promotion_monotone}
\end{equation}
making $f_{\mathrm{p}}$ monotone non-decreasing on $[0, 1]$ with $f_{\mathrm{p}}'$ vanishing only at the boundary $\sigma = 1$.

\subsection{Suppression polynomial $f_{\mathrm{s}}$}
\label{sec:suppression_derivation}

\paragraph{Design constraints.}
The Suppression stage inherits the fixed-point and first-order stationarity conditions at $\sigma = 1$ from Promotion in order to anchor the leading singular values at $1$. In addition, it imposes a \emph{spectral filtering} condition at the origin that strips the linear term, so that small singular values are driven toward $0$ by the higher-order ($\sigma^{3}, \sigma^{5}$) terms. Concretely:
\begin{itemize}
    \setlength{\itemsep}{2pt}
    \item \textbf{(S1)} \textit{Fixed point}: $f_{\mathrm{s}}(1) = 1$.
    \item \textbf{(S2)} \textit{First-order stationarity}: $f_{\mathrm{s}}'(1) = 0$.
    \item \textbf{(S3)} \textit{Spectral filtering at the origin}: $f_{\mathrm{s}}'(0) = 0$, eliminating the linear term so that small singular values $\sigma \approx 0$ are pushed toward $0$ by the higher-order terms.
\end{itemize}
By \eqref{eq: pion_poly_derivs}, (S3) is equivalent to $a_{\mathrm{s}} = 0$. Substituting into (S1) and (S2) gives a $2 \times 2$ linear system in $(b_{\mathrm{s}}, c_{\mathrm{s}})$:
\begin{equation}
    b_{\mathrm{s}} + c_{\mathrm{s}} \;=\; 1, 
    \qquad 
    3 b_{\mathrm{s}} + 5 c_{\mathrm{s}} \;=\; 0,
\end{equation}
whose unique solution is $b_{\mathrm{s}} = 2.5$ and $c_{\mathrm{s}} = -1.5$. Combined with $a_{\mathrm{s}} = 0$, this yields
\begin{equation}
    (a_{\mathrm{s}}, b_{\mathrm{s}}, c_{\mathrm{s}}) \;=\; (0,\, 2.5,\, -1.5),
    \label{eq: suppression_coeffs}
\end{equation}
which recovers exactly \eqref{eq: pion_suppression}. Unlike the Promotion stage, the Suppression coefficients are determined uniquely by (S1)--(S3) and admit no remaining degree of freedom. At these coefficients, the derivative factors as
\begin{equation}
    f_{\mathrm{s}}'(\sigma) \;=\; 7.5\,\sigma^{2} - 7.5\,\sigma^{4} \;=\; 7.5\,\sigma^{2}\bigl(1 - \sigma^{2}\bigr) \;\geq\; 0
    \qquad \forall\, \sigma \in [0, 1],
    \label{eq: suppression_monotone}
\end{equation}
so $f_{\mathrm{s}}$ is monotone non-decreasing on $[0, 1]$ with $f_{\mathrm{s}}'$ vanishing only at the endpoints $\sigma \in \{0, 1\}$. Hence Suppression also preserves the relative ordering of singular values, and the chained iteration $f_{\mathrm{s}}^{\circ k_{\mathrm{s}}} \circ f_{\mathrm{p}}^{\circ k_{\mathrm{p}}}$ is monotone on $[0, 1]$.
\clearpage
\newpage
\section{The Pion Optimizer: Full Algorithmic Description}
\label{sec:pion_algorithms}

We provide the full pseudocode for Pion deferred from Sec.\,\ref{sec: method}. Pion is a drop-in replacement for Muon: the only change is that the per-step Newton--Schulz orthogonalization \eqref{eq: ns_matrix} is replaced by our \textbf{high-pass NS}, which chains the Promotion polynomial $f_{\mathrm{p}}$ \eqref{eq: pion_promotion} and the Suppression polynomial $f_{\mathrm{s}}$ \eqref{eq: pion_suppression}. The total iteration count is fixed to $k = 5$, split by $k_{\mathrm{p}} \in \{0, 1, \ldots, 5\}$ with $k_{\mathrm{s}} = k - k_{\mathrm{p}}$. The high-pass NS has two modes: a \textbf{default} mode applied to each weight matrix $\mathbf{M}_t \in \mathbb{R}^{m \times n}$ as a whole (Alg.\,\ref{alg:pion_optimizer}), used for VLA training, and a \textbf{per-head} mode that splits each attention projection along the head dimension into sub-blocks $\{\mathbf{M}_t^h\}_{h=1}^{H}$ and runs the iteration independently per head (Alg.\,\ref{alg:multihead_pion_optimizer}), used for RLVR post-training; the per-head mode adds only a single reshape on top of the default mode.

\vspace{-1mm}
\begin{algorithm}[H]
    \caption{Pion Optimizer (\textbf{default mode}: high-pass NS on the whole matrix)}
    \label{alg:pion_optimizer}
    \begin{algorithmic}[1]
    \renewcommand{\algorithmiccomment}[1]{\hfill #1}
    \REQUIRE Learning rate $\eta$, momentum coefficient $\mu$, promotion steps $k_{\mathrm{p}}$
    \STATE $k_{\mathrm{s}} \leftarrow 5 - k_{\mathrm{p}}$;\quad $\mathbf{M}_0 \leftarrow \mathbf{0}$ \COMMENT{Total iterations strictly fixed to $k = 5$}
    \FOR{$t = 1, 2, \dots$}
        \STATE $\mathbf{G}_t \leftarrow \nabla_{\boldsymbol{\Theta}} \mathcal{L}_t(\boldsymbol{\Theta}_{t-1})$;\quad $\mathbf{M}_t \leftarrow \mu\, \mathbf{M}_{t-1} + \mathbf{G}_t$
        \STATE $\mathbf{X} \leftarrow \mathbf{M}_t / (\lVert \mathbf{M}_t \rVert_{\mathrm{F}} + \epsilon)$ \COMMENT{Spectral pre-normalization, cf.\ \eqref{eq: ns_matrix}}
        \FOR[\textbf{Stage 1: Promotion} \eqref{eq: pion_promotion}, $(a_{\mathrm{p}}, b_{\mathrm{p}}, c_{\mathrm{p}}) = (1.875, -1.25, 0.375)$]{$i = 1, \dots, k_{\mathrm{p}}$}
            \STATE $\mathbf{X} \leftarrow a_{\mathrm{p}}\,\mathbf{X} + b_{\mathrm{p}}\,\mathbf{X}\mathbf{X}^\top\mathbf{X} + c_{\mathrm{p}}\,\mathbf{X}(\mathbf{X}^\top\mathbf{X})^2$
        \ENDFOR
        \FOR[\textbf{Stage 2: Suppression} \eqref{eq: pion_suppression}, $(a_{\mathrm{s}}, b_{\mathrm{s}}, c_{\mathrm{s}}) = (0, 2.5, -1.5)$]{$j = 1, \dots, k_{\mathrm{s}}$}
            \STATE $\mathbf{X} \leftarrow a_{\mathrm{s}}\,\mathbf{X} + b_{\mathrm{s}}\,\mathbf{X}\mathbf{X}^\top\mathbf{X} + c_{\mathrm{s}}\,\mathbf{X}(\mathbf{X}^\top\mathbf{X})^2$
        \ENDFOR
        \STATE $\boldsymbol{\Theta}_t \leftarrow \boldsymbol{\Theta}_{t-1} - \eta\, \mathbf{X}$
    \ENDFOR
    \RETURN $\boldsymbol{\Theta}_t$
    \end{algorithmic}
\end{algorithm}

\vspace{-2mm}
\begin{algorithm}[H]
    \caption{Pion Optimizer (\textbf{per-head mode}: per-head high-pass NS on attention projections)}
    \label{alg:multihead_pion_optimizer}
    \begin{algorithmic}[1]
    \renewcommand{\algorithmiccomment}[1]{\hfill #1}
    \REQUIRE Learning rate $\eta$, momentum coefficient $\mu$, promotion steps $k_{\mathrm{p}}$, number of heads $H$
    \STATE $k_{\mathrm{s}} \leftarrow 5 - k_{\mathrm{p}}$;\quad $\mathbf{M}_0 \leftarrow \mathbf{0}$ \COMMENT{Total iterations strictly fixed to $k = 5$}
    \FOR{$t = 1, 2, \dots$}
        \STATE $\mathbf{G}_t \leftarrow \nabla_{\boldsymbol{\Theta}} \mathcal{L}_t(\boldsymbol{\Theta}_{t-1})$;\quad $\mathbf{M}_t \leftarrow \mu\, \mathbf{M}_{t-1} + \mathbf{G}_t$
        \STATE $\{\mathbf{M}_t^h\}_{h=1}^{H} \leftarrow \mathrm{Reshape}(\mathbf{M}_t)$ \COMMENT{Split the attention projection along the head dim}
        \STATE $\mathbf{X}^h \leftarrow \mathbf{M}_t^h / (\lVert \mathbf{M}_t^h \rVert_{\mathrm{F}} + \epsilon),\;\; \forall\, h \in \{1, \dots, H\}$ \COMMENT{Per-head pre-normalization}
        \FOR[\textbf{Stage 1: Promotion} \eqref{eq: pion_promotion}, batched over $H$]{$i = 1, \dots, k_{\mathrm{p}}$}
            \STATE $\mathbf{X}^h \leftarrow a_{\mathrm{p}}\,\mathbf{X}^h + b_{\mathrm{p}}\,\mathbf{X}^h(\mathbf{X}^h)^\top\mathbf{X}^h + c_{\mathrm{p}}\,\mathbf{X}^h\bigl((\mathbf{X}^h)^\top\mathbf{X}^h\bigr)^2,\;\; \forall\, h \in \{1, \dots, H\}$
        \ENDFOR
        \FOR[\textbf{Stage 2: Suppression} \eqref{eq: pion_suppression}, batched over $H$]{$j = 1, \dots, k_{\mathrm{s}}$}
            \STATE $\mathbf{X}^h \leftarrow a_{\mathrm{s}}\,\mathbf{X}^h + b_{\mathrm{s}}\,\mathbf{X}^h(\mathbf{X}^h)^\top\mathbf{X}^h + c_{\mathrm{s}}\,\mathbf{X}^h\bigl((\mathbf{X}^h)^\top\mathbf{X}^h\bigr)^2,\;\; \forall\, h \in \{1, \dots, H\}$
        \ENDFOR
        \STATE $\mathbf{X} \leftarrow \mathrm{Reshape}^{-1}\bigl(\{\mathbf{X}^h\}_{h=1}^{H}\bigr) \in \mathbb{R}^{m \times n}$
        \STATE $\boldsymbol{\Theta}_t \leftarrow \boldsymbol{\Theta}_{t-1} - \eta\, \mathbf{X}$
    \ENDFOR
    \RETURN $\boldsymbol{\Theta}_t$
    \end{algorithmic}
\end{algorithm}

\clearpage
\newpage
\section{Per-Head Norm Heterogeneity Affects Forward and Backward Computation}
\label{sec:head_norm_analysis}

We analyze how per-head norm heterogeneity, an empirical property of trained transformers (Fig.\,\ref{fig:mh_analysis}-(b)), affects both forward computation and gradient flow. This motivates per-head spectral filtering in place of whole-matrix filtering.

\paragraph{Notation.}
For clarity, we write the analysis for a standard multi-head attention layer. For grouped-query or multi-query attention, the same argument applies to each Q, K, and V projection along its own head dimension.
Let $\mathbf{X}\!\in\!\mathbb{R}^{n\times d}$ denote the input sequence and let $d_k$ be the head dimension. For head $h$, define $\mathbf{W}_Q^h,\mathbf{W}_K^h,\mathbf{W}_V^h\in\mathbb{R}^{d\times d_k}$ and $\mathbf{W}_O^h\in\mathbb{R}^{d_k\times d}$. The head computes $\mathbf{S}^h=\mathbf{X}\mathbf{W}_Q^h(\mathbf{X}\mathbf{W}_K^h)^\top/\sqrt{d_k}$, $\mathbf{A}^h=\mathrm{softmax}(\mathbf{S}^h)$ row-wise, $\mathbf{O}^h=\mathbf{A}^h\mathbf{X}\mathbf{W}_V^h$, and the layer output is $\mathbf{Z}=\sum_h\mathbf{O}^h\mathbf{W}_O^h$.

\begin{proposition}[Per-head norms modulate attention and gradients]
\label{prop:head_norm_coupling}
For each head $h$, the following forward and backward norm couplings hold.
\begin{enumerate}[label=(\alph*),leftmargin=*]
    \item \textbf{Forward.}
    The Q/K norms control attention \emph{sharpness}: the logits admit the factorization
    \begin{equation}
    \label{eq:logit_temperature}
        \mathbf{S}^h \;=\; \underbrace{\frac{\|\mathbf{W}_Q^h\|_F\,\|\mathbf{W}_K^h\|_F}{\sqrt{d_k}}}_{\text{effective inverse temperature}~\beta_h}\;\cdot\; \mathbf{X}\,\widetilde{\mathbf{W}}^h\,\mathbf{X}^\top, \qquad \widetilde{\mathbf{W}}^h \Def \frac{\mathbf{W}_Q^h(\mathbf{W}_K^h)^\top}{\|\mathbf{W}_Q^h\|_F\,\|\mathbf{W}_K^h\|_F},
    \end{equation}
    so at fixed normalized shape $\widetilde{\mathbf{W}}^h$, larger $\|\mathbf{W}_Q^h\|_F\|\mathbf{W}_K^h\|_F$ gives a larger softmax inverse temperature and a sharper attention pattern. The V/O norms control the head's \emph{output magnitude}:
    \begin{equation}
    \label{eq:output_scale}
        \|\mathbf{O}^h \mathbf{W}_O^h\|_F \;\leq\; \|\mathbf{A}^h\|_2\,\|\mathbf{X}\|_F\,\|\mathbf{W}_V^h\|_2\,\|\mathbf{W}_O^h\|_2,
    \end{equation}
    so heads with larger $\|\mathbf{W}_V^h\|_2\|\mathbf{W}_O^h\|_2$ tend to contribute more to the layer output.

    \item \textbf{Backward.}
    Let $\mathbf{G}=\partial\mathcal{L}/\partial\mathbf{Z}$. Then
    \begin{align}
        \left\|\frac{\partial \mathcal{L}}{\partial \mathbf{W}_O^h}\right\|_F
        &\leq \|\mathbf{A}^h\|_2\,\|\mathbf{X}\|_2\,\|\mathbf{W}_V^h\|_2\,\|\mathbf{G}\|_F, \label{eq:bound_O}\\
        \left\|\frac{\partial \mathcal{L}}{\partial \mathbf{W}_V^h}\right\|_F
        &\leq \|\mathbf{A}^h\|_2\,\|\mathbf{X}\|_2\,\|\mathbf{W}_O^h\|_2\,\|\mathbf{G}\|_F, \label{eq:bound_V}\\
        \left\|\frac{\partial \mathcal{L}}{\partial \mathbf{W}_Q^h}\right\|_F
        &\leq C_X\,\|\mathbf{W}_K^h\|_2\,\|\mathbf{W}_V^h\|_2\,\|\mathbf{W}_O^h\|_2\,\|\mathbf{G}\|_F, \label{eq:bound_Q}\\
        \left\|\frac{\partial \mathcal{L}}{\partial \mathbf{W}_K^h}\right\|_F
        &\leq C_X\,\|\mathbf{W}_Q^h\|_2\,\|\mathbf{W}_V^h\|_2\,\|\mathbf{W}_O^h\|_2\,\|\mathbf{G}\|_F, \label{eq:bound_K}
    \end{align}
    where $C_X \Def 2\|\mathbf{X}\|_2^{\,3}/\sqrt{d_k}$.
\end{enumerate}
\end{proposition}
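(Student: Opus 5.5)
The forward claims follow directly from the definitions. For (a), expand $\mathbf{S}^h=\mathbf{X}\mathbf{W}_Q^h(\mathbf{W}_K^h)^\top\mathbf{X}^\top/\sqrt{d_k}$. Then multiply and divide by $\|\mathbf{W}_Q^h\|_F\|\mathbf{W}_K^h\|_F$, which is nonzero whenever the head is nontrivial; the degenerate case is vacuous. This gives \eqref{eq:logit_temperature} exactly, and the sharpness remark is just the softmax inverse-temperature interpretation. For \eqref{eq:output_scale}, apply the mixed inequality $\|\mathbf{A}\mathbf{B}\|_F\le\|\mathbf{A}\|_2\|\mathbf{B}\|_F$ (and its right-sided version $\|\mathbf{A}\mathbf{B}\|_F\le\|\mathbf{A}\|_F\|\mathbf{B}\|_2$) to the chain $\mathbf{A}^h\mathbf{X}\mathbf{W}_V^h\mathbf{W}_O^h$. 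The Frobenius norm lands on $\mathbf{X}$ and spectral norms on the rest.

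For (b), the plan is to write each gradient in closed form by matrix calculus and then apply the same mixed norm inequality, always putting the Frobenius norm on $\mathbf{G}$. For the value and output projections the gradients are $\partial\mathcal{L}/\partial\mathbf{W}_O^h=(\mathbf{O}^h)^\top\mathbf{G}=(\mathbf{W}_V^h)^\top\mathbf{X}^\top(\mathbf{A}^h)^\top\mathbf{G}$ and $\partial\mathcal{L}/\partial\mathbf{W}_V^h=\mathbf{X}^\top(\mathbf{A}^h)^\top\mathbf{G}(\mathbf{W}_O^h)^\top$. Submultiplicativity, together with the fact that transposes preserve spectral norm, then gives \eqref{eq:bound_O}--\eqref{eq:bound_V} immediately.

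For Q and K, backpropagate in three steps:
\begin{itemize}
\item $\partial\mathcal{L}/\partial\mathbf{A}^h=\mathbf{G}(\mathbf{W}_O^h)^\top(\mathbf{W}_V^h)^\top\mathbf{X}^\top$;
\item then through the row-wise softmax, $\partial\mathcal{L}/\partial\mathbf{S}^h=\mathcal{J}(\partial\mathcal{L}/\partial\mathbf{A}^h)$, where row $i$ is multiplied by the Jacobian $\mathrm{diag}(\mathbf{a}_i)-\mathbf{a}_i\mathbf{a}_i^\top$;
\item finally $\partial\mathcal{L}/\partial\mathbf{W}_Q^h=\mathbf{X}^\top(\partial\mathcal{L}/\partial\mathbf{S}^h)\mathbf{X}\mathbf{W}_K^h/\sqrt{d_k}$, and symmetrically for K with the transpose of $\partial\mathcal{L}/\partial\mathbf{S}^h$ and $\mathbf{W}_Q^h$.
\end{itemize}
Chaining the norm inequalities yields $\|\mathbf{X}\|_2^3\|\mathbf{W}_K^h\|_2\|\mathbf{W}_V^h\|_2\|\mathbf{W}_O^h\|_2\|\mathbf{G}\|_F/\sqrt{d_k}$ times the operator-norm constant of the softmax Jacobian map.

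The main obstacle is bounding that constant by $2$, which is where the factor in $C_X$ comes from. Because the Jacobian acts row by row, it suffices to bound each row's Jacobian in spectral norm and use $\|\mathcal{J}(\mathbf{Y})\|_F^2=\sum_i\|\mathbf{y}_i J_i\|_2^2\le\max_i\|J_i\|_2^2\|\mathbf{Y}\|_F^2$. The crude triangle bound $\|\mathrm{diag}(\mathbf{a})\|_2+\|\mathbf{a}\mathbf{a}^\top\|_2\le 1+\|\mathbf{a}\|_2^2\le 2$ uses only that $\mathbf{a}$ is a probability vector, which gives exactly $2$. (The sharper $1/2$ is true but unnecessary.) Combining these completes \eqref{eq:bound_Q}--\eqref{eq:bound_K}.
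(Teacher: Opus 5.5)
Your proposal is correct and follows the paper's proof essentially step for step. Both use the same logit factorization, the same mixed Frobenius/spectral submultiplicativity chain for the output bound and the $\mathbf{W}_O^h,\mathbf{W}_V^h$ gradients, and the same backpropagation through $\partial\mathcal{L}/\partial\mathbf{A}^h$, the row-wise softmax Jacobian, and $\mathbf{X}^\top(\partial\mathcal{L}/\partial\mathbf{S}^h)\mathbf{X}\mathbf{W}_K^h/\sqrt{d_k}$ for Q/K. The paper only asserts the softmax-Jacobian constant $2$, which you justify via the row-wise Frobenius reduction and $\|\mathrm{diag}(\mathbf{a})\|_2+\|\mathbf{a}\mathbf{a}^\top\|_2\le 2$; conversely, the paper grounds ``sharper'' in the monotone decrease of softmax entropy in $\beta$, which you leave as an interpretation.
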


\begin{proof}
The logit factorization follows by substituting the definition of $\widetilde{\mathbf{W}}^h$. The sharpness claim is the standard temperature-scaling property of softmax: for non-constant $\boldsymbol{\ell}$, the entropy of $\mathrm{softmax}(\beta\boldsymbol{\ell})$ decreases with $\beta>0$. The output bound \eqref{eq:output_scale} follows from $\mathbf{O}^h\mathbf{W}_O^h=\mathbf{A}^h\mathbf{X}\mathbf{W}_V^h\mathbf{W}_O^h$ and submultiplicativity ($\|MN\|_F\leq\|M\|_2\|N\|_F$ applied left-to-right, then $\|MN\|_F\leq\|M\|_F\|N\|_2$ on $\mathbf{X}\mathbf{W}_V^h$).

For the backward bounds, the chain rule gives
$\partial\mathcal{L}/\partial\mathbf{W}_O^h=(\mathbf{X}\mathbf{W}_V^h)^\top(\mathbf{A}^h)^\top\mathbf{G}$ and
$\partial\mathcal{L}/\partial\mathbf{W}_V^h=\mathbf{X}^\top(\mathbf{A}^h)^\top\mathbf{G}(\mathbf{W}_O^h)^\top$, which imply \eqref{eq:bound_O} and \eqref{eq:bound_V}.
For Q and K, first note that $\partial\mathcal{L}/\partial\mathbf{A}^h=\mathbf{G}(\mathbf{W}_O^h)^\top(\mathbf{W}_V^h)^\top\mathbf{X}^\top$, so $\|\partial\mathcal{L}/\partial\mathbf{A}^h\|_F\leq\|\mathbf{G}\|_F\|\mathbf{W}_O^h\|_2\|\mathbf{W}_V^h\|_2\|\mathbf{X}\|_2$. The row-wise softmax Jacobian has spectral norm at most $2$, giving $\|\partial\mathcal{L}/\partial\mathbf{S}^h\|_F\leq2\|\partial\mathcal{L}/\partial\mathbf{A}^h\|_F$. Combining this with $\partial\mathcal{L}/\partial\mathbf{W}_Q^h=\mathbf{X}^\top(\partial\mathcal{L}/\partial\mathbf{S}^h)\mathbf{X}\mathbf{W}_K^h/\sqrt{d_k}$ yields \eqref{eq:bound_Q}; \eqref{eq:bound_K} follows symmetrically.
\end{proof}

\begin{remark}[Implications for optimizer design]
\label{rmk:optimizer}
Proposition~\ref{prop:head_norm_coupling} shows that the per-head norms inherited from prior training modulate both attention behavior and gradient scale. Since these norms vary substantially across heads in trained models (Fig.\,\ref{fig:mh_analysis}-(b)), different heads naturally receive updates of different magnitudes. A whole-matrix spectral optimizer applies one Newton-Schulz orthogonalization to a concatenated projection matrix, which tends to equalize update scale across heads and mix head-specific directions. Per-head spectral filtering avoids this by filtering each head independently.
\end{remark}
\section{Detailed Training Setups for VLA and RLVR Experiments}
\label{sec:training_details}

In this section, we report the hyperparameter configurations for the VLA and RLVR experiments in Sec.\,\ref{sec: exp}. Within each setting, the three optimizer configurations (AdamW, Muon, and Pion) share identical training setups, hardware, and evaluation protocols; the only altered variable is the optimizer assignment. For Pion, we use Suppression-dominant high-pass NS schedules with $k_{\mathrm{s}}\geq3$ (equivalently, $k_{\mathrm{p}}\leq2$ under the fixed total $k=5$). \textbf{Table\,\ref{tab:vla_training_config}} lists the VLA training hyperparameters for VLA-Adapter \citep{wang2026vla} and VLANeXt \citep{wu2026vlanext} on LIBERO \citep{liu2023libero}, with VLANeXt additionally evaluated on the perturbed LIBERO-Plus split \citep{fei2025libero}; the \textit{Object} suite converges faster and is allocated fewer training steps. \textbf{Table\,\ref{tab:rlvr_training_config}} summarizes the RLVR hyperparameters, reused across both RL algorithms (GRPO/GMPO) and both model scales (Qwen3-1.7B/4B); only the prompt/response length, train batch, rollout group size, and total steps differ between MATH and GSM8K. \textbf{Table\,\ref{tab:real_robot_training_config}} summarizes the real-robot setup, where $\pi_{0.5}$ \citep{intelligence2025pi_} is finetuned under the DROID hardware platform \citep{khazatsky2025droid,wang2026remac} and evaluated on three grasp-and-place tasks.

\begin{table}[htbp]
\centering
\caption{Training hyperparameters for the VLA experiments on the LIBERO benchmark. The three optimizer configurations (i)--(iii) are applied identically to both models, and share all other hyperparameters listed in this table.}
\label{tab:vla_training_config}
\begin{tabular}{lcc}
\toprule
\textbf{Item} & \textbf{VLA-Adapter} & \textbf{VLANeXt} \\
\midrule
Backbone & Prismatic-Qwen2.5-0.5B & Qwen3-VL-2B-Instruct \\
Train dataset & LIBERO & LIBERO \\
Test dataset & LIBERO & LIBERO and LIBERO-Plus \\
Global batch size & $64$ & $256$ \\
Learning rate & $1 \times 10^{-4}$ & $1 \times 10^{-4}$ \\
Weight decay & $1 \times 10^{-2}$ & $1 \times 10^{-2}$ \\
Max steps (\textit{Object}) & $1{,}500$ & $4{,}000$ \\
Max steps (\textit{Spatial} / \textit{Goal} / \textit{Long}) & $15{,}000$ & $10{,}000$ \\
Compute & $8\,\times$ NVIDIA RTX A6000 & $8\,\times$ NVIDIA RTX A6000 \\
\midrule
\multicolumn{3}{l}{\textbf{Optimizer configurations}\,$^{\dagger}$ (applied to action (A), vision (V), and language (L) modules):} \\
\multicolumn{3}{l}{\quad (i)~~\textbf{AdamW} on all modules.} \\
\multicolumn{3}{l}{\quad (ii)~\textbf{Muon} on the 2D matrices of A, V, and L; AdamW on all remaining parameters.} \\
\multicolumn{3}{l}{\quad (iii) \textbf{Pion} on the 2D matrices of A, \textbf{Muon} on those of V and L; AdamW elsewhere.} \\
\multicolumn{3}{l}{\footnotesize $^{\dagger}$ The 2D weight matrices exclude token embeddings and the output (LM-head) layer.} \\
\bottomrule
\end{tabular}
\end{table}

\begin{table}[htbp]
\centering
\caption{Training and rollout hyperparameters for the RLVR experiments. The three optimizer configurations (i)--(iii) are reused across the two RL algorithms (GRPO and GMPO) and the two model scales (Qwen3-1.7B and Qwen3-4B), and share all other hyperparameters listed in this table within each benchmark.}
\label{tab:rlvr_training_config}
\begin{tabular}{lcc}
\toprule
\textbf{Item} & \textbf{MATH} & \textbf{GSM8K} \\
\midrule
Base model & Qwen3-1.7B and Qwen3-4B & Qwen3-1.7B and Qwen3-4B \\
Algorithm & GRPO and GMPO & GRPO and GMPO \\
Train dataset & MATH levels 3--5 & GSM8K (train split) \\
Test dataset & MATH500 & GSM8K (test split) \\
Max prompt / response length & $1{,}024$ / $3{,}000$ & $512$ / $1{,}024$ \\
Train batch (prompts) & $128$ & $1{,}024$ \\
Rollout group size $n$ & $8$ & $5$ \\
Rollout temperature / Top-$p$ & $1.0$ / $1.0$ & $1.0$ / $1.0$ \\
Learning rate & $1 \times 10^{-6}$ & $1 \times 10^{-6}$ \\
Total training steps & $80$ & $40$ \\
Compute & $2\,\times$ NVIDIA H100 & $2\,\times$ NVIDIA H100 \\
\midrule
\multicolumn{3}{l}{\textbf{Optimizer configurations}\,$^{\dagger}$:} \\
\multicolumn{3}{l}{\quad (i)~~\textbf{AdamW} on all parameters.} \\
\multicolumn{3}{l}{\quad (ii)~\textbf{Muon} on all 2D weight matrices; AdamW elsewhere.} \\
\multicolumn{3}{l}{\quad (iii) \textbf{Pion} (per-head mode) on all 2D weight matrices; AdamW elsewhere.} \\
\multicolumn{3}{l}{\footnotesize $^{\dagger}$ The 2D weight matrices exclude token embeddings and the output (LM-head) layer.} \\
\bottomrule
\end{tabular}
\end{table}

\begin{table}[htbp]
\centering
\caption{Hardware, training, and rollout configuration for the real-robot evaluation. The three optimizer configurations (i)--(iii) share all other settings listed in this table; the only altered variable is the optimizer assignment.}
\label{tab:real_robot_training_config}
\begin{tabular}{lc}
\toprule
\textbf{Item} & \textbf{Real-robot ($\pi_{0.5}$ on three grasp-and-place tasks)} \\
\midrule
Backbone VLA & $\pi_{0.5}$ \\
Robot platform & Franka Research 3 (7-DoF) \\
Hardware setup & DROID setup\\
Cameras (input) & one third-view camera $+$ one wrist-mounted camera \\
Tasks & \textit{Cucumber\,$\to$\,Plate}, \textit{Cube\,$\to$\,Plate}, \textit{Cube\,$\to$\,Bowl} \\
Demonstrations & $200$ teleoperated trajectories \\
Total training steps & $20{,}000$ \\
Trials per (optimizer, task) & $30$ (randomized initial pose), $\leq 300$ control steps each \\
Evaluation metric & trial-level success rate (\#successes / 30) \\
\midrule
\multicolumn{2}{l}{\textbf{Optimizer configurations}\,$^{\dagger}$ (applied to action (A), vision (V), and language (L) modules):} \\
\multicolumn{2}{l}{\quad (i)~~\textbf{AdamW} on all parameters.} \\
\multicolumn{2}{l}{\quad (ii)~\textbf{Muon} on the 2D matrices of A, V, and L; AdamW on all remaining parameters.} \\
\multicolumn{2}{l}{\quad (iii) \textbf{Pion} on the 2D matrices of A, \textbf{Muon} on those of V and L; AdamW elsewhere.} \\
\multicolumn{2}{l}{\footnotesize $^{\dagger}$ The 2D weight matrices exclude token embeddings and the output (LM-head) layer.} \\
\bottomrule
\end{tabular}
\end{table}

\clearpage
\newpage

\section{Qualitative rollouts}\label{sec: qualitative_examples}
\subsection{LIBERO Object}\label{sec: examples_object}

\vspace{-1mm}
\noindent\includegraphics[width=0.9\textwidth]{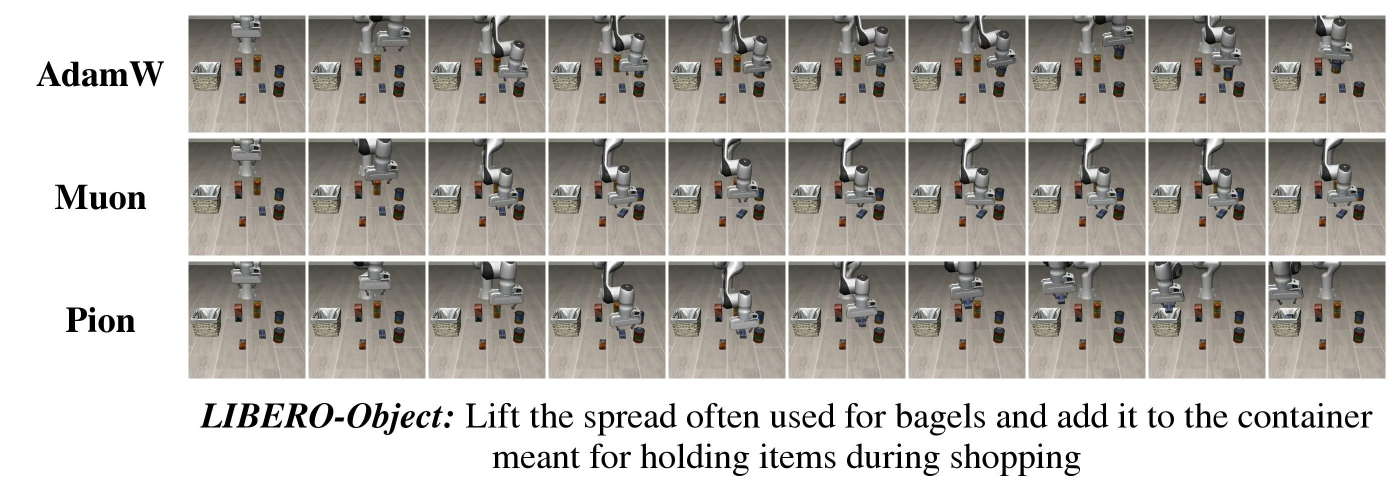}
\vspace{-1mm}

\vspace{-1mm}
\noindent\includegraphics[width=0.9\textwidth]{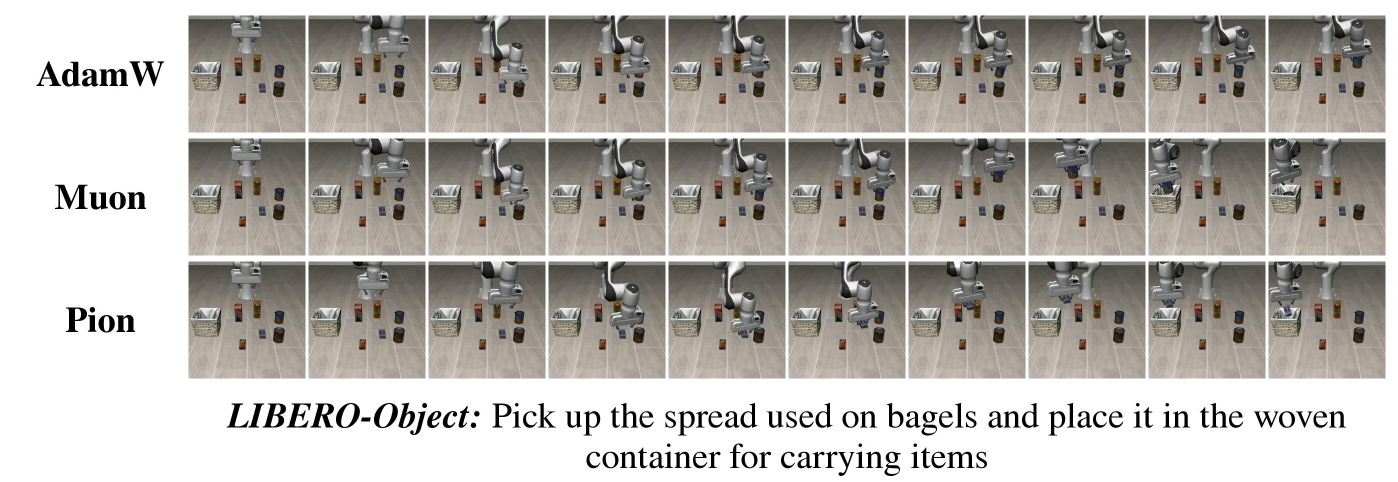}
\vspace{-1mm}

\vspace{-1mm}
\noindent\includegraphics[width=0.9\textwidth]{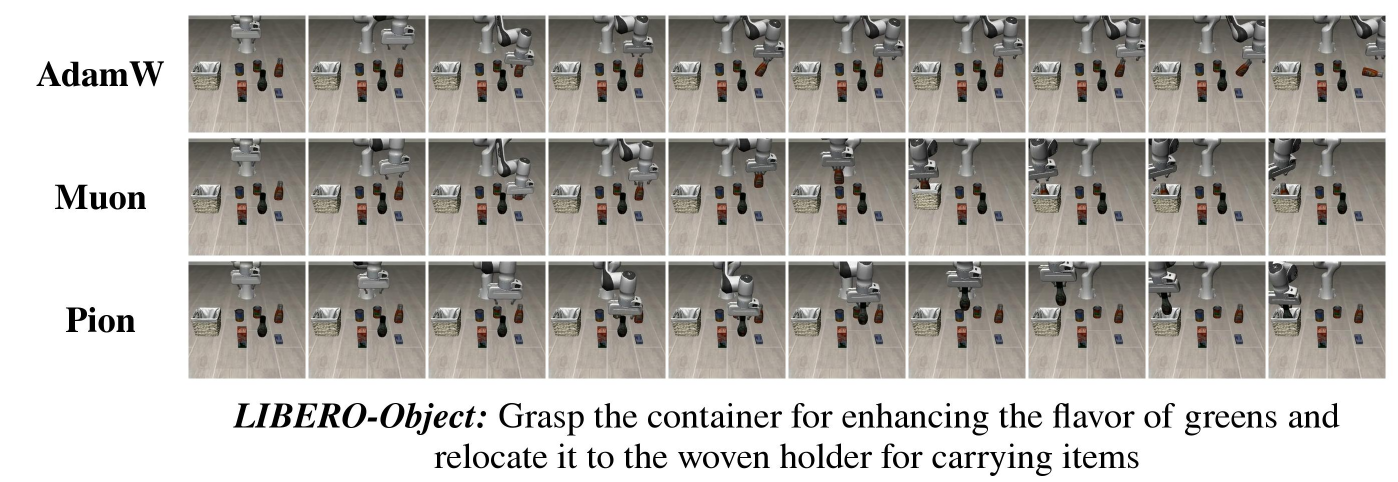}
\vspace{-1mm}

\vspace{-1mm}
\noindent\includegraphics[width=0.9\textwidth]{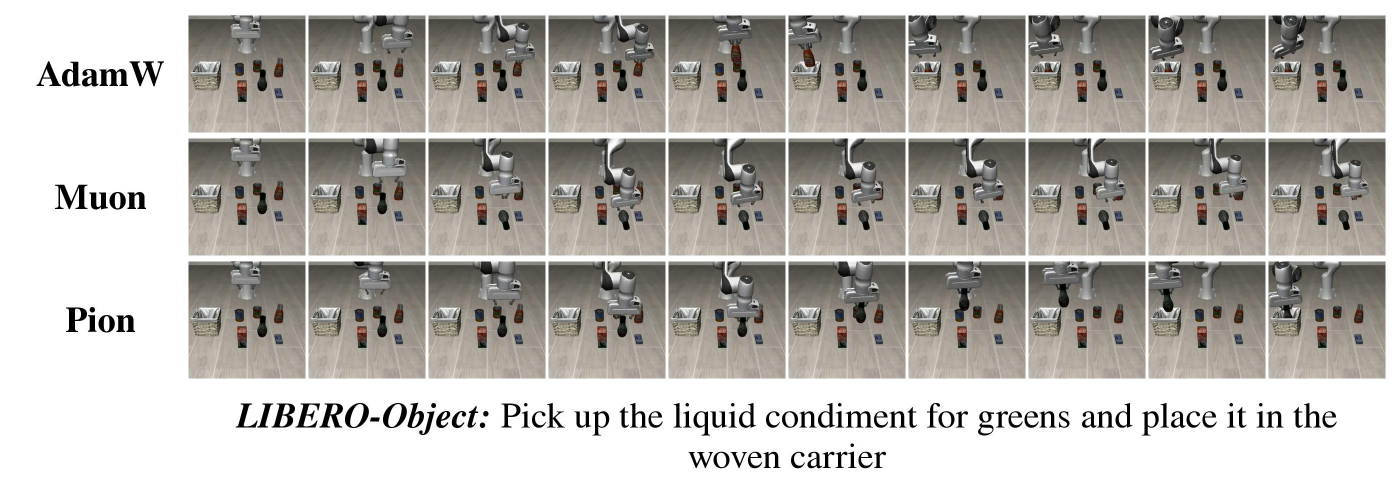}
\vspace{-1mm}

\vspace{-1mm}
\noindent\includegraphics[width=0.9\textwidth]{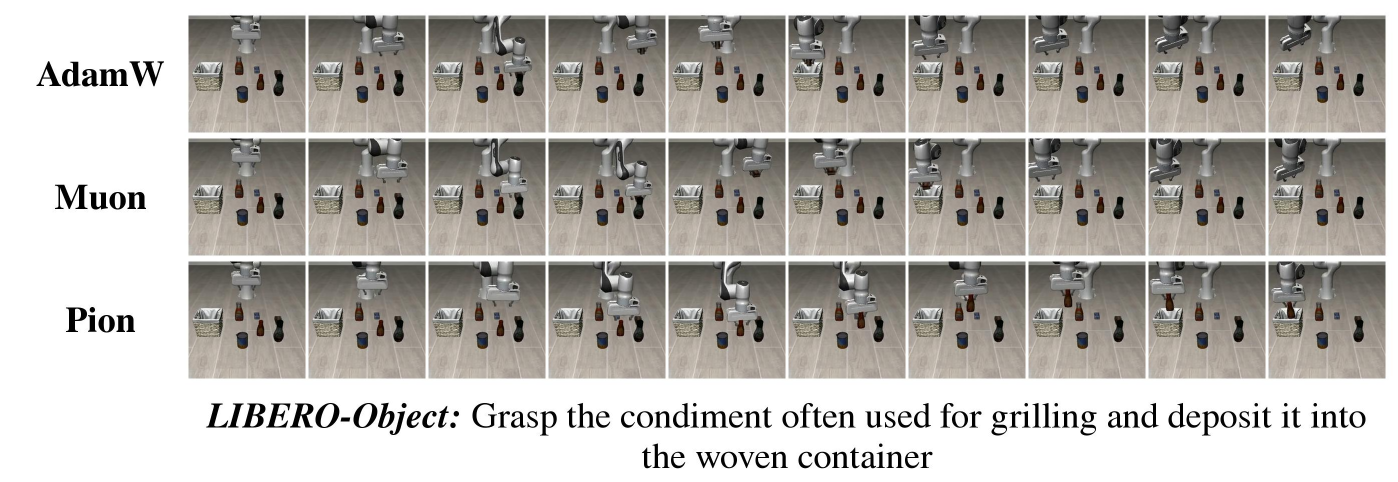}
\vspace{-1mm}

\vspace{-1mm}
\noindent\includegraphics[width=0.9\textwidth]{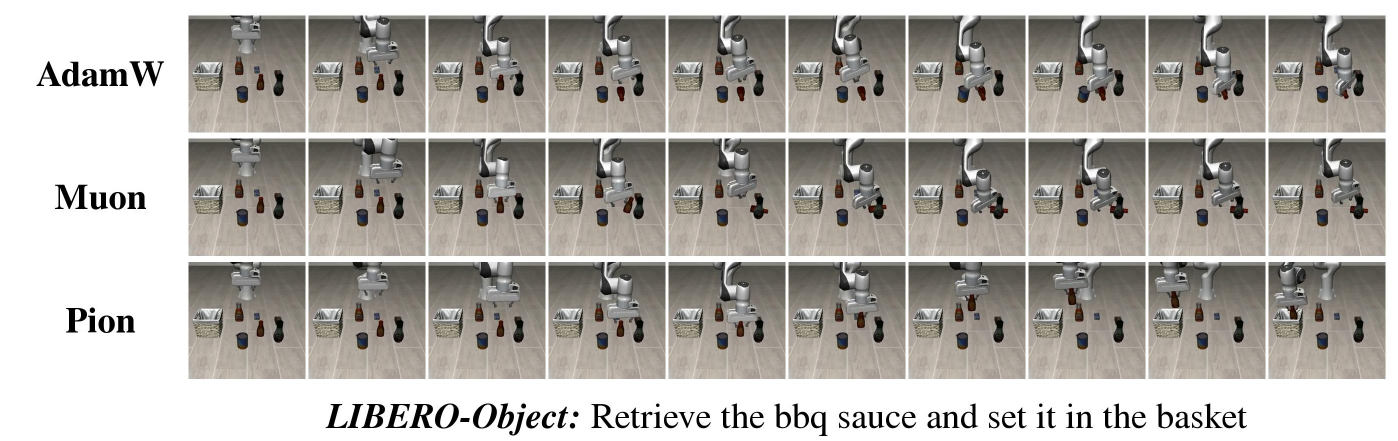}
\vspace{-1mm}

\vspace{-1mm}
\noindent\includegraphics[width=0.9\textwidth]{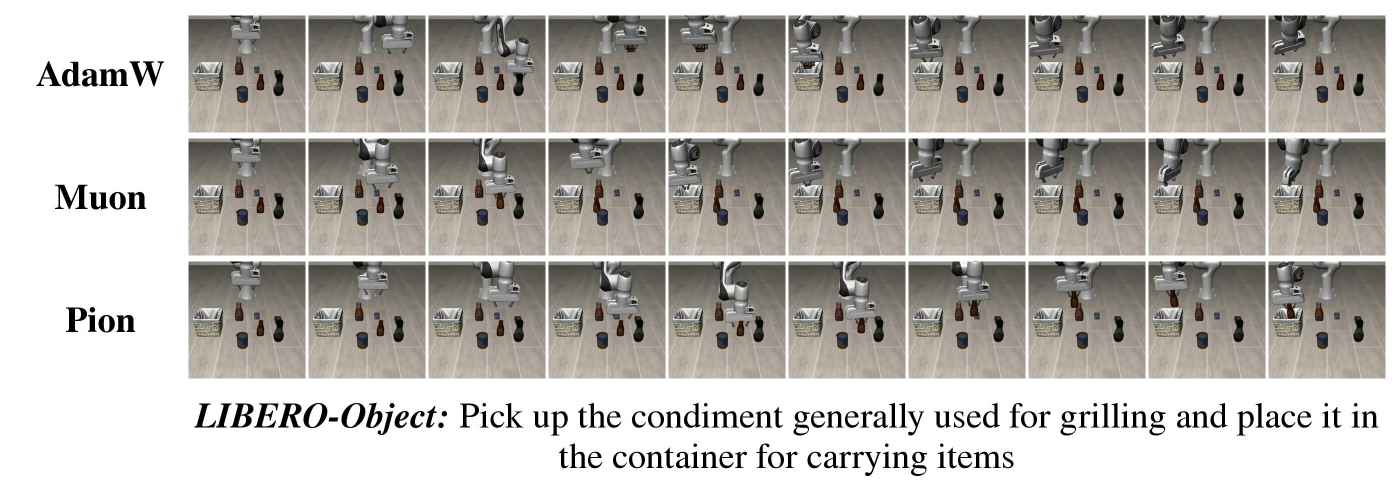}
\vspace{-1mm}

\vspace{-1mm}
\noindent\includegraphics[width=0.9\textwidth]{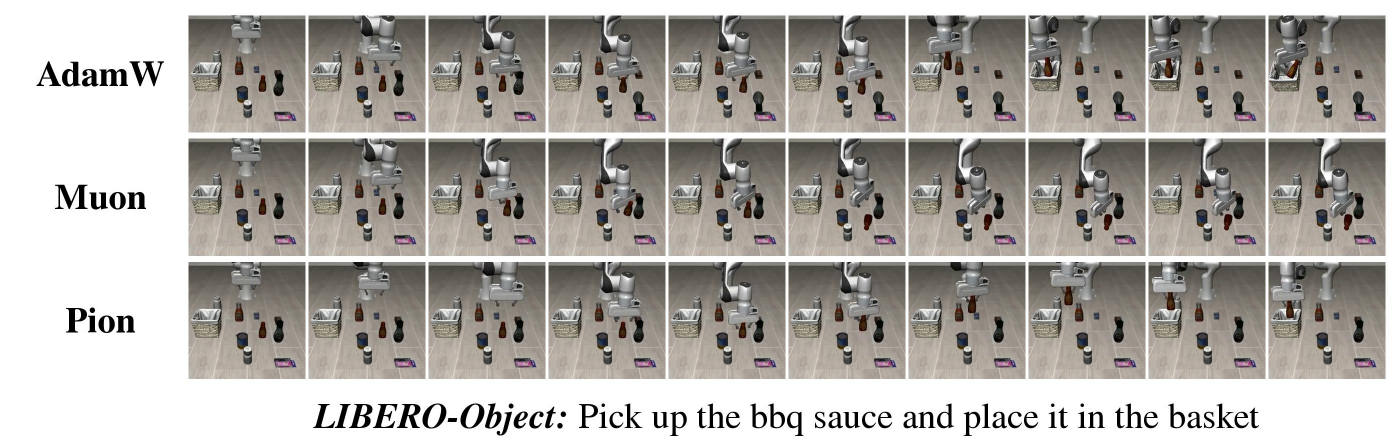}
\vspace{-1mm}

\clearpage

\subsection{LIBERO Spatial}\label{sec: examples_spatial}

\vspace{-1mm}
\noindent\includegraphics[width=0.9\textwidth]{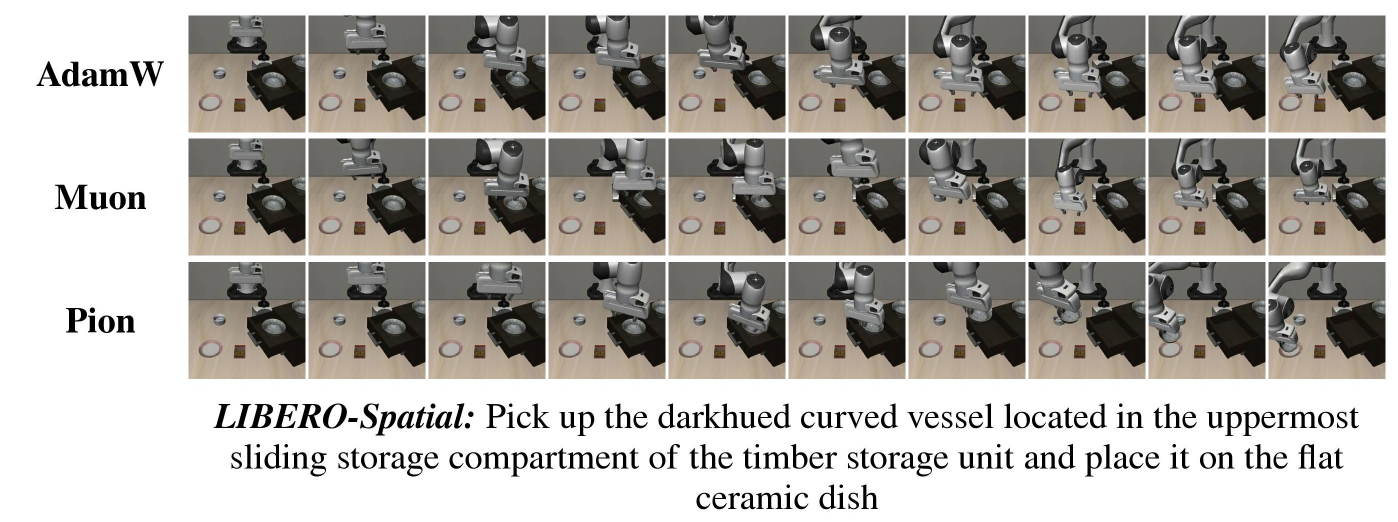}
\vspace{-1mm}

\vspace{-1mm}
\noindent\includegraphics[width=0.9\textwidth]{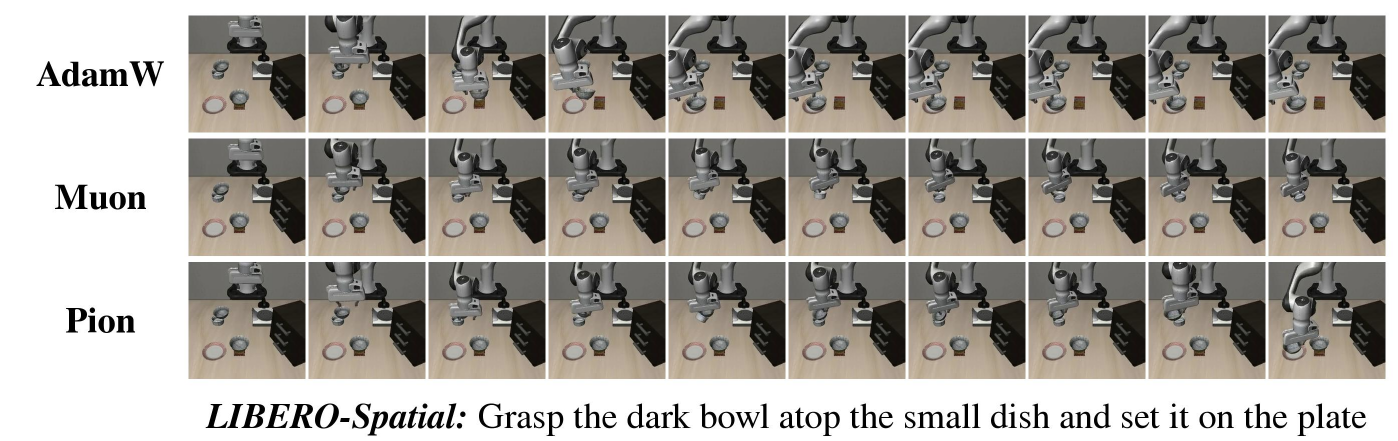}
\vspace{-1mm}

\vspace{-1mm}
\noindent\includegraphics[width=0.9\textwidth]{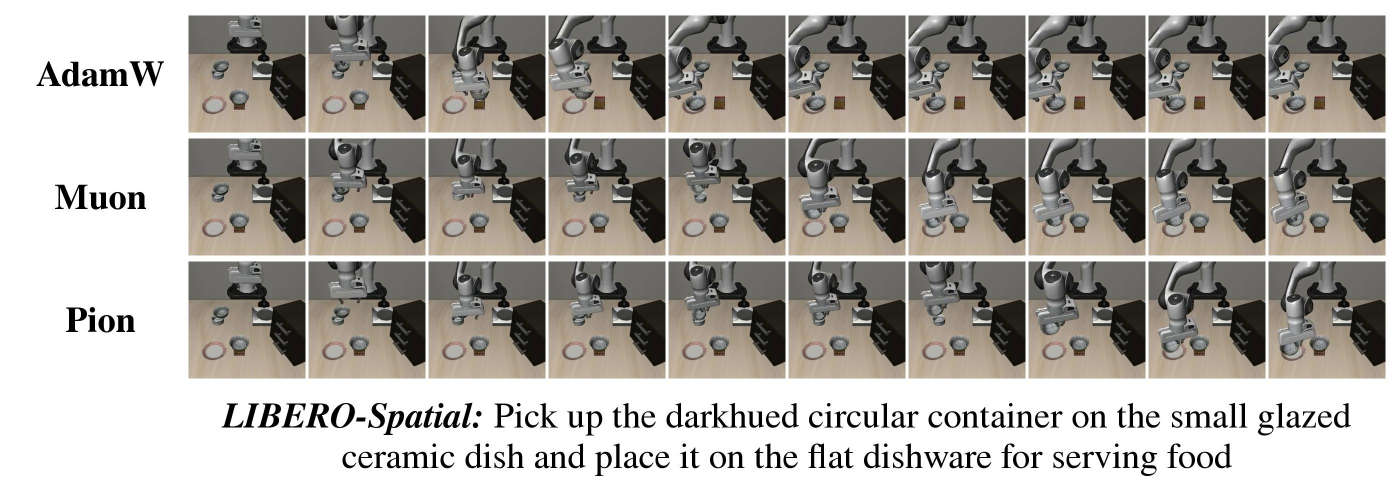}
\vspace{-1mm}

\vspace{-1mm}
\noindent\includegraphics[width=0.9\textwidth]{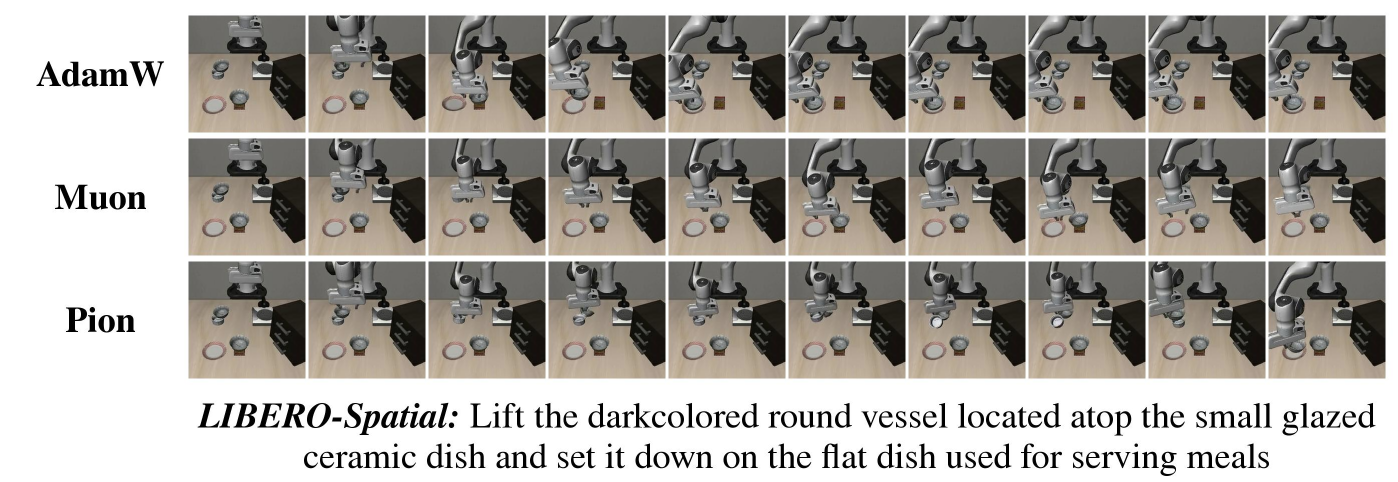}
\vspace{-1mm}

\vspace{-1mm}
\noindent\includegraphics[width=0.9\textwidth]{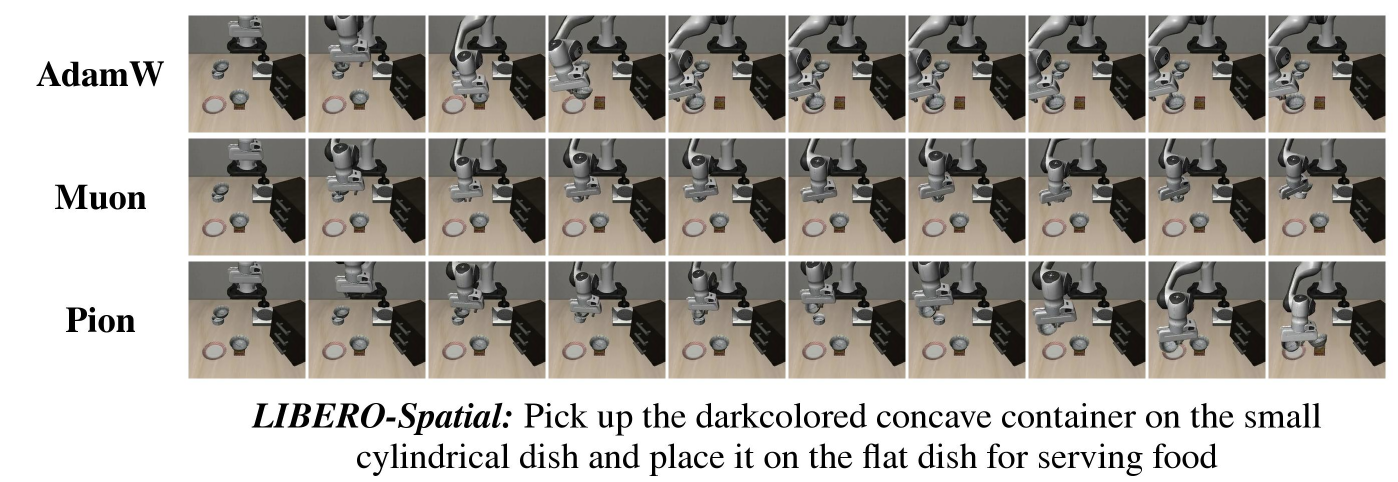}
\vspace{-1mm}

\vspace{-1mm}
\noindent\includegraphics[width=0.9\textwidth]{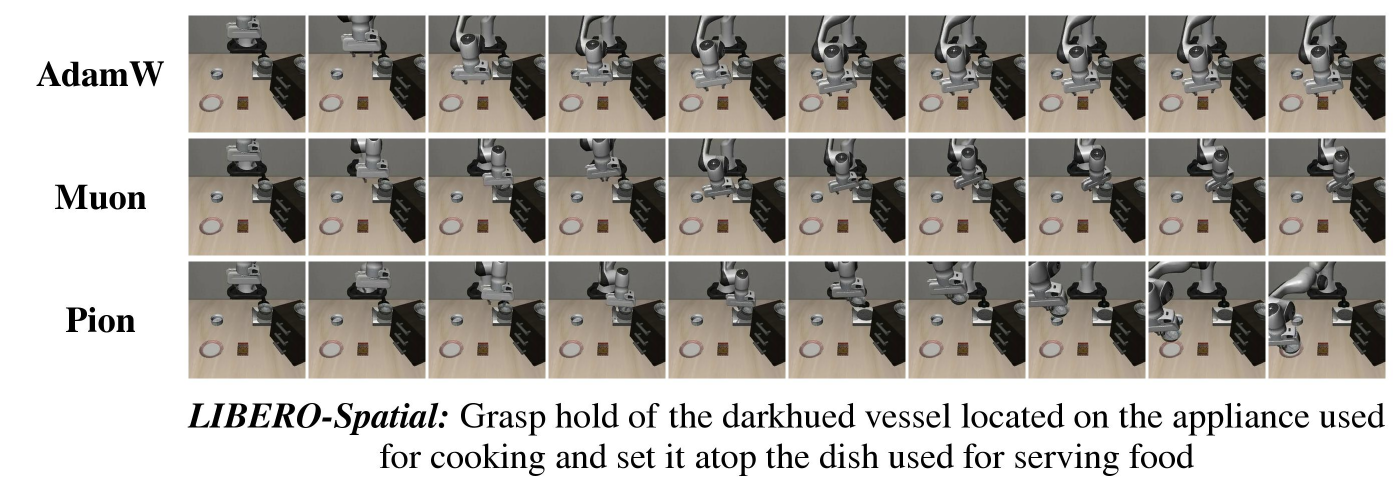}
\vspace{-1mm}

\vspace{-1mm}
\noindent\includegraphics[width=0.9\textwidth]{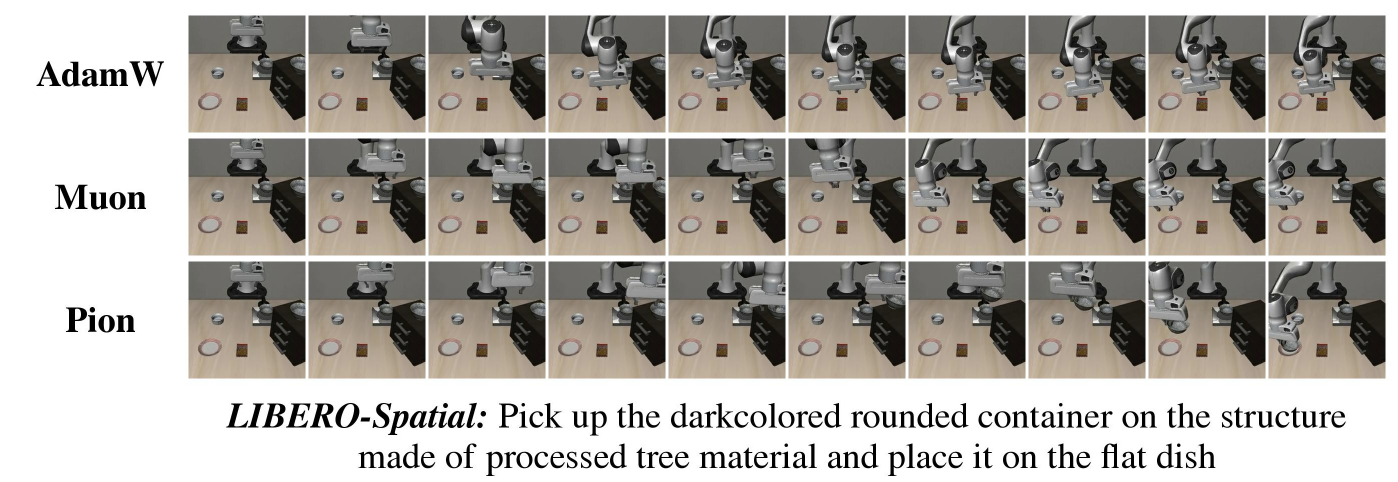}
\vspace{-1mm}

\vspace{-1mm}
\noindent\includegraphics[width=0.9\textwidth]{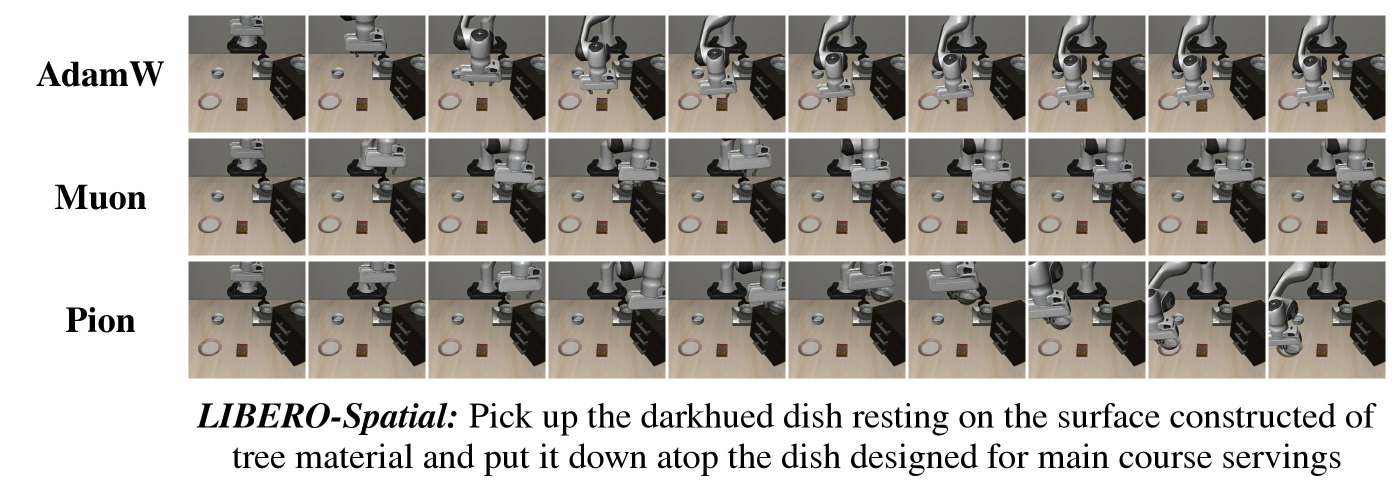}
\vspace{-1mm}

\clearpage

\subsection{LIBERO Goal}\label{sec: examples_goal}

\vspace{-1mm}
\noindent\includegraphics[width=0.9\textwidth]{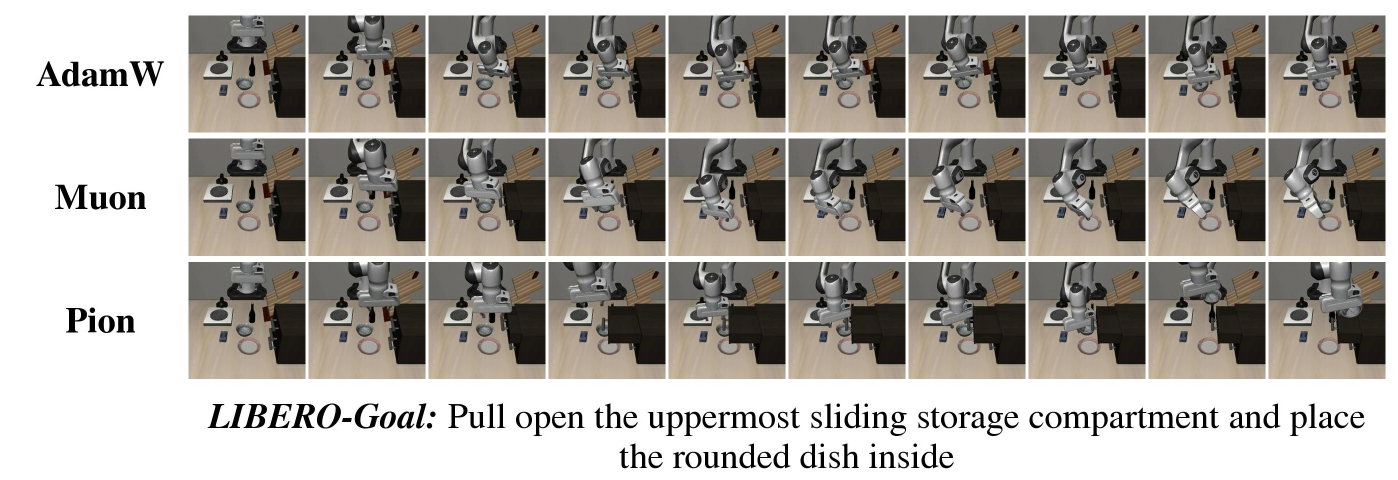}
\vspace{-1mm}

\vspace{-1mm}
\noindent\includegraphics[width=0.9\textwidth]{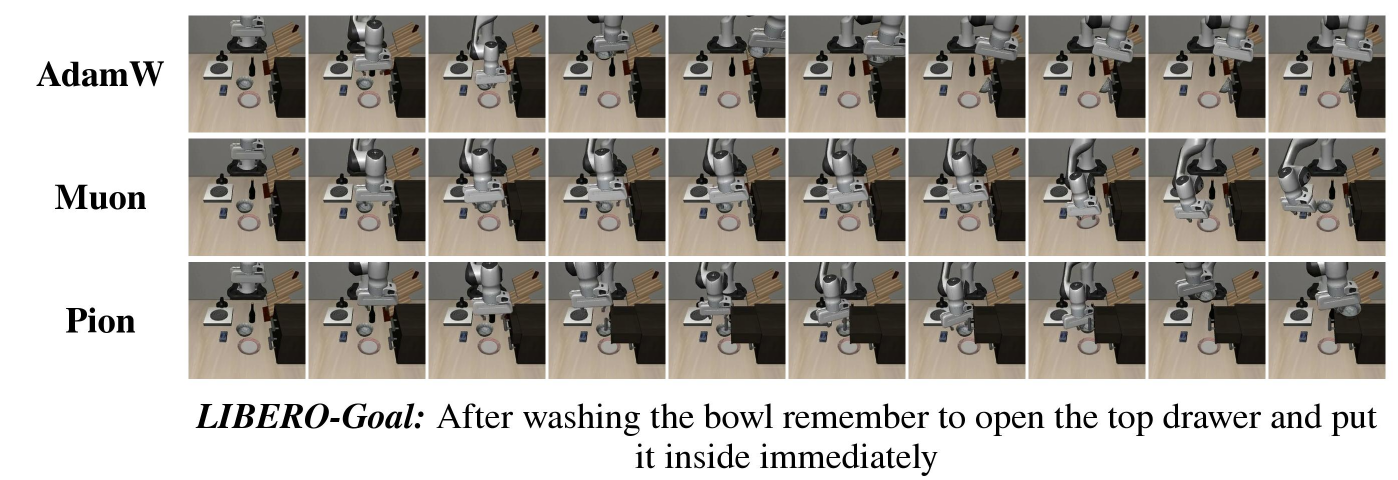}
\vspace{-1mm}

\vspace{-1mm}
\noindent\includegraphics[width=0.9\textwidth]{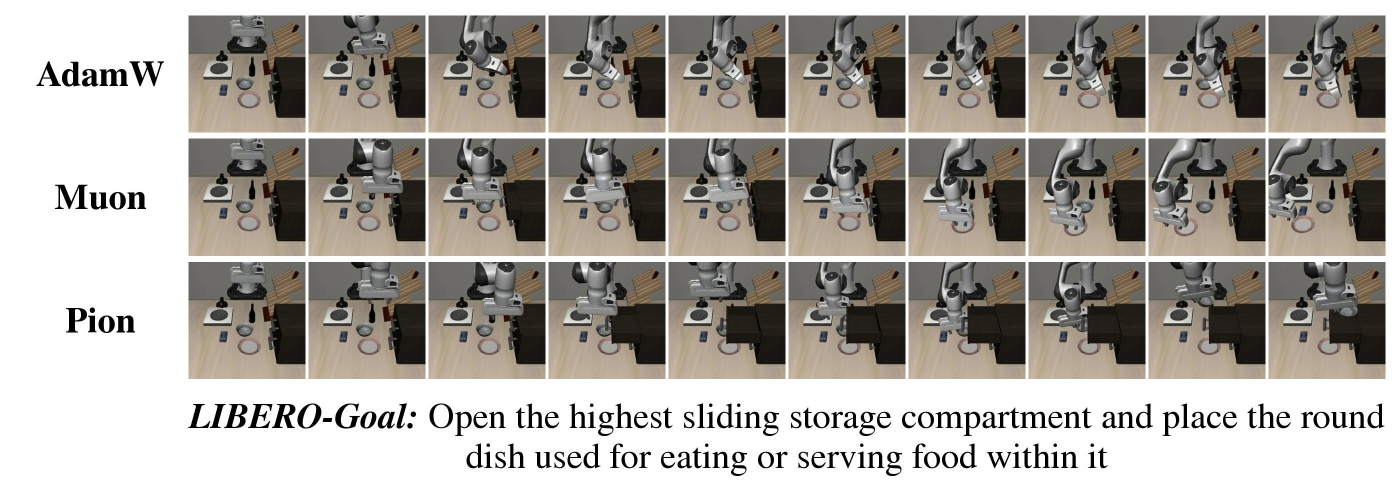}
\vspace{-1mm}

\vspace{-1mm}
\noindent\includegraphics[width=0.9\textwidth]{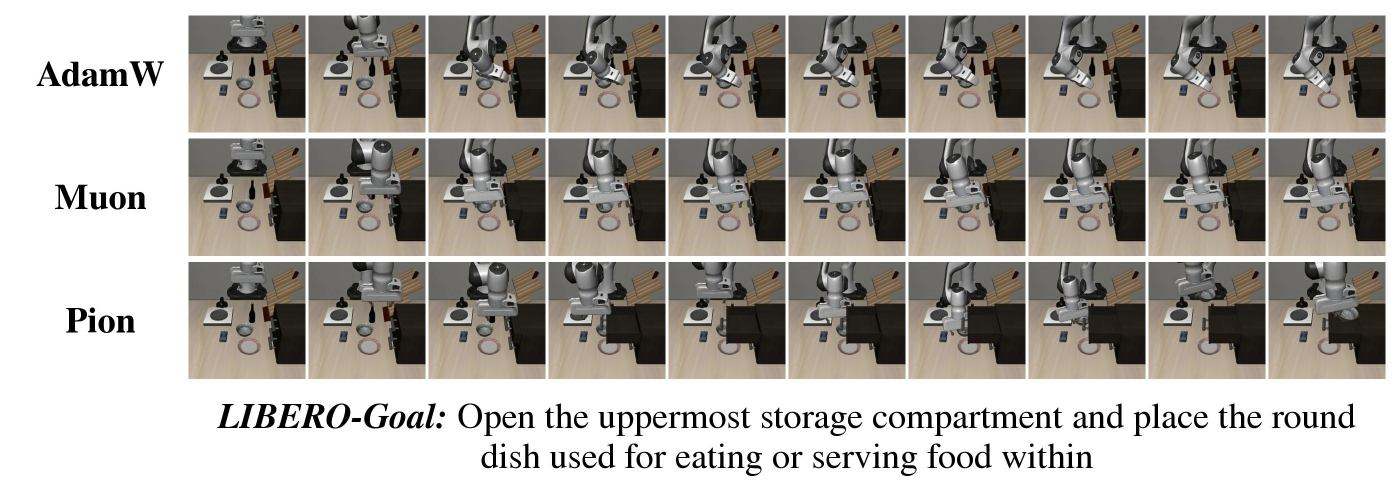}
\vspace{-1mm}

\vspace{-1mm}
\noindent\includegraphics[width=0.9\textwidth]{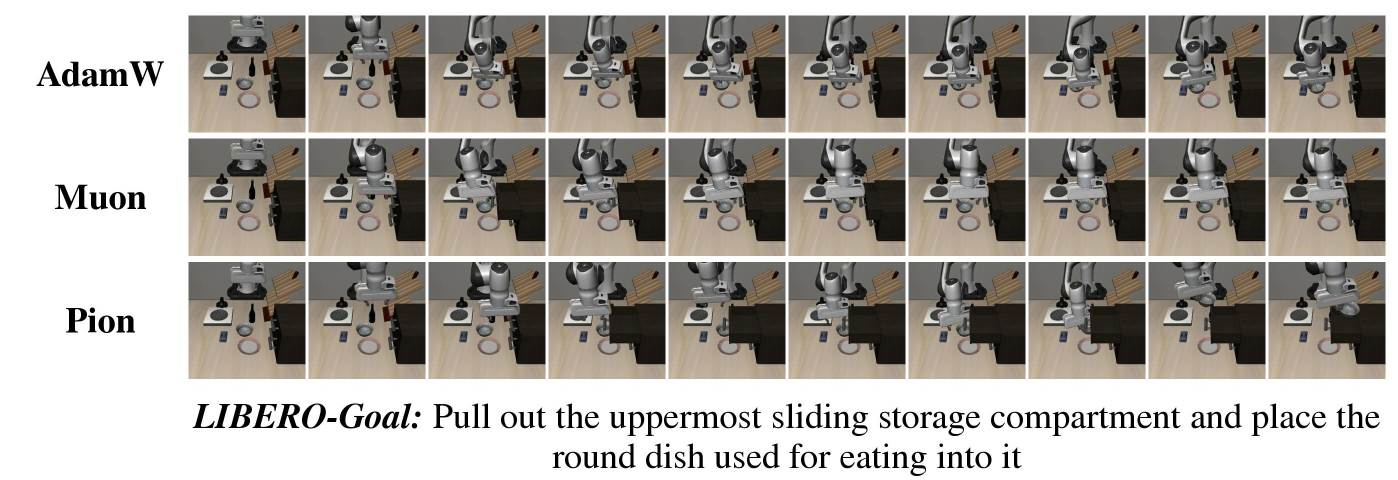}
\vspace{-1mm}

\vspace{-1mm}
\noindent\includegraphics[width=0.9\textwidth]{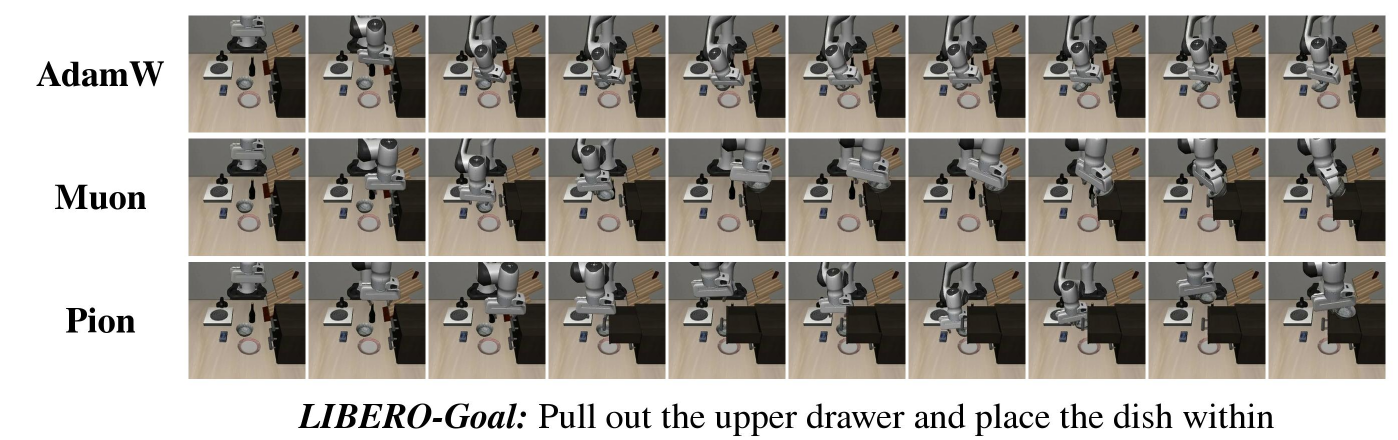}
\vspace{-1mm}

\vspace{-1mm}
\noindent\includegraphics[width=0.9\textwidth]{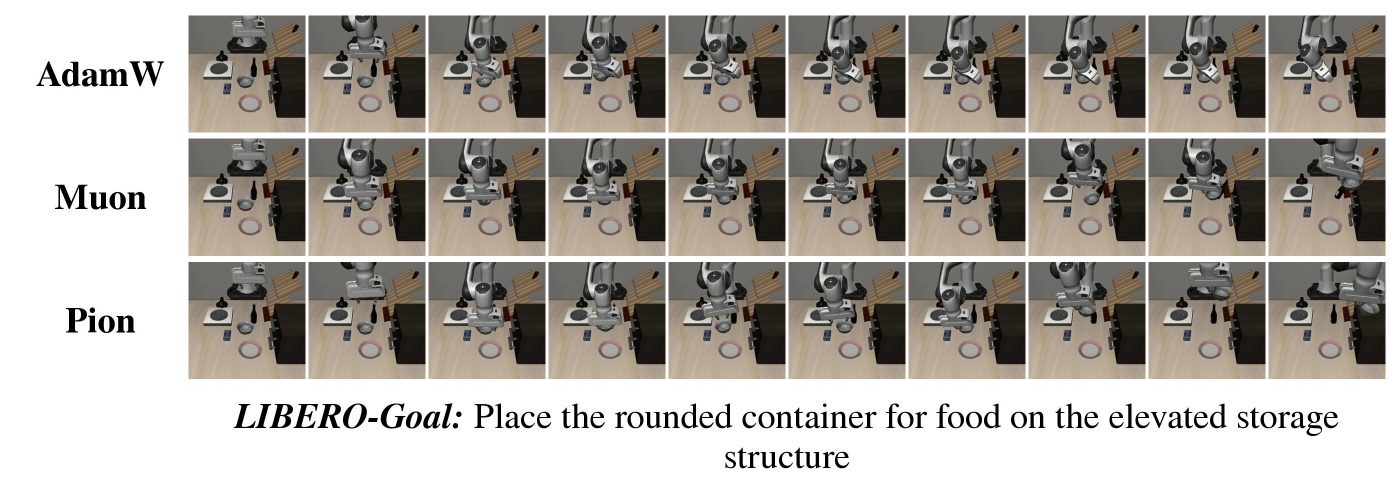}
\vspace{-1mm}

\vspace{-1mm}
\noindent\includegraphics[width=0.9\textwidth]{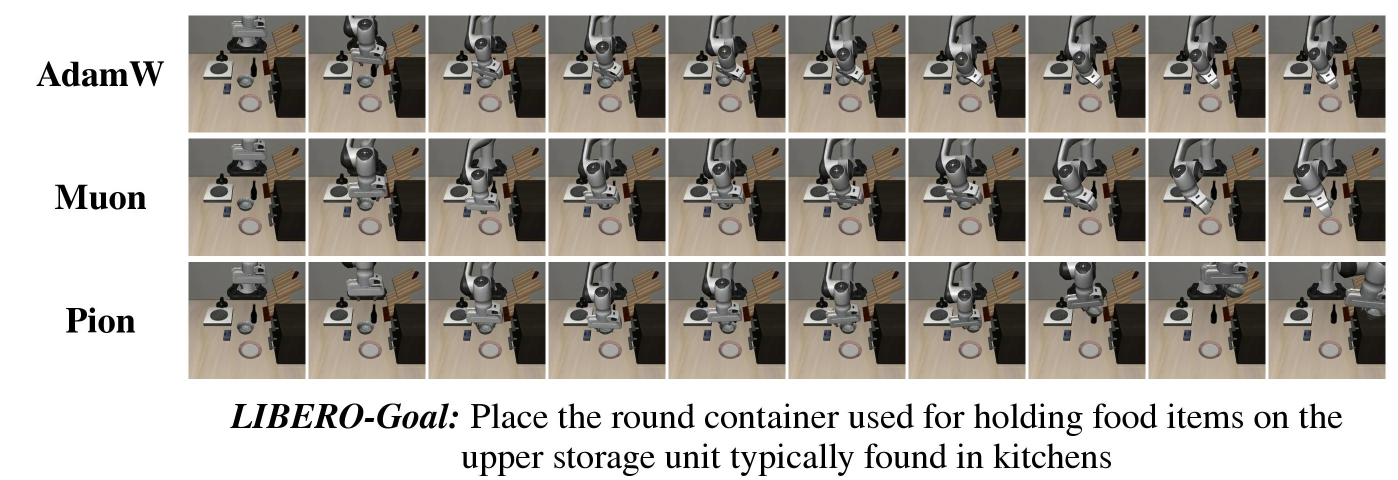}
\vspace{-1mm}

\clearpage

\subsection{LIBERO Long}\label{sec: examples_long}

\vspace{-1mm}
\noindent\includegraphics[width=0.9\textwidth]{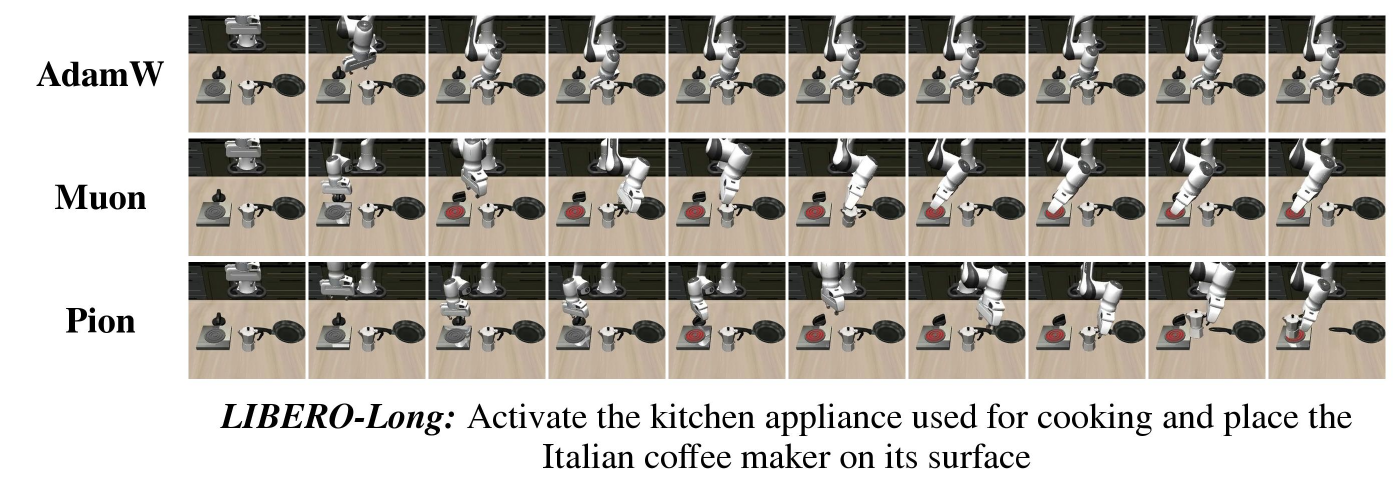}
\vspace{-1mm}

\vspace{-1mm}
\noindent\includegraphics[width=0.9\textwidth]{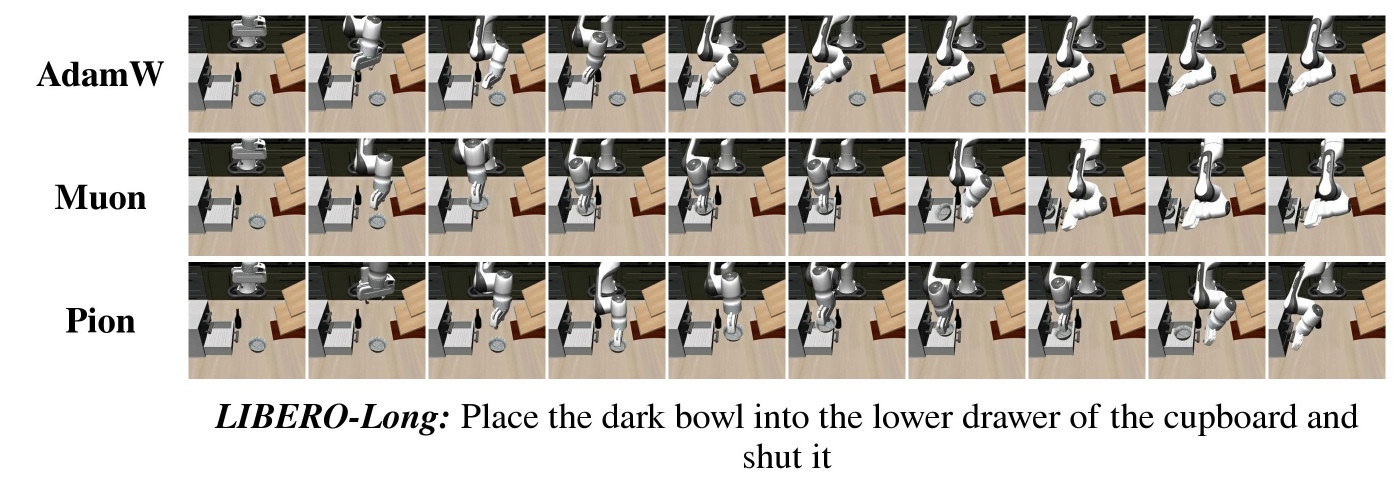}
\vspace{-1mm}

\vspace{-1mm}
\noindent\includegraphics[width=0.9\textwidth]{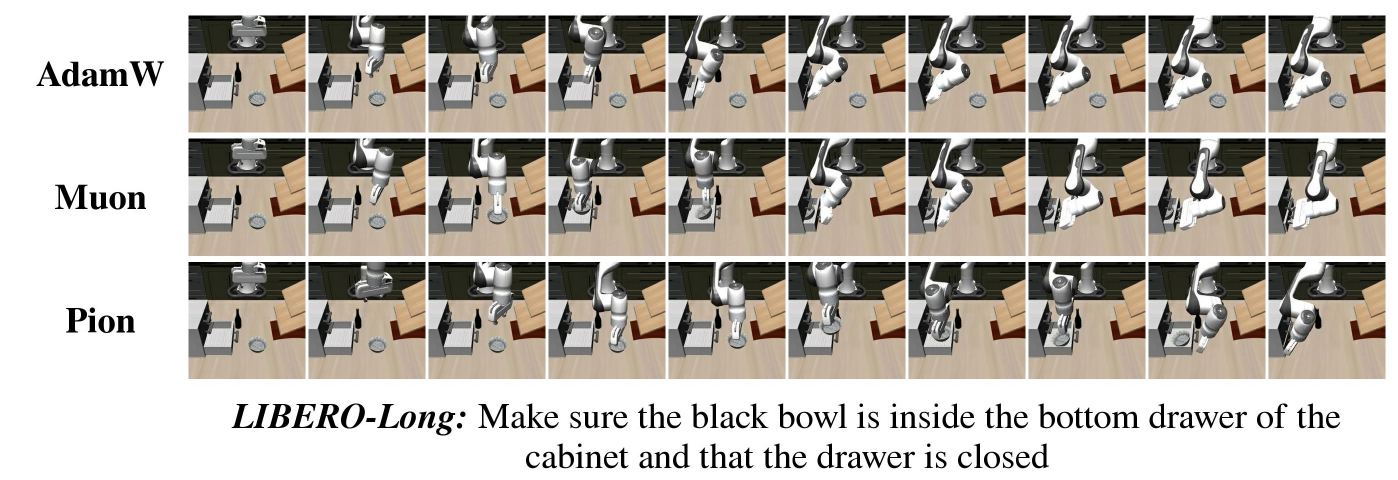}
\vspace{-1mm}

\vspace{-1mm}
\noindent\includegraphics[width=0.9\textwidth]{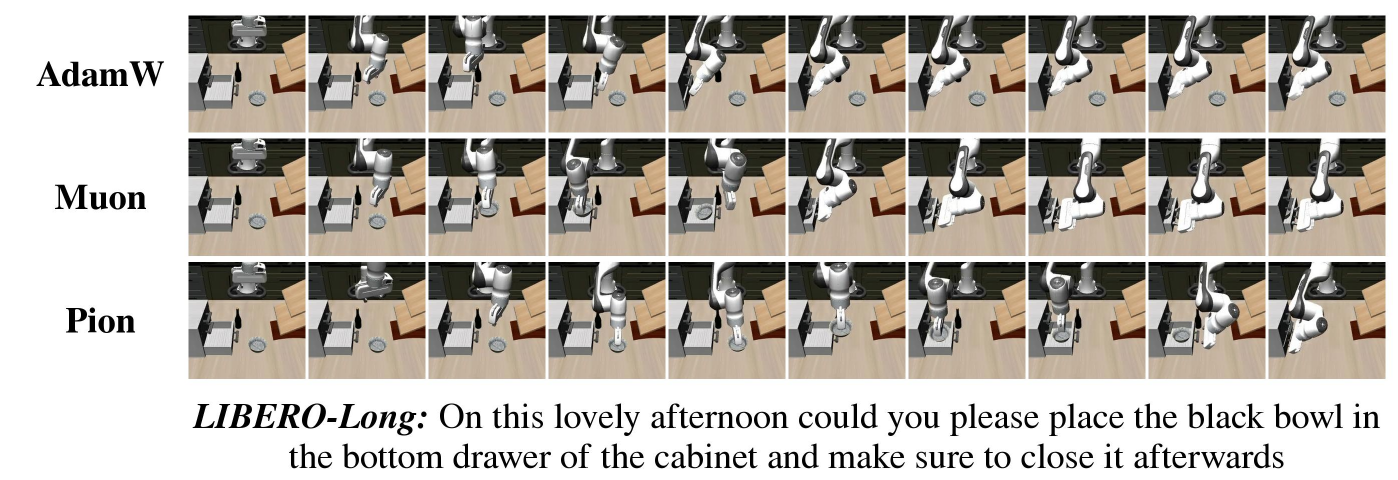}
\vspace{-1mm}

\vspace{-1mm}
\noindent\includegraphics[width=0.9\textwidth]{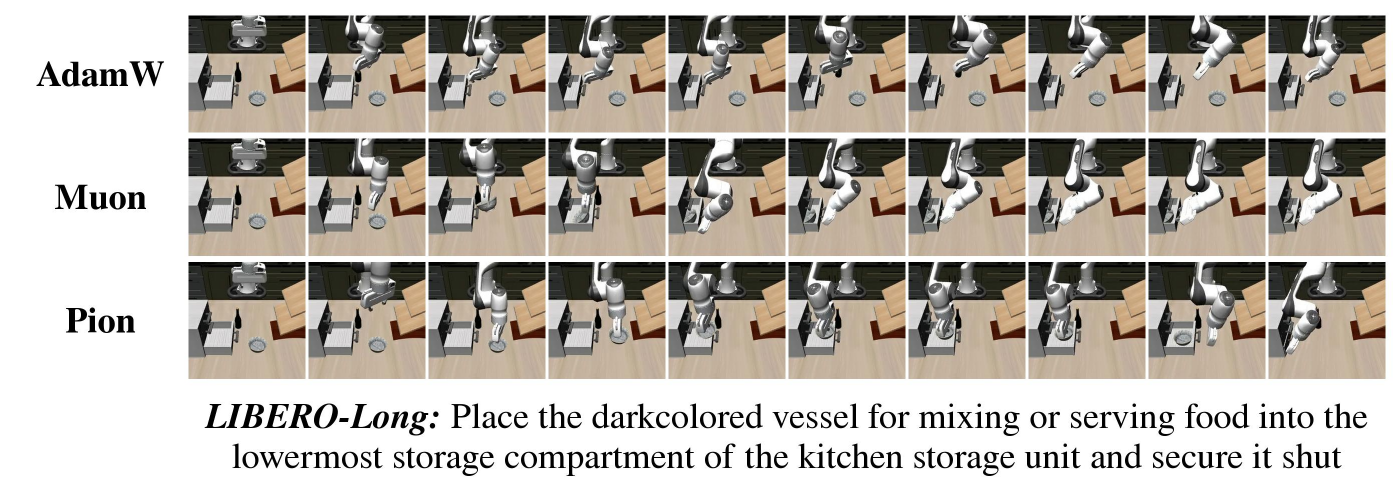}
\vspace{-1mm}

\vspace{-1mm}
\noindent\includegraphics[width=0.9\textwidth]{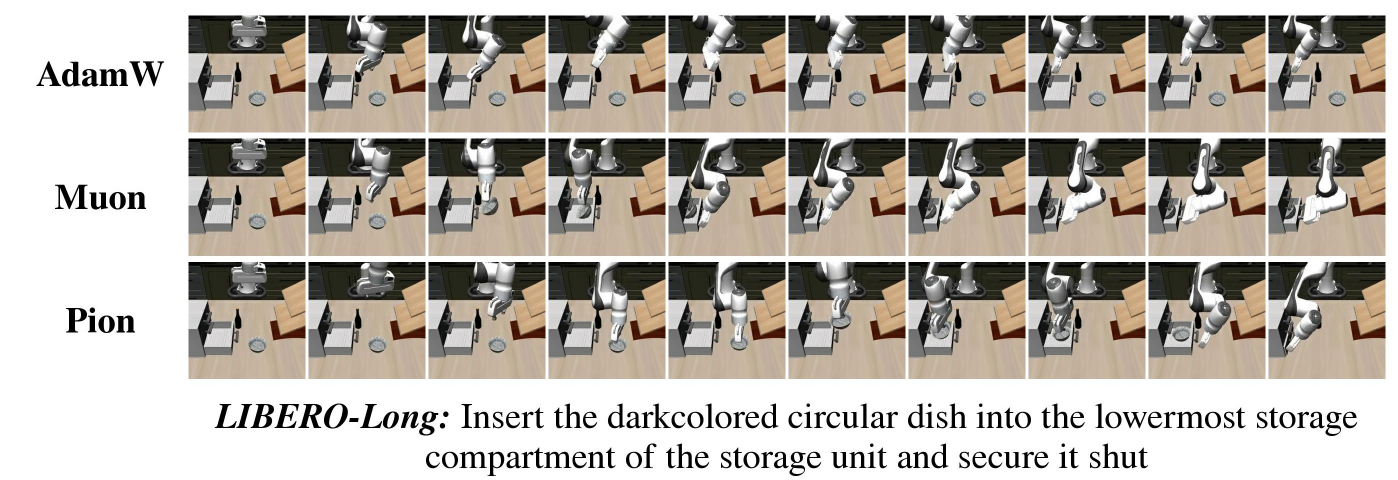}
\vspace{-1mm}

\vspace{-1mm}
\noindent\includegraphics[width=0.9\textwidth]{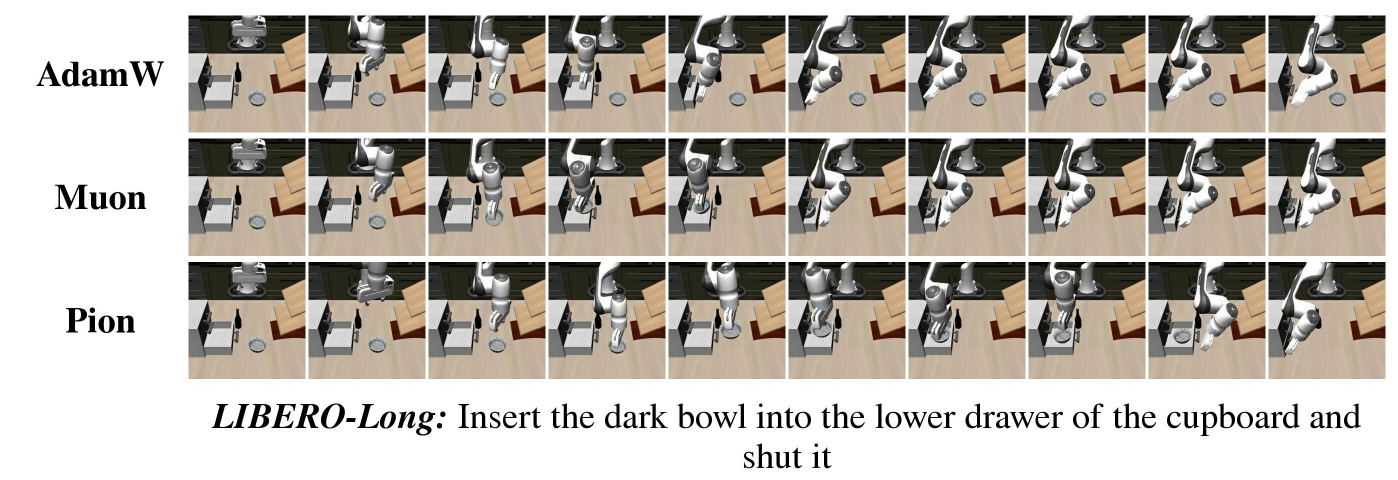}
\vspace{-1mm}

\vspace{-1mm}
\noindent\includegraphics[width=0.9\textwidth]{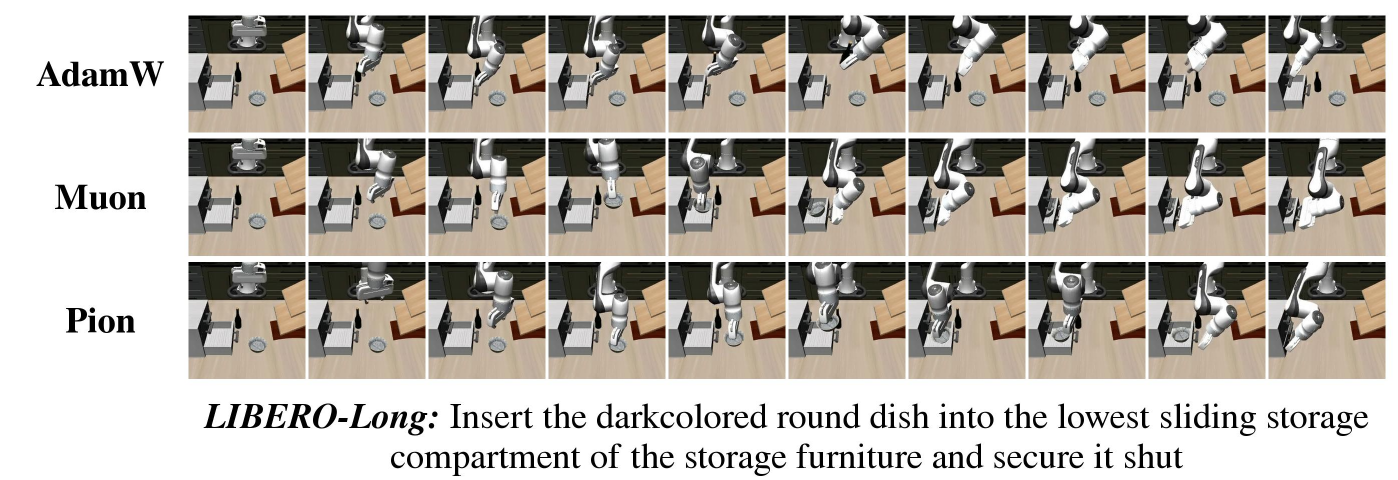}
\vspace{-1mm}

\clearpage

\section{Visualization of Real-Robot Rollouts}
\label{sec:real_robot}

\textbf{Table\,\ref{tab:real_robot_rollouts}} compares a single rollout of $\pi_{0.5}$ trained with AdamW, Muon, and Pion on each of the three tasks (\textit{Cucumber\,$\to$\,Plate}, \textit{Cube\,$\to$\,Plate}, \textit{Cube\,$\to$\,Bowl}, top to bottom). Each row shows $6$ frames uniformly sampled along that rollout, from approach to placement.

\emph{Cucumber\,$\to$\,Plate} (Table\,\ref{tab:real_robot_rollouts}, top): AdamW repeatedly attempts to grasp the cucumber but never lifts it off the table (frame $5$); Muon grasps it but opens the gripper prematurely, dropping the cucumber mid\mbox{-}transport (frame $3$); Pion grasps and places cleanly. \emph{Cube\,$\to$\,Plate} (Table\,\ref{tab:real_robot_rollouts}, middle): both AdamW and Muon open the gripper prematurely before reaching the plate (frame $3$ in either row), so the cube is released mid\mbox{-}air rather than on the plate, while Pion grasps and places the cube accurately. \emph{Cube\,$\to$\,Bowl} (Table\,\ref{tab:real_robot_rollouts}, bottom): on the hardest task, AdamW lifts the cube but not high enough to clear the rim of the bowl (frame $3$), and Muon misaligns the gripper with the cube and fails to establish a stable grasp (frame $3$); Pion deposits the cube inside the bowl, corroborating the quantitative gains in Table\,\ref{tab:real_robot}.

\begin{table*}[htbp]
    \centering
    \caption{Real-robot rollouts of $\pi_{0.5}$ trained with AdamW, Muon, and Pion on the three grasp-and-place tasks. Each row shows $6$ frames uniformly sampled along a single rollout, from approach to placement. The natural-language task prompt is shown above each task block (in gray).}
    \label{tab:real_robot_rollouts}
    \setlength{\tabcolsep}{3pt}
    \renewcommand{\arraystretch}{1.15}
    \small
    \begin{tabular}{@{}>{\centering\arraybackslash}m{0.10\textwidth}@{\hspace{4pt}}*{6}{>{\centering\arraybackslash}m{0.135\textwidth}}@{}}
    \toprule
    \multirow{2}{*}{\textbf{Optimizer}} & \multicolumn{6}{c}{\textbf{Frame index}} \\
    \cmidrule(lr){2-7}
    & \textbf{0} & \textbf{1} & \textbf{2} & \textbf{3} & \textbf{4} & \textbf{5} \\
    \midrule
    \multicolumn{7}{@{}>{\columncolor{black!8}}c@{}}{\textbf{Prompt:} \textit{``Pick up the cucumber and place it on the plate.''}} \\
    \midrule
    AdamW &
        \includegraphics[width=0.135\textwidth]{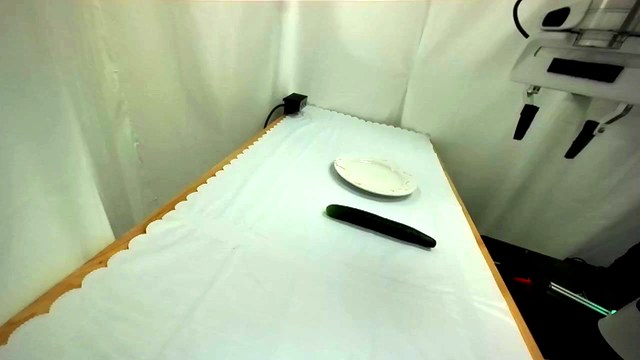} &
        \includegraphics[width=0.135\textwidth]{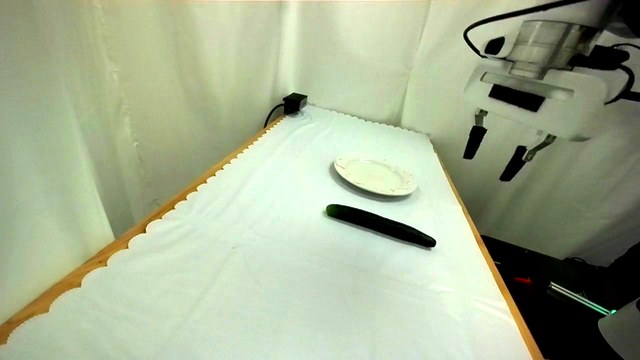} &
        \includegraphics[width=0.135\textwidth]{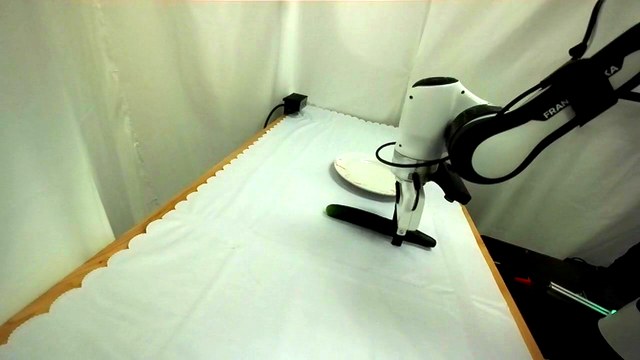} &
        \includegraphics[width=0.135\textwidth]{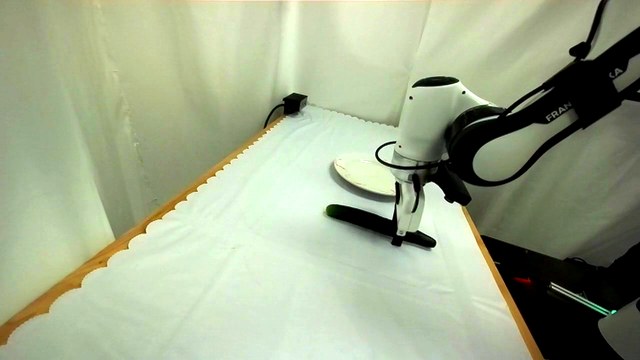} &
        \includegraphics[width=0.135\textwidth]{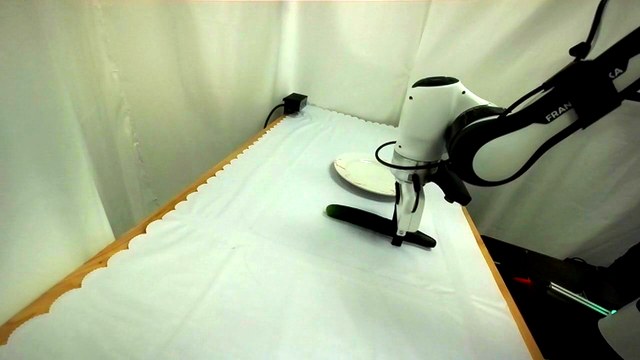} &
        \includegraphics[width=0.135\textwidth]{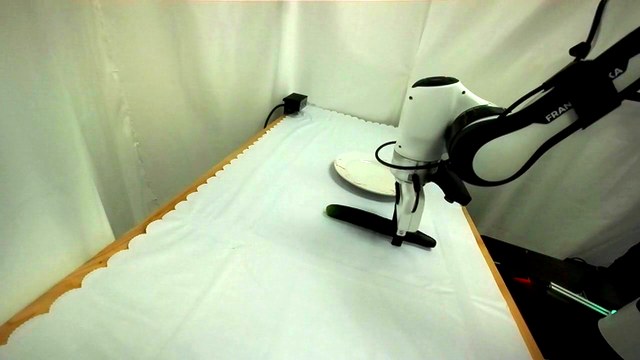} \\
    Muon &
        \includegraphics[width=0.135\textwidth]{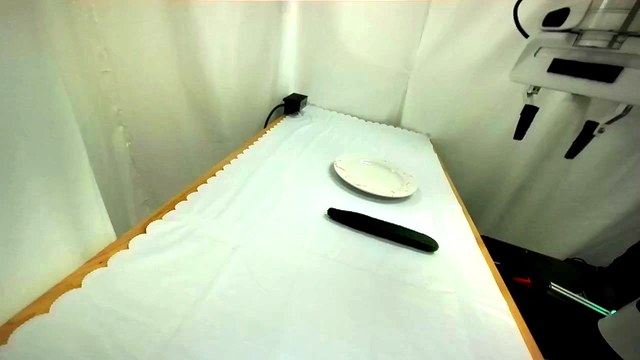} &
        \includegraphics[width=0.135\textwidth]{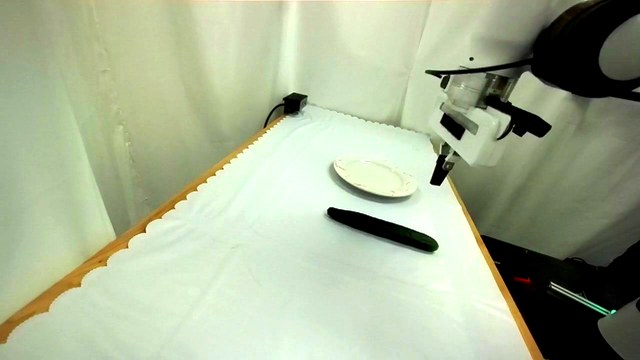} &
        \includegraphics[width=0.135\textwidth]{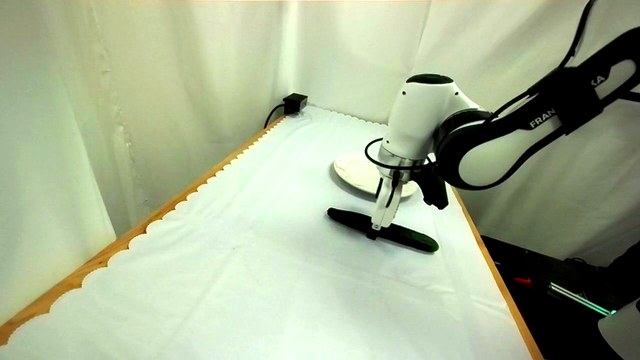} &
        \includegraphics[width=0.135\textwidth]{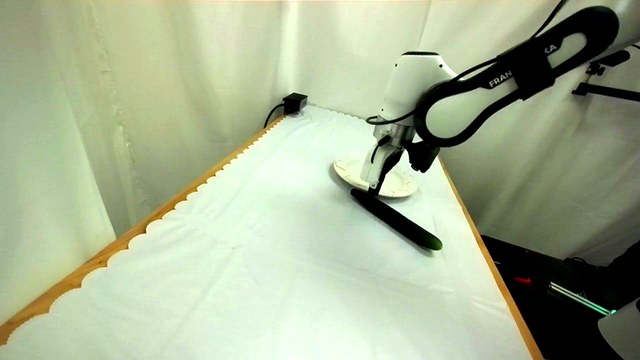} &
        \includegraphics[width=0.135\textwidth]{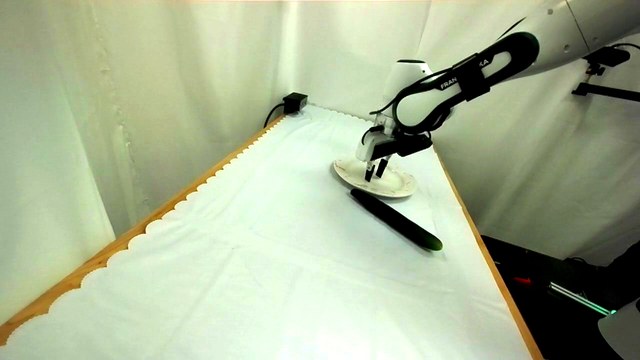} &
        \includegraphics[width=0.135\textwidth]{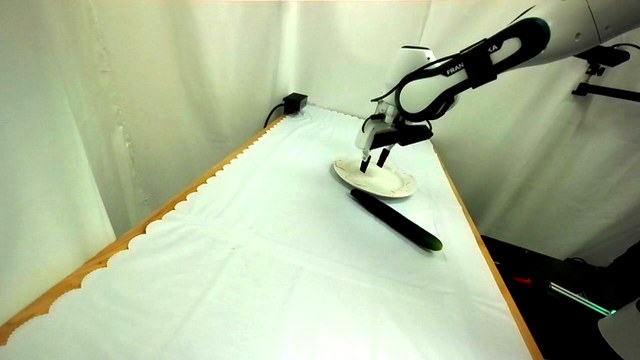} \\
    \textbf{Pion (Ours)} &
        \includegraphics[width=0.135\textwidth]{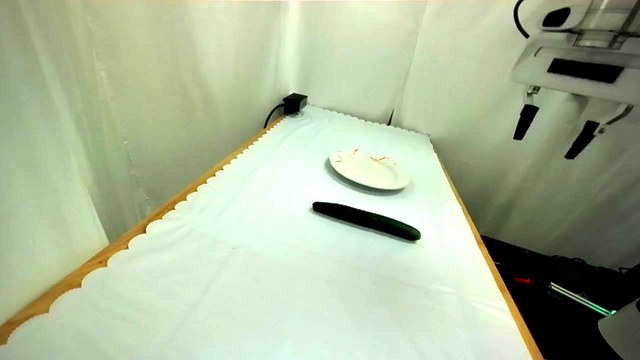} &
        \includegraphics[width=0.135\textwidth]{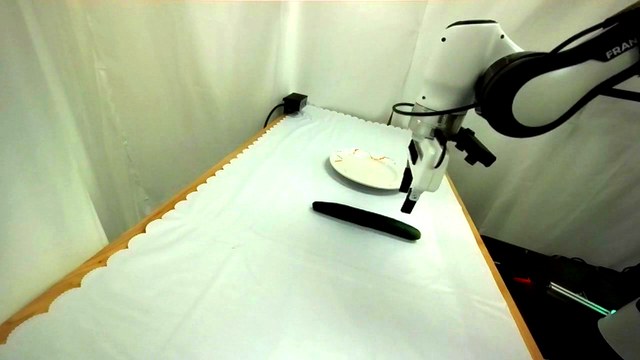} &
        \includegraphics[width=0.135\textwidth]{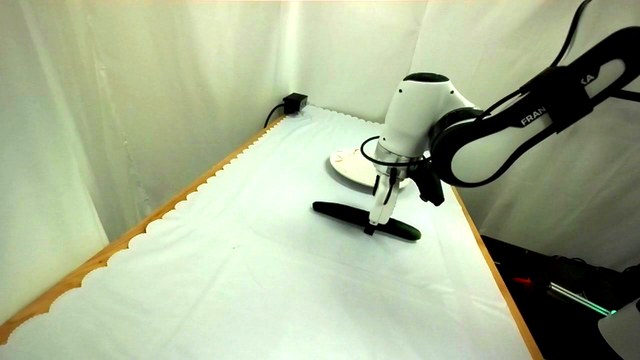} &
        \includegraphics[width=0.135\textwidth]{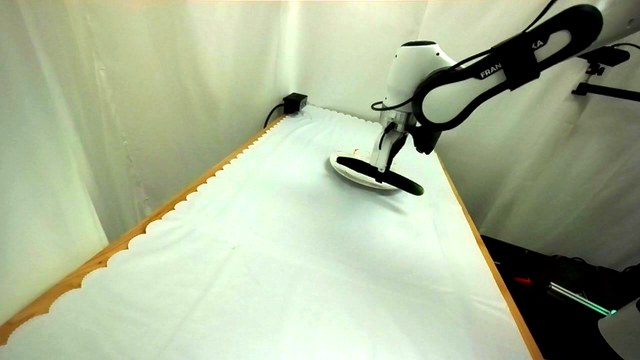} &
        \includegraphics[width=0.135\textwidth]{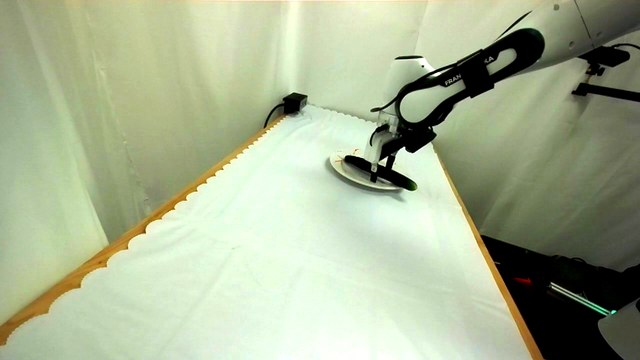} &
        \includegraphics[width=0.135\textwidth]{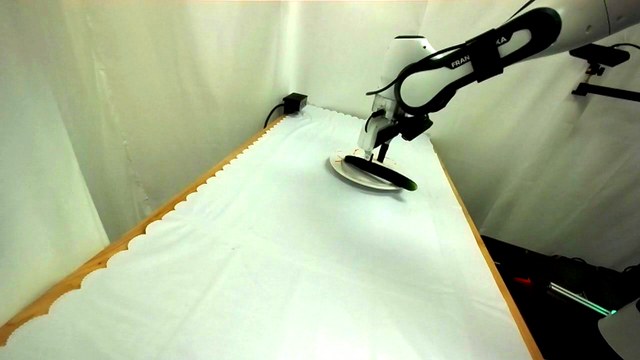} \\
    \midrule
    \multicolumn{7}{@{}>{\columncolor{black!8}}c@{}}{\textbf{Prompt:} \textit{``Pick up the cube and place it on the plate.''}} \\
    \midrule
    AdamW &
        \includegraphics[width=0.135\textwidth]{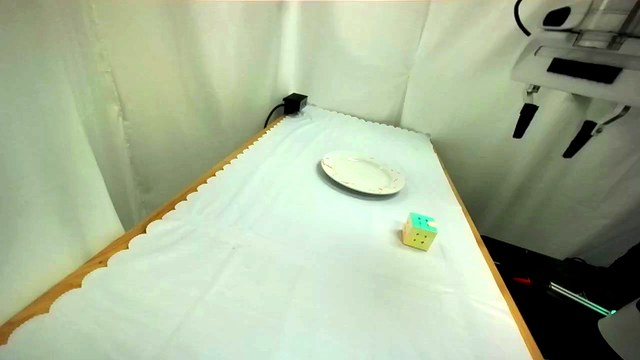} &
        \includegraphics[width=0.135\textwidth]{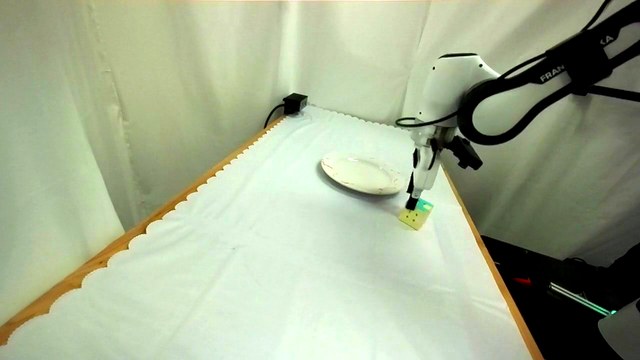} &
        \includegraphics[width=0.135\textwidth]{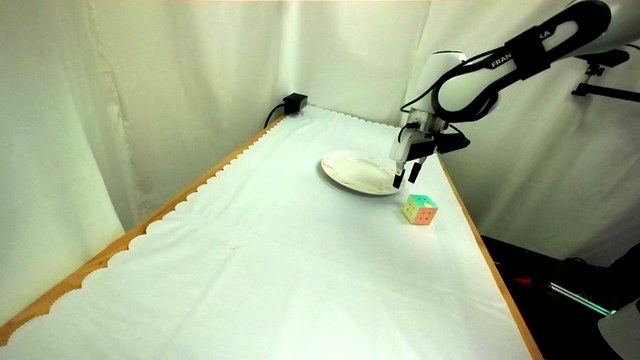} &
        \includegraphics[width=0.135\textwidth]{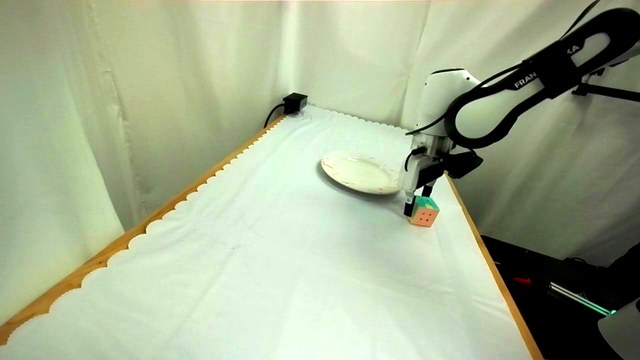} &
        \includegraphics[width=0.135\textwidth]{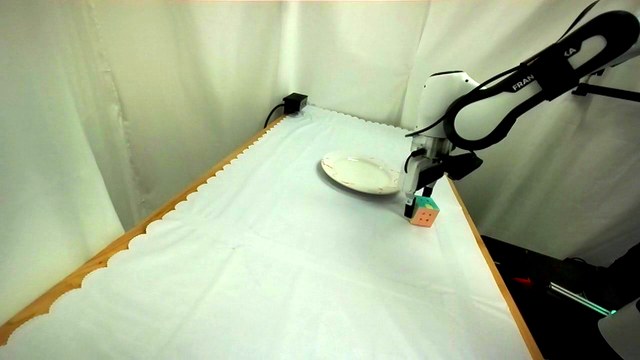} &
        \includegraphics[width=0.135\textwidth]{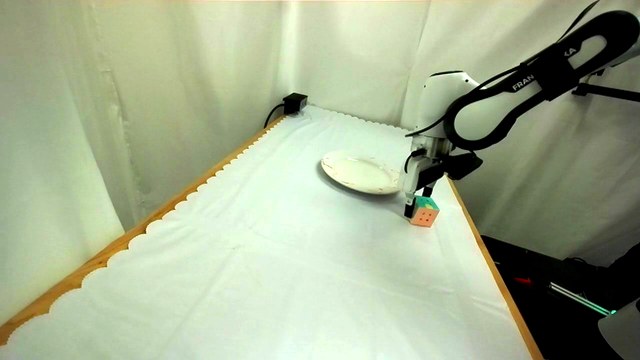} \\
    Muon &
        \includegraphics[width=0.135\textwidth]{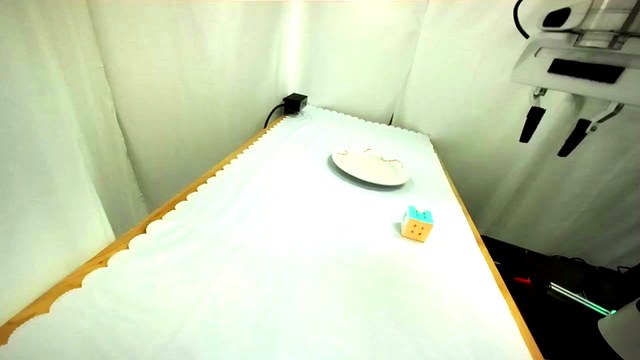} &
        \includegraphics[width=0.135\textwidth]{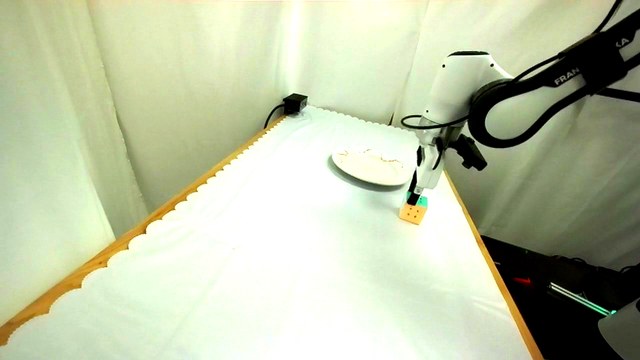} &
        \includegraphics[width=0.135\textwidth]{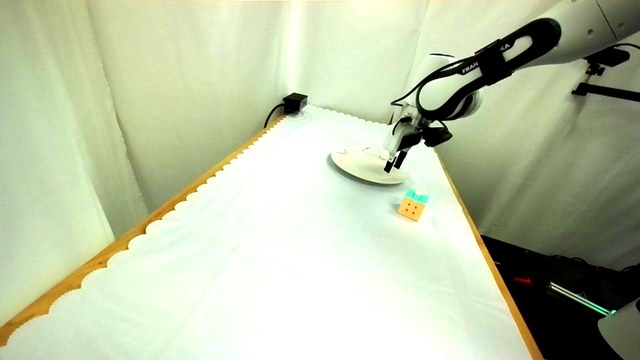} &
        \includegraphics[width=0.135\textwidth]{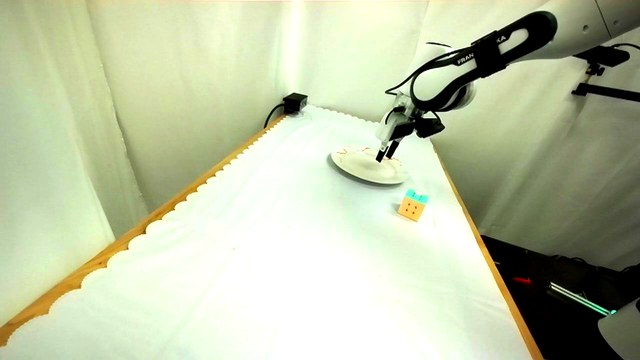} &
        \includegraphics[width=0.135\textwidth]{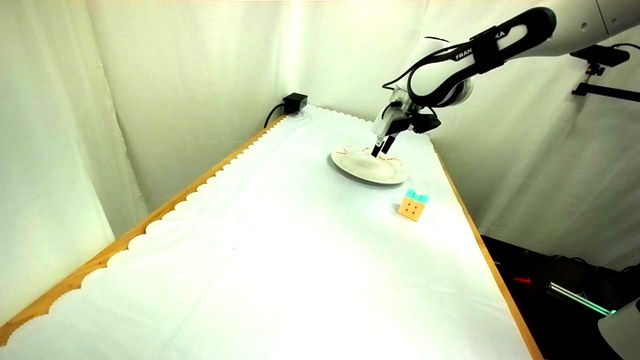} &
        \includegraphics[width=0.135\textwidth]{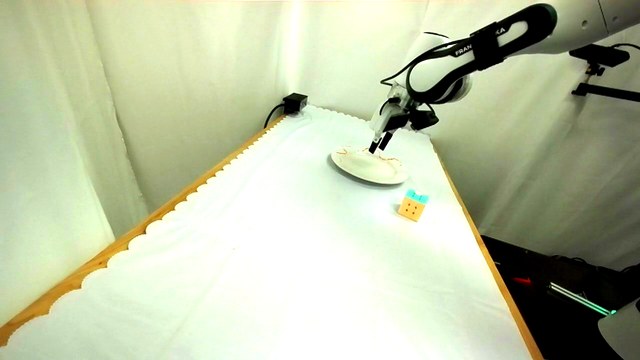} \\
    \textbf{Pion (Ours)} &
        \includegraphics[width=0.135\textwidth]{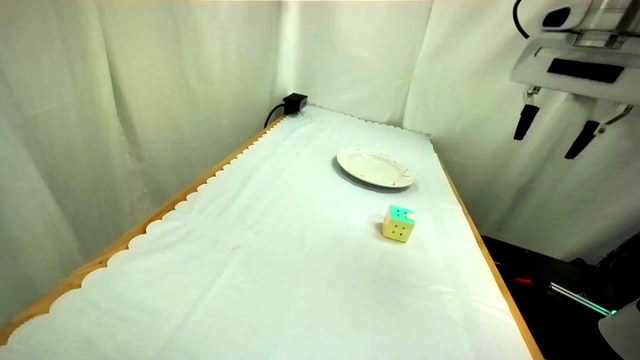} &
        \includegraphics[width=0.135\textwidth]{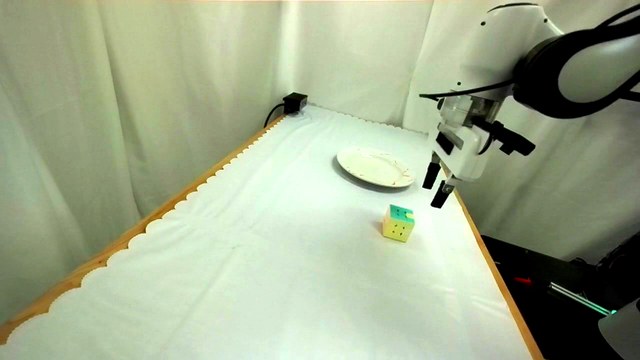} &
        \includegraphics[width=0.135\textwidth]{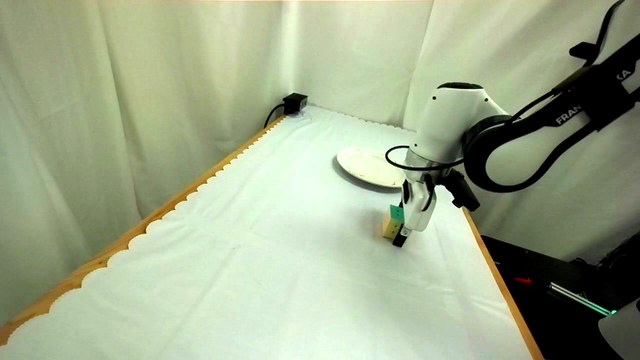} &
        \includegraphics[width=0.135\textwidth]{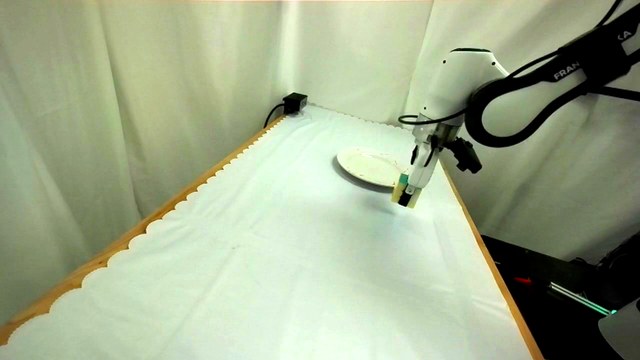} &
        \includegraphics[width=0.135\textwidth]{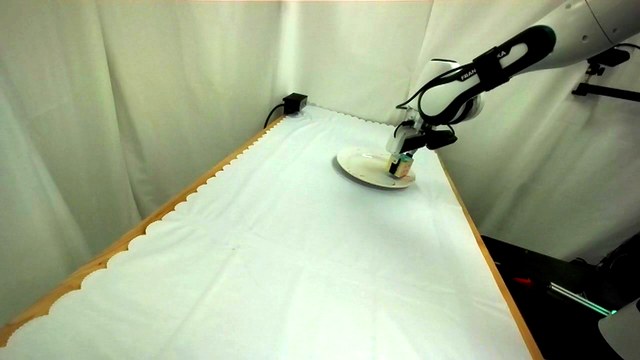} &
        \includegraphics[width=0.135\textwidth]{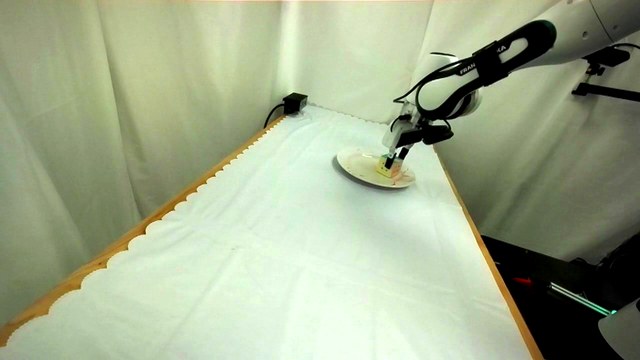} \\
    \midrule
    \multicolumn{7}{@{}>{\columncolor{black!8}}c@{}}{\textbf{Prompt:} \textit{``Pick up the cube and place it in the bowl.''}} \\
    \midrule
    AdamW &
        \includegraphics[width=0.135\textwidth]{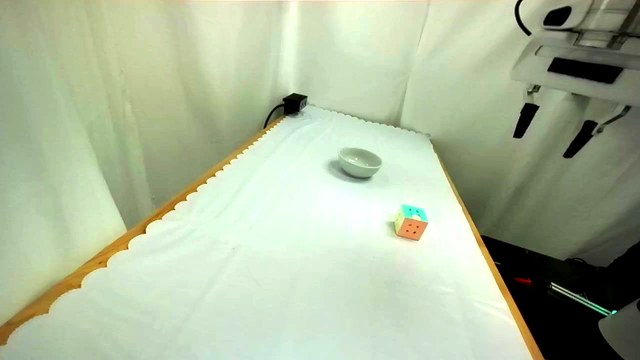} &
        \includegraphics[width=0.135\textwidth]{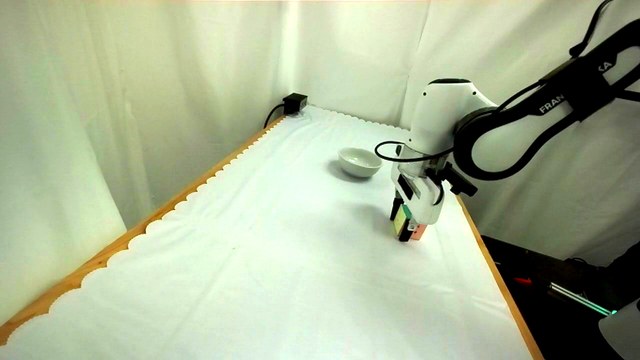} &
        \includegraphics[width=0.135\textwidth]{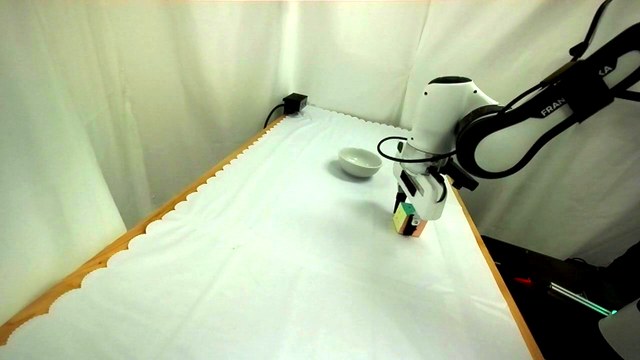} &
        \includegraphics[width=0.135\textwidth]{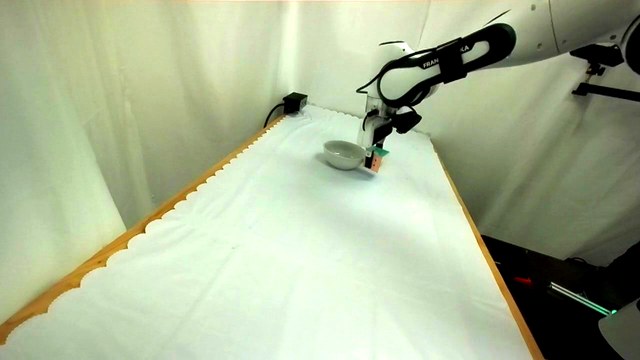} &
        \includegraphics[width=0.135\textwidth]{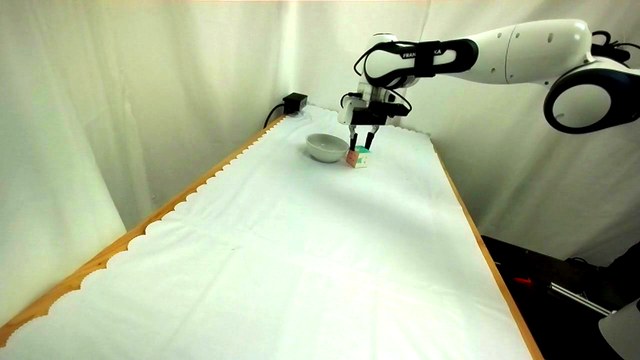} &
        \includegraphics[width=0.135\textwidth]{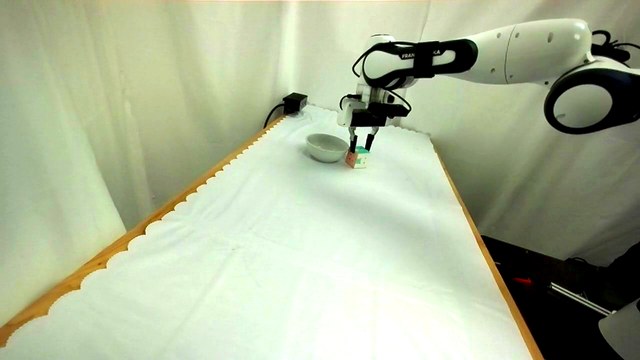} \\
    Muon &
        \includegraphics[width=0.135\textwidth]{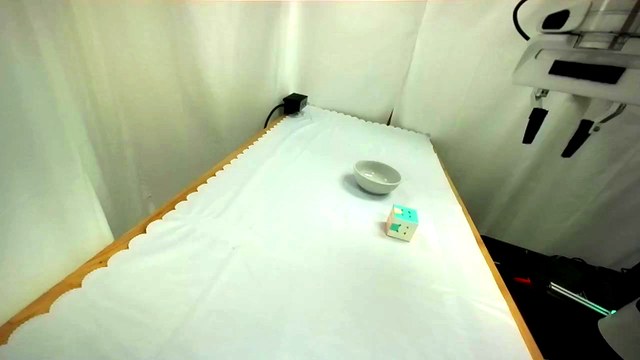} &
        \includegraphics[width=0.135\textwidth]{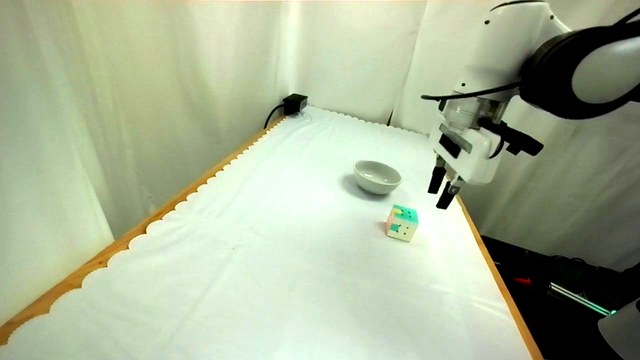} &
        \includegraphics[width=0.135\textwidth]{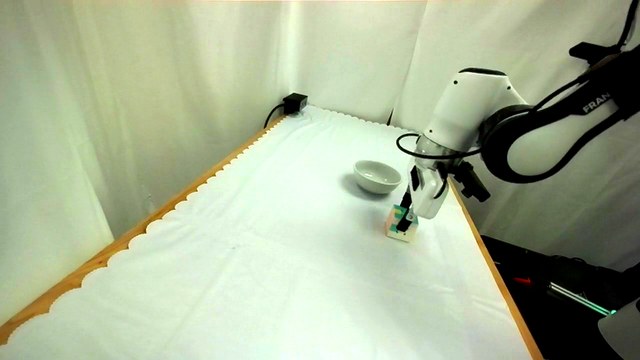} &
        \includegraphics[width=0.135\textwidth]{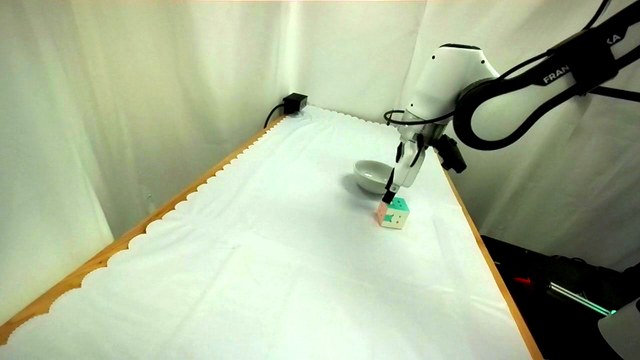} &
        \includegraphics[width=0.135\textwidth]{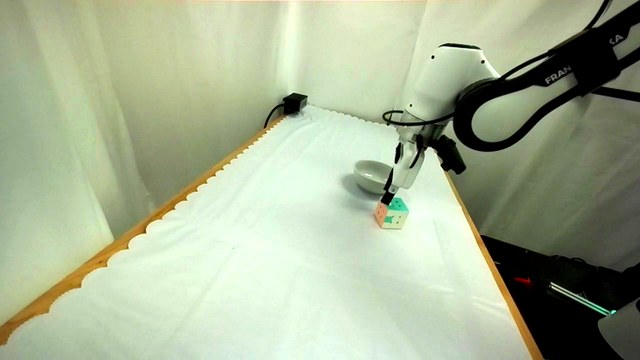} &
        \includegraphics[width=0.135\textwidth]{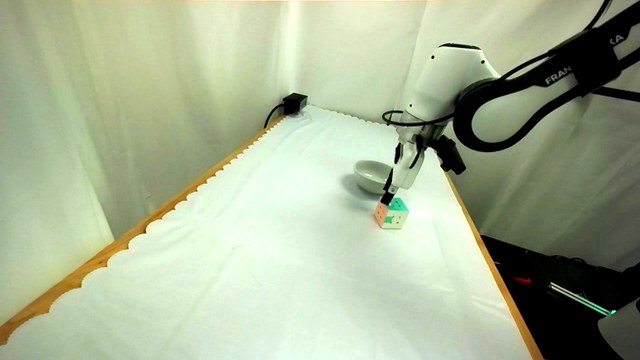} \\
    \textbf{Pion (Ours)} &
        \includegraphics[width=0.135\textwidth]{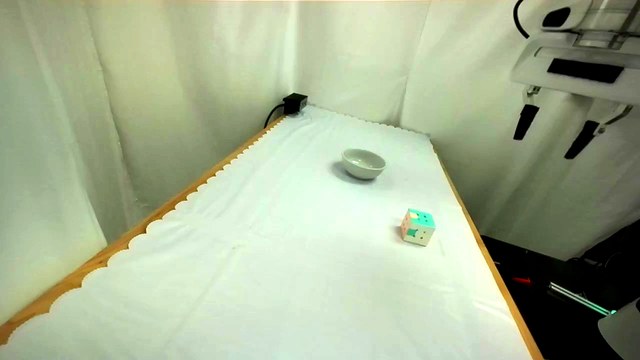} &
        \includegraphics[width=0.135\textwidth]{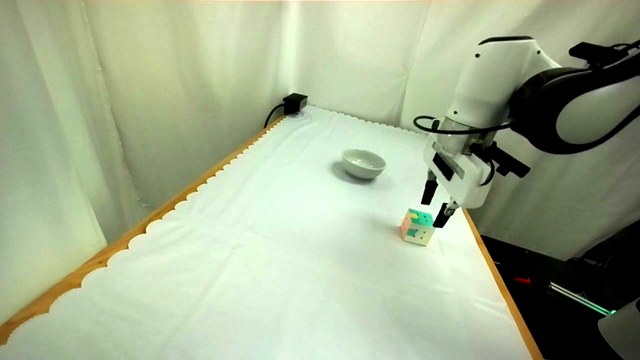} &
        \includegraphics[width=0.135\textwidth]{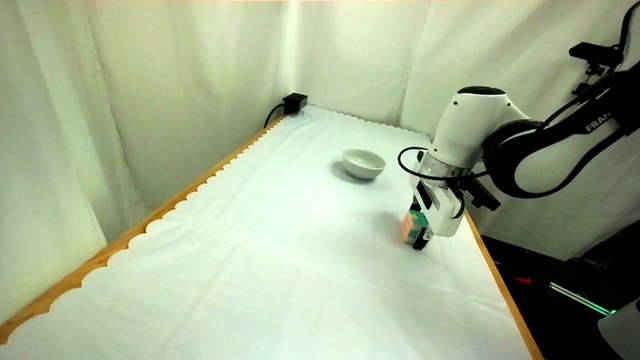} &
        \includegraphics[width=0.135\textwidth]{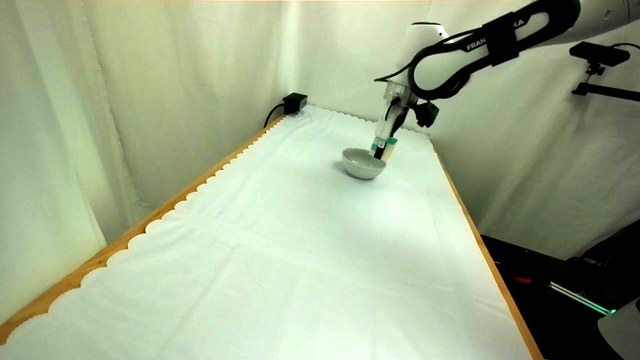} &
        \includegraphics[width=0.135\textwidth]{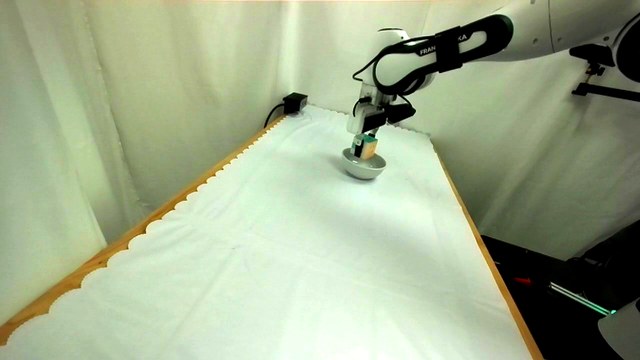} &
        \includegraphics[width=0.135\textwidth]{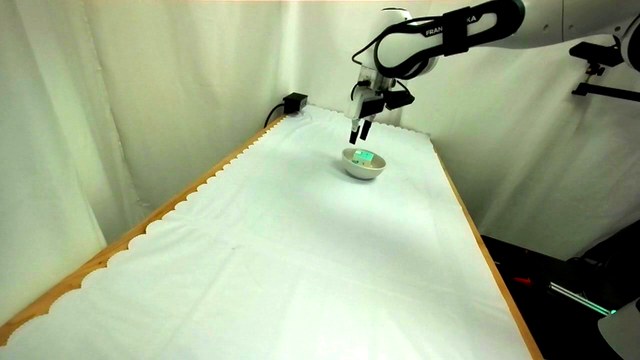} \\
    \bottomrule
    \end{tabular}
\end{table*}

\clearpage
\newpage

\section{Additional VLA Experiments}\label{sec: vla_ablations}

This appendix expands the ablation summary in Sec.\,\ref{sec: exp_vla} with the full setups, figures/tables, and per-row analysis of three studies on VLA-Adapter \citep{wang2026vla}: (i) Pion vs.\ LRMuon for action-module training, (ii) per-head vs.\ default Pion on the action head, and (iii) modality-wise optimizer assignment across the Vision, Language, and Action branches.

\subsection{Pion vs.\ LRMuon for VLA training}\label{sec: vla_ablation_lrmuon}

\noindent \textbf{Pion outperforms LRMuon for VLA training with near-Muon cost.}
\textbf{Fig.\,\ref{fig:vlaadapter_rank_time}} compares Pion with LRMuon (Low-rank Muon) for training VLA-Adapter on LIBERO Object. LRMuon computes an exact SVD of the momentum at each step, retains the top-$k$ singular subspace, and applies the corresponding top-$k$ polar factor $\mathbf{U}_k\mathbf{V}_k^\top$ \citep{he2025low}, as used in Fig.\,\ref{fig:vla_optimizer_compare}. We can observe from \textbf{Fig.\,\ref{fig:vlaadapter_rank_time}-(a)} that LRMuon improves over Muon across all top-$k$ ranks $k \in \{1, 16, 64, 256\}$, confirming the benefit of low-rank spectral filtering, but underperforms Pion at every $k$. This gap arises for two reasons. First, LRMuon uses a \textit{fixed} top-$k$ rank that cannot adapt to the per-step and per-layer rank of the momentum, whereas Pion applies a \textit{soft} spectral filter via high-pass NS. Second, \textbf{Fig.\,\ref{fig:vlaadapter_rank_time}-(b)} shows that the per-step exact SVD computation significantly increases total training time, whereas Pion matches Muon's cost almost exactly.

\begin{figure}[htbp]
    \centering
    \begin{tabular}{cc}
        \includegraphics[width=0.30\textwidth]{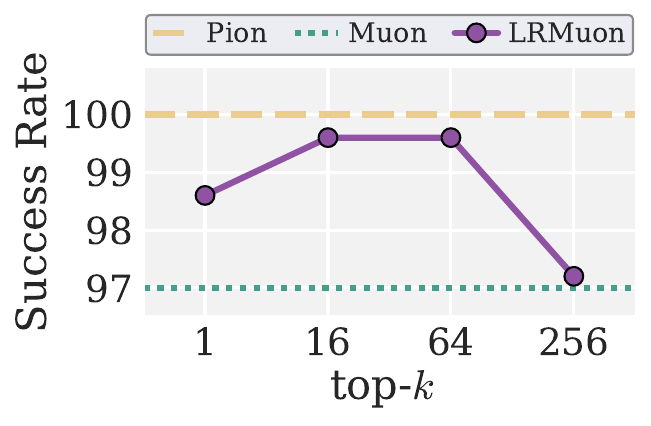} &
        \includegraphics[width=0.3\textwidth]{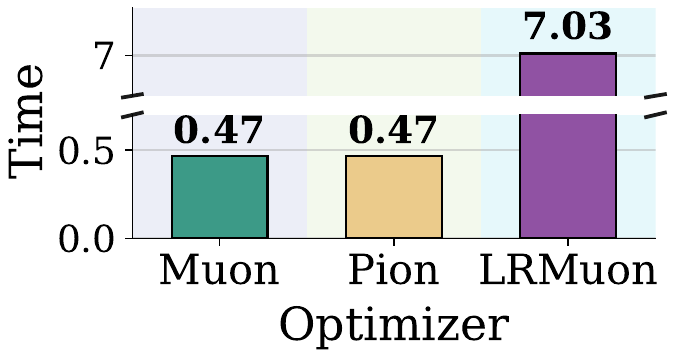} \\
        \small{(a) Success rate vs. top-$k$ rank} & \small{(b) Total training time (hrs)}
    \end{tabular}
    \caption{\small{Muon, Pion and LRMuon for VLA-Adapter on LIBERO Object for $1{,}500$ steps. (a) Test success rate as the top-$k$ rank of LRMuon sweeps $k \in \{1, 16, 64, 256\}$; Pion and Muon are shown as horizontal references. (b) Total training time (hours).}}
    \label{fig:vlaadapter_rank_time}
\end{figure}

\subsection{Per-head vs.\ default Pion on VLA}\label{sec: vla_ablation_perhead}

Sec.\,\ref{sec: method} introduces two application modes of high-pass NS, the default mode and the per-head mode. \textbf{Table\,\ref{tab:vlaadapter_libero}} reports both modes on VLA-Adapter across the four LIBERO task suites. The two modes perform on par, with the default mode marginally ahead on three of four suites (Object $100.0$ vs.\ $99.6$, Spatial $99.4$ vs.\ $98.8$, Long $92.4$ vs.\ $91.6$; only Goal slightly favors per-head, $97.4$ vs.\ $97.2$), yielding a $+0.4$ gap on the four-suite average. This is consistent with the intuition of Sec.\,\ref{sec: method}: unlike the LLM backbone in RLVR, the VLA action head is trained from scratch and carries no per-head heterogeneity for the per-head reshape to preserve, so the default whole-matrix mode already suffices. We therefore use default Pion on the action head throughout Sec.\,\ref{sec: exp_vla}.

\begin{table}[htbp]
    \vspace{-2mm}
    \centering
    \caption{\small AdamW, Muon, and Pion (default vs. per-head) for VLA-Adapter on LIBERO. Test success rates on LIBERO Object, Spatial, Goal, and Long at the same training budget ($1{,}500$ steps for Object and $15{,}000$ steps for others). The best results in each column are in \textbf{bold}.}
    \label{tab:vlaadapter_libero}
    \vspace{1mm}

    \small 
    
    \begin{tabular}{cccccc}
    \toprule
    \textbf{Optimizer} & \textbf{Object} & \textbf{Spatial} & \textbf{Goal} & \textbf{Long} & \textbf{Average} \\
    \midrule
    AdamW             & 32.2  & 97.0 & 89.2 & 69.6 & 72.00 \\
    Muon              & 97.0  & 99.0 & 95.8 & 88.0 & 94.95 \\
    \midrule
    \rowcolor{blue!10}
    \textbf{Pion (per-head)} & 99.6 & 98.8 & \textbf{97.4} & 91.6 & 96.85 \\
    \rowcolor{blue!10}
    \textbf{Pion (default)} & \textbf{100.0} & \textbf{99.4} & 97.2 & \textbf{92.4} & \textbf{97.25} \\
    \bottomrule
    \end{tabular}
    \vspace{-2mm}
\end{table}

\subsection{Modality-wise optimizer assignment on VLA}\label{sec: vla_ablation_modality}

The VLA configuration used throughout Sec.\,\ref{sec: exp_vla}, namely Muon on V/L and Pion on the action head, is one of several plausible assignments. To check whether it is the right one, we sweep the optimizer of each branch independently on VLA-Adapter/LIBERO Object at $1{,}500$ steps, indexing the resulting nine settings as \textbf{S1}--\textbf{S9} in \textbf{Table\,\ref{tab:vla_modality_assignment}}. \textbf{S1} is the all-AdamW reference; \textbf{S2--S3}, \textbf{S4--S5}, \textbf{S6--S7} perturb only the Action, Language, and Vision modules away from \textbf{S1} respectively; and \textbf{S8--S9} contrast the all-Muon configuration with our final ``Muon on V/L + Pion on action'' design. Three observations follow:

\begin{table}[htbp]
    \vspace{-2mm}
    \centering
    \caption{\small Modality-wise optimizer ablation for VLA-Adapter on LIBERO Object. Test success rates at $1{,}500$ training steps. AdamW, Muon, and Pion are ablated across Vision, Language, and Action modules. The best result is in \textbf{bold}.}
    \label{tab:vla_modality_assignment}
    \vspace{1mm}

    \small
    \begin{tabular}{ccccc}
    \toprule
    \multirow{2}{*}{\textbf{Setting}} & \multicolumn{3}{c}{\textbf{Optimizer}} & \multirow{2}{*}{\textbf{Success Rate (\%)}} \\
    \cmidrule(lr){2-4}
    & \textbf{Vision} & \textbf{Language} & \textbf{Action} & \\
    \midrule
    \textbf{S1} & AdamW & AdamW & AdamW & 43.6 \\
    \midrule
    \textbf{S2} & AdamW & AdamW & Muon & 40.0 \\
    \textbf{S3} & AdamW & AdamW & Pion & 73.6 \\
    \midrule
    \textbf{S4} & AdamW & Muon & AdamW & 94.6 \\
    \textbf{S5} & AdamW & Pion & AdamW & 73.8 \\
    \midrule
    \textbf{S6} & Muon & AdamW & AdamW & 96.8 \\
    \textbf{S7} & Pion & AdamW & AdamW & 17.8 \\
    \midrule
    \textbf{S8} & Muon & Muon & Muon & 97.0 \\
    \rowcolor{blue!10}
    \textbf{S9} & Muon & Muon & Pion & \textbf{100.0} \\
    \bottomrule
    \end{tabular}
    \vspace{-2mm}
\end{table}

(i) \textit{Action head wants Pion, not Muon.} With V/L fixed at AdamW, switching the action head from AdamW (\textbf{S1}, 43.6) to Muon (\textbf{S2}) \textit{drops} accuracy to 40.0, while switching it to Pion (\textbf{S3}) \textit{lifts} it to 73.6. This confirms the spectral diagnosis of Sec.\,\ref{sec: motivation}: the low-erank action gradient is mismatched with Muon's uniform whitening, but well-suited to Pion's high-pass.

(ii) \textit{Vision and Language want Muon, not Pion.} Symmetrically, with the other two branches fixed at AdamW, switching Language to Muon (\textbf{S4}) improves accuracy from \textbf{S1} (43.6) to 94.6, while switching it to Pion (\textbf{S5}) only reaches 73.8; switching Vision to Muon (\textbf{S6}) improves accuracy to 96.8, while switching it to Pion (\textbf{S7}) collapses to 17.8. The high-rank V/L modules thus genuinely benefit from Muon's uniform spectral updates, and applying a high-pass there discards informative tail components.

(iii) \textit{The chosen assignment is optimal.} Combining the two findings, ``Muon on V/L + Pion on action'' (\textbf{S9}) reaches $100.0\%$ success, strictly above all-Muon (\textbf{S8}, 97.0\%) and any single-module configuration in \textbf{S2--S7}. \textbf{S9} is therefore not an arbitrary engineering choice but the assignment that respects the spectral structure of each modality.

\clearpage
\newpage
\section{Low-pass Muon (LPMuon): Coefficient Design via Constrained Polynomial Fitting}
\label{sec:lowpass_muon}

This appendix details the coefficient design of \textbf{Low-pass Muon} (LPMuon), the reverse-ablation baseline of Sec.\,\ref{sec: exp_rl} (\textbf{Fig.\,\ref{fig:lowpassmuon}}). Unlike Pion, whose Promotion and Suppression polynomials admit closed-form solutions from analytic constraints at $\sigma\!=\!0$ and $\sigma\!=\!1$ (Sec.\,\ref{sec: method}), the LPMuon target profile is a sharp band indicator whose quality depends on the \textit{whole composition} across $\sigma$, and the $t\!=\!5$ steps exchange degrees of freedom (e.g., scaling by $p_1$ can be partially absorbed into $p_2$). We therefore treat all $15$ coefficients as free variables and fit them numerically via a multi-start L-BFGS-B procedure.

\paragraph{Target filter and matrix-level update.}
LPMuon composes $t=5$ odd quintic polynomials $p_k(\sigma) = a_{1,k}\sigma + a_{3,k}\sigma^3 + a_{5,k}\sigma^5$:
\begin{equation}
    \tilde f_{\boldsymbol{\theta}}(\sigma) \Def (p_5 \circ p_4 \circ p_3 \circ p_2 \circ p_1)(\sigma),
    \quad
    \boldsymbol{\theta} = \{(a_{1,k}, a_{3,k}, a_{5,k})\}_{k=1}^{5} \in \mathbb{R}^{15},
    \label{eq: lowpass_composition}
\end{equation}
to approximate an odd extension of the band indicator on $\sigma \in [-1, 1]$. The actual normalized singular values of the pre-normalized momentum are nonnegative and lie in $[0,1]$; the negative half-axis is included only to define and visualize the odd scalar extension:
\begin{equation}
    \tilde f_{\boldsymbol{\theta}}(\sigma) \approx \mathrm{sign}(\sigma) \cdot \mathds{1}\bigl[|\sigma| \leq \tau\bigr],
    \quad \tau \in (0, 1) \text{ is the cutoff.}
    \label{eq: lowpass_target}
\end{equation}
Since each $p_k$ is odd, $\tilde f_{\boldsymbol{\theta}}$ is automatically antisymmetric and we only need to fit on $\sigma \geq 0$. By the SVD factorization \eqref{eq: svd_factorization}, applying $p_k$ at the matrix level on $\mathbf{M}_t \in \mathbb{R}^{m \times n}$,
\begin{equation}
    \mathbf{M}_t \leftarrow a_{1,k}\,\mathbf{M}_t + a_{3,k}\,(\mathbf{M}_t\mathbf{M}_t^\top)\mathbf{M}_t + a_{5,k}\,(\mathbf{M}_t\mathbf{M}_t^\top)^2 \mathbf{M}_t,
    \label{eq: lowpass_matrix_step}
\end{equation}
is equivalent to applying $\tilde f_{\boldsymbol{\theta}}$ entry-wise to every singular value of $\mathbf{M}_t$, so LPMuon preserves Muon's per-step $5$-matmul cost and requires no explicit SVD.

\paragraph{Discretized fitting objective.}
Given a cutoff $\tau$, we discretize the positive half-axis into a pass band $\mathcal{S}_{\mathrm{p}}^{+} \subset [0.01,\, \tau\!-\!\Delta]$ and a stop band $\mathcal{S}_{\mathrm{s}}^{+} \subset [\tau\!+\!\Delta,\, 1]$ separated by a transition half-width $\Delta = 0.03$ (up to $250$ samples per band per side, reduced to $50$ when $\tau$ is close to $1$); these are mirrored to the negative half-axis for notational symmetry in the odd-extension loss, forming $\mathcal{S}_{\mathrm{p}}, \mathcal{S}_{\mathrm{s}}$. Intermediate iterates are clipped to $[-10^3, 10^3]$ to avoid early-iteration overflow. The fitting loss combines a pass-band, stop-band, overshoot, and non-negativity term:
\begin{align}
    \mathcal{L}(\boldsymbol{\theta};\, \tau)
    &= \lambda_{\mathrm{p}} \underbrace{\frac{1}{|\mathcal{S}_{\mathrm{p}}|}\!\sum_{\sigma \in \mathcal{S}_{\mathrm{p}}}\! \bigl(\tilde f_{\boldsymbol{\theta}}(\sigma) - \mathrm{sign}(\sigma)\bigr)^2}_{\mathcal{L}_{\mathrm{pass}}}
    + \lambda_{\mathrm{s}} \underbrace{\frac{1}{|\mathcal{S}_{\mathrm{s}}|}\!\sum_{\sigma \in \mathcal{S}_{\mathrm{s}}}\! \tilde f_{\boldsymbol{\theta}}(\sigma)^2}_{\mathcal{L}_{\mathrm{stop}}} \nonumber\\
    &\quad + \lambda_{\mathrm{o}} \underbrace{\frac{1}{|\mathcal{S}_{\mathrm{p}}\!\cup\!\mathcal{S}_{\mathrm{s}}|}\!\sum_{\sigma}\! \bigl(\max(|\tilde f_{\boldsymbol{\theta}}(\sigma)|\!-\!1.02,\, 0)\bigr)^2}_{\mathcal{L}_{\mathrm{over}}}
    + \lambda_{\mathrm{nn}} \underbrace{\sum_{\mathcal{S} \in \{\mathcal{S}_{\mathrm{p}}^{+}, \mathcal{S}_{\mathrm{s}}^{+}\}} \frac{1}{|\mathcal{S}|}\!\sum_{\sigma \in \mathcal{S}}\! \bigl(\max(-\tilde f_{\boldsymbol{\theta}}(\sigma),\, 0)\bigr)^2}_{\mathcal{L}_{\mathrm{nn}}},
    \label{eq: lowpass_total_loss}
\end{align}
with weights $(\lambda_{\mathrm{p}}, \lambda_{\mathrm{s}}, \lambda_{\mathrm{o}}, \lambda_{\mathrm{nn}}) = (3,\, 8,\, 30,\, 30)$. Here $\mathcal{L}_{\mathrm{pass}}$ anchors the pass band at $\pm 1$; $\mathcal{L}_{\mathrm{stop}}$ drives the stop band to $0$; $\mathcal{L}_{\mathrm{over}}$ keeps intermediate iterates bounded so the $5$-step composition does not blow up; $\mathcal{L}_{\mathrm{nn}}$ enforces non-negativity on $\sigma > 0$ (without it the fit admits sign-flipping solutions that would invert the gradient direction for a subset of singular components). The stop-band and overshoot terms are weighted more heavily because residual energy or overshoot compounds multiplicatively across the $5$ compositions.

\paragraph{Warm-start initialization, multi-start solver, and aggregation.}
To escape the many spurious local minima of quintic compositions, we use a structured warm start with random restarts. The first polynomial is initialized as identity ($p_1^{(0)} = (1, 0, 0)$) and the remaining four are initialized to Pion's Promotion coefficients \eqref{eq: pion_promotion}, $p_{2:5}^{(0)} = (1.875, -1.25, 0.375)$. Trial $m=1$ uses $\boldsymbol{\theta}^{(0)}$ directly; trials $m=2, \ldots, M$ ($M=8$) use $\boldsymbol{\theta}^{(0)} + \boldsymbol{\varepsilon}^{(m)}$ with $\boldsymbol{\varepsilon}^{(m)} \sim \mathcal{N}(\mathbf{0},\, 0.25^2 \mathbf{I}_{15})$. Each trial is solved by \texttt{scipy.optimize.minimize} with L-BFGS-B (maximum $2{,}000$ iterations, $f_{\mathrm{tol}} = 10^{-12}$, $g_{\mathrm{tol}} = 10^{-9}$, finite-difference gradients); divergent restarts are discarded. The final solution is $\hat{\boldsymbol{\theta}}(\tau) = \boldsymbol{\theta}_{\infty}^{(m^\star)}$ with $m^\star = \argmin_m \mathcal{L}(\boldsymbol{\theta}_{\infty}^{(m)};\, \tau)$. We sweep $\tau \in \{0.1, 0.2, \ldots, 0.9\}$, and use $\tau = 0.5$ inside the RLVR optimization loop in Fig.\,\ref{fig:lowpassmuon}. The full procedure is summarized in Alg.\,\ref{alg:lowpass_fitting}.

\begin{algorithm}[H]
    \caption{LPMuon Coefficient Fitting (L-BFGS-B)}
    \label{alg:lowpass_fitting}
    \begin{algorithmic}[1]
    \REQUIRE Cutoff $\tau$, transition half-width $\Delta$, steps $t=5$, restarts $M$, weights $(\lambda_{\mathrm{p}}, \lambda_{\mathrm{s}}, \lambda_{\mathrm{o}}, \lambda_{\mathrm{nn}})$
    \STATE Build discretizations $\mathcal{S}_{\mathrm{p}} \subset [-(\tau\!-\!\Delta),\, -0.01] \cup [0.01,\, \tau\!-\!\Delta]$, $\mathcal{S}_{\mathrm{s}} \subset [-1,\, -(\tau\!+\!\Delta)] \cup [\tau\!+\!\Delta,\, 1]$
    \STATE Warm start $\boldsymbol{\theta}^{(0)}$: $p_1 = (1, 0, 0)$ and $p_{2:t} = (1.875, -1.25, 0.375)$ \COMMENT{Pion Promotion}
    \FOR{$m = 1, \ldots, M$}
        \STATE $\boldsymbol{\theta}_{\mathrm{init}}^{(m)} \leftarrow \boldsymbol{\theta}^{(0)}$ if $m=1$, else $\boldsymbol{\theta}^{(0)} + \boldsymbol{\varepsilon}^{(m)}$ with $\boldsymbol{\varepsilon}^{(m)} \sim \mathcal{N}(\mathbf{0}, 0.25^2\, \mathbf{I}_{15})$
        \STATE $\boldsymbol{\theta}_{\infty}^{(m)} \leftarrow \mathrm{LBFGSB}\bigl(\mathcal{L}(\,\cdot\,;\, \tau),\, \boldsymbol{\theta}_{\mathrm{init}}^{(m)}\bigr)$;\quad $\mathcal{L}^{(m)} \leftarrow \mathcal{L}(\boldsymbol{\theta}_{\infty}^{(m)};\, \tau)$
    \ENDFOR
    \RETURN $\hat{\boldsymbol{\theta}}(\tau) = \boldsymbol{\theta}_{\infty}^{(m^\star)}$ with $m^\star = \argmin_m \mathcal{L}^{(m)}$, and the corresponding scalar filter $\tilde f_{\hat{\boldsymbol{\theta}}(\tau)}$ \eqref{eq: lowpass_composition}
    \end{algorithmic}
\end{algorithm}

\paragraph{Resulting filters and reverse-ablation evidence.}
\textbf{Fig.\,\ref{fig:lowpass_curves}} visualizes $\tilde f_{\hat{\boldsymbol{\theta}}(\tau)}$ across the full sweep $\tau \in \{0.1, 0.2, \ldots, 0.9\}$: as $\tau$ grows, the transition shifts rightward while the pass band ($|\sigma| \leq \tau$) stays anchored at $\pm 1$ and the stop band ($|\sigma| > \tau$) is driven to $0$, confirming that Alg.\,\ref{alg:lowpass_fitting} consistently recovers the desired low-pass profile. Numerical coefficients are listed in \textbf{Table\,\ref{tab:lowpass_coeffs}} and can be plugged directly into \eqref{eq: lowpass_matrix_step}. At the matrix level, this gives the LPMuon optimizer used in Fig.\,\ref{fig:lowpassmuon}-(b); its flat accuracy curve (LPMuon fails to train at all) provides the reverse-ablation evidence of Sec.\,\ref{sec: exp_rl}: retaining the small singular values while discarding the large ones destroys the learning signal, isolating that Pion's gains arise from \textit{high-pass} filtering rather than the iteration form, per-head reshape, or generic spectral transformation.

\begin{figure}[H]
    \centering
    \begin{tabular}{ccc}
        \includegraphics[width=0.225\textwidth]{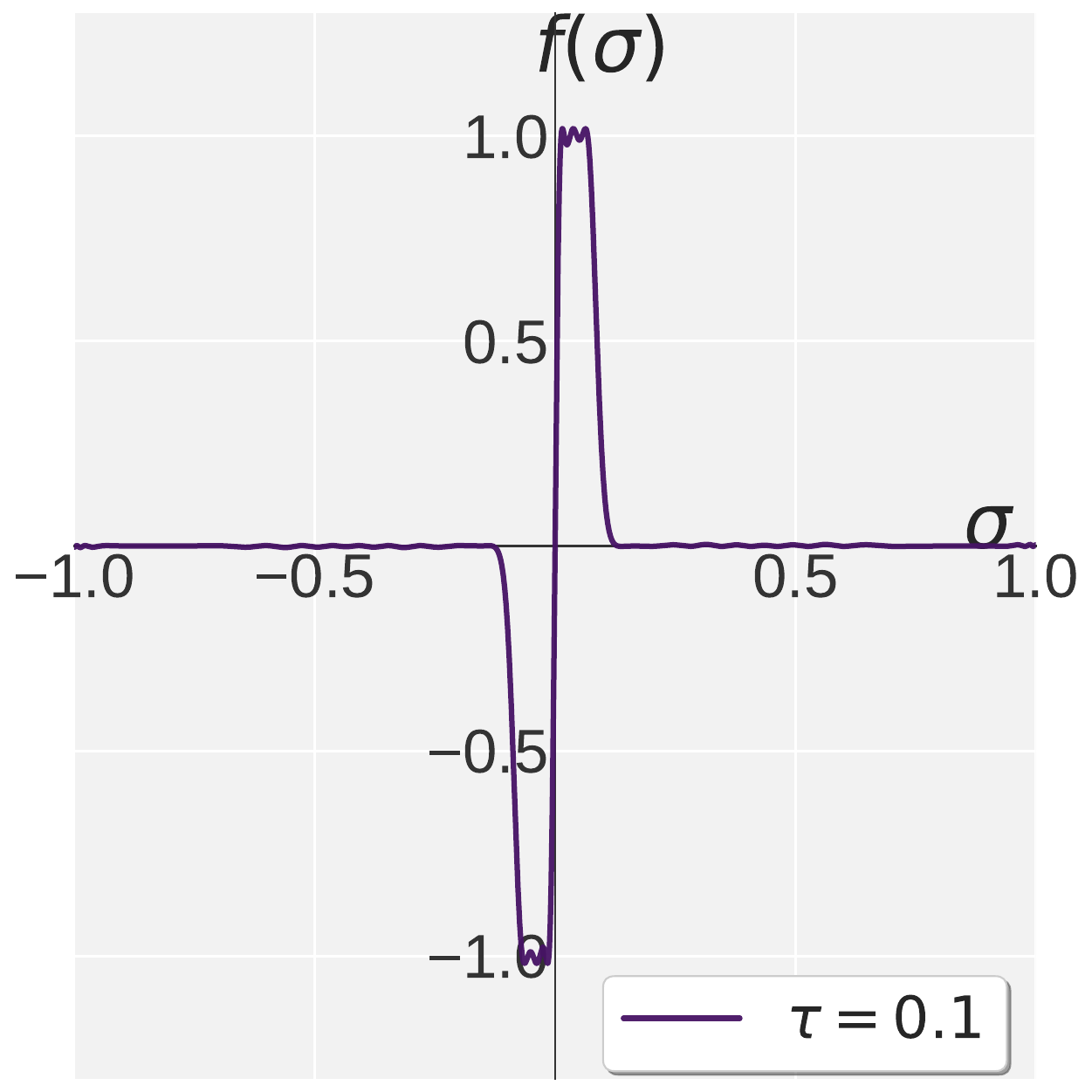} &
        \includegraphics[width=0.225\textwidth]{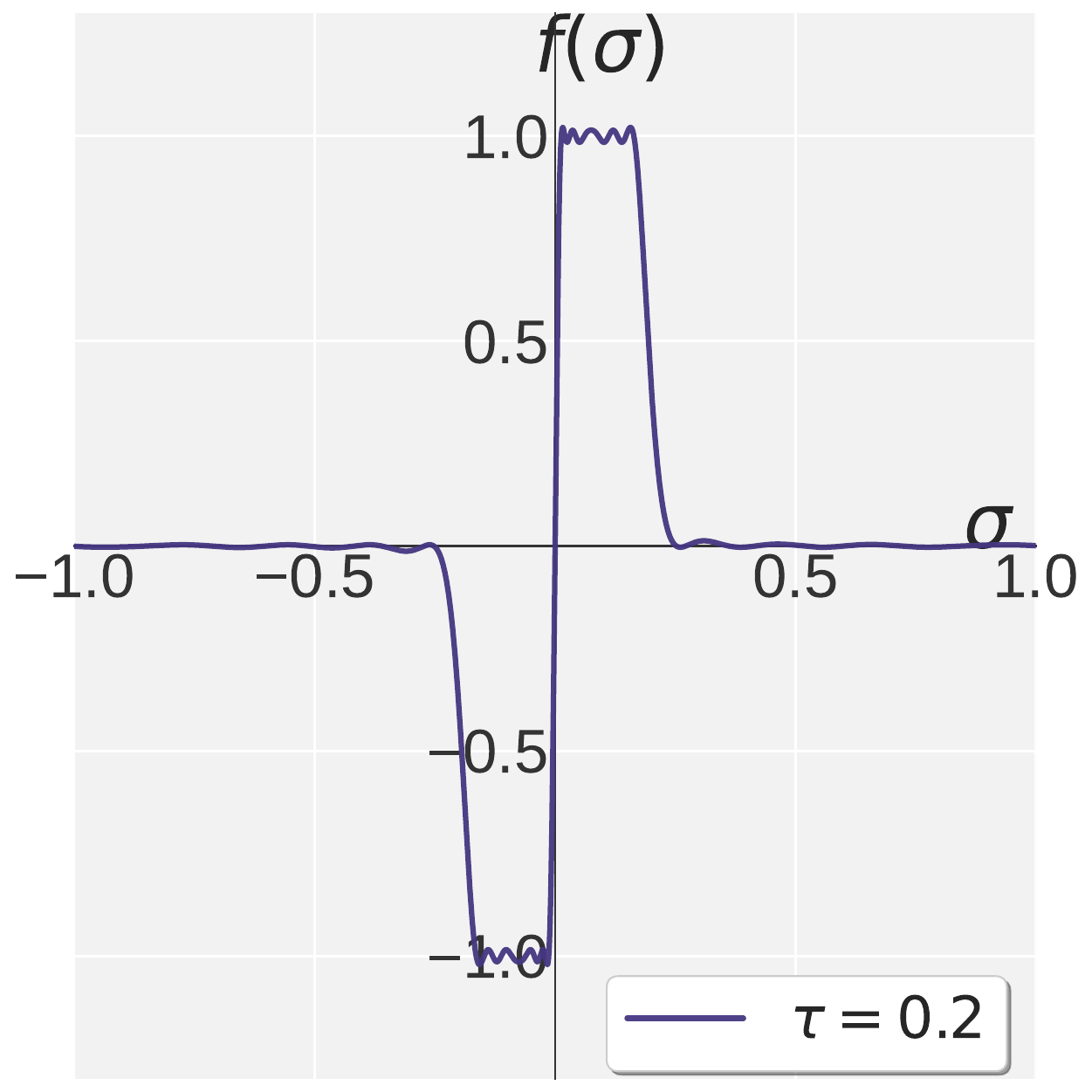} &
        \includegraphics[width=0.225\textwidth]{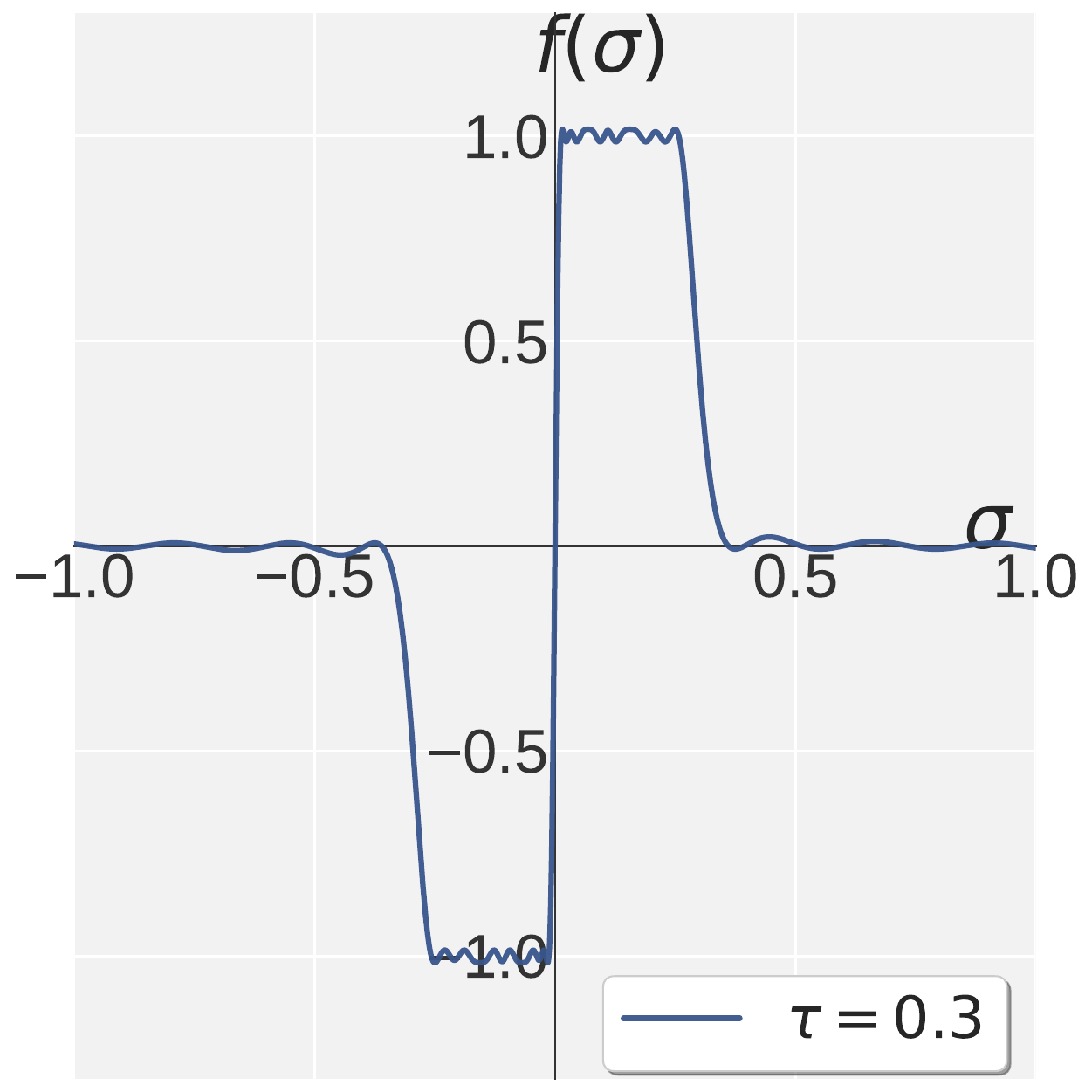} \\
        \small{(a) $\tau = 0.1$} & \small{(b) $\tau = 0.2$} & \small{(c) $\tau = 0.3$} \\[2pt]
        \includegraphics[width=0.225\textwidth]{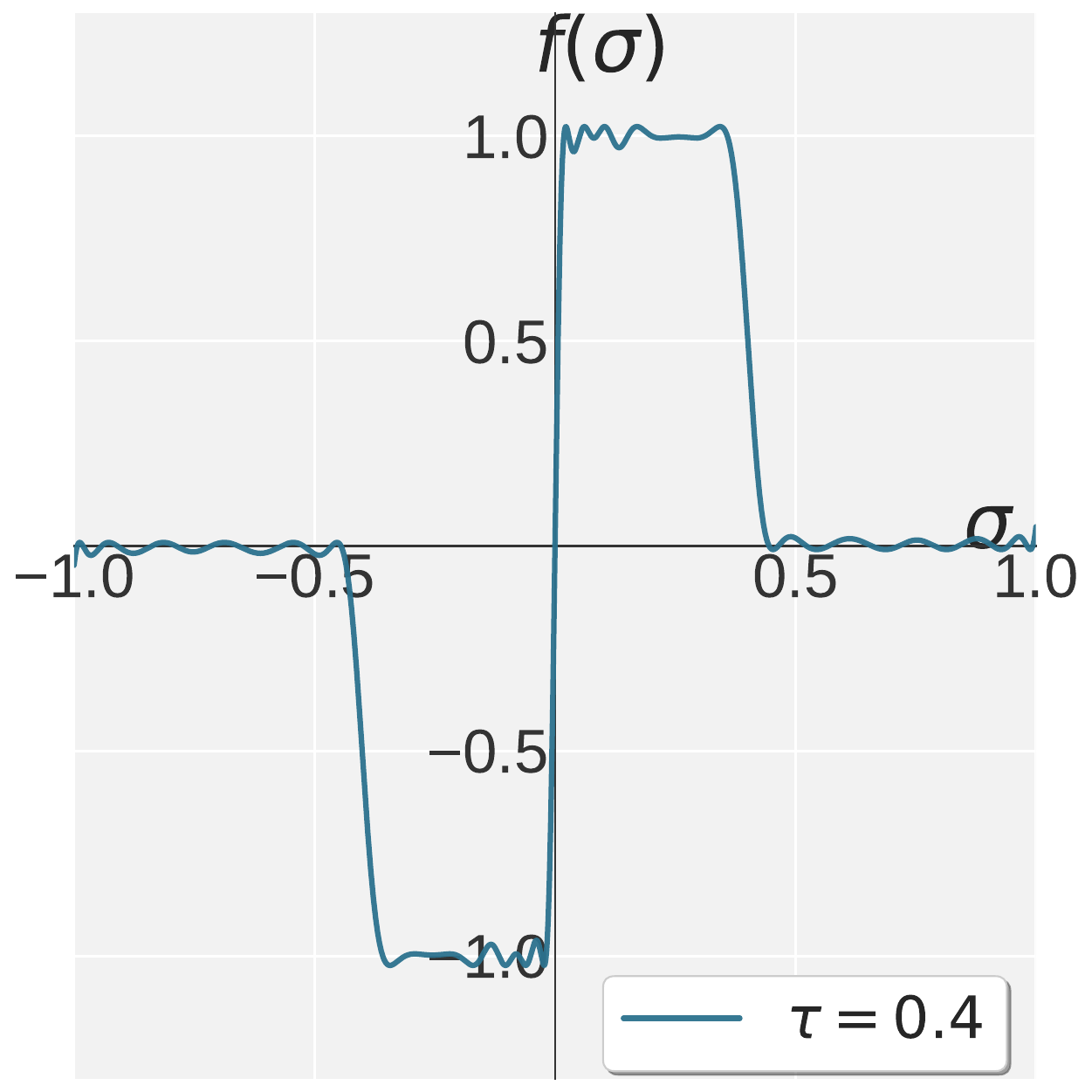} &
        \includegraphics[width=0.225\textwidth]{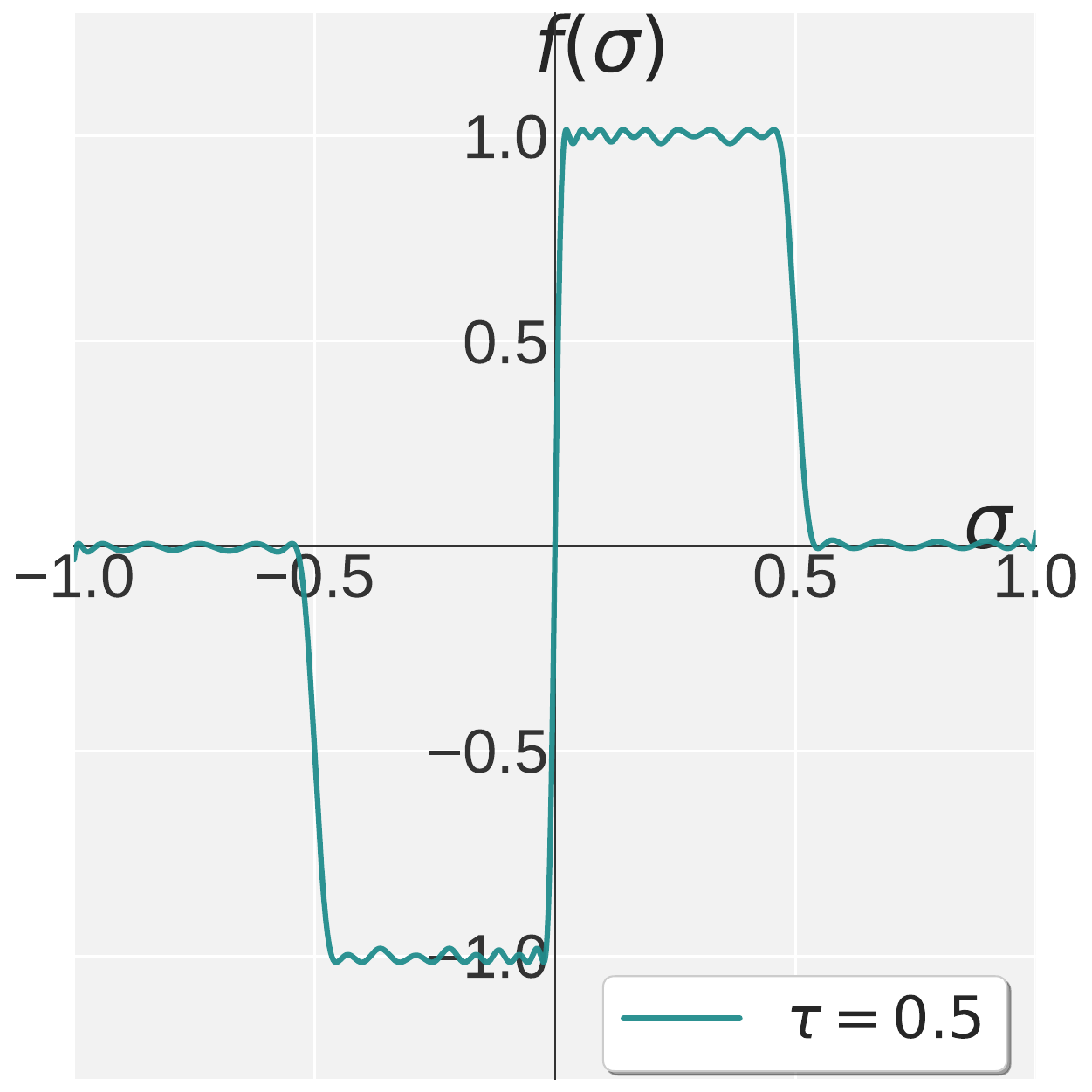} &
        \includegraphics[width=0.225\textwidth]{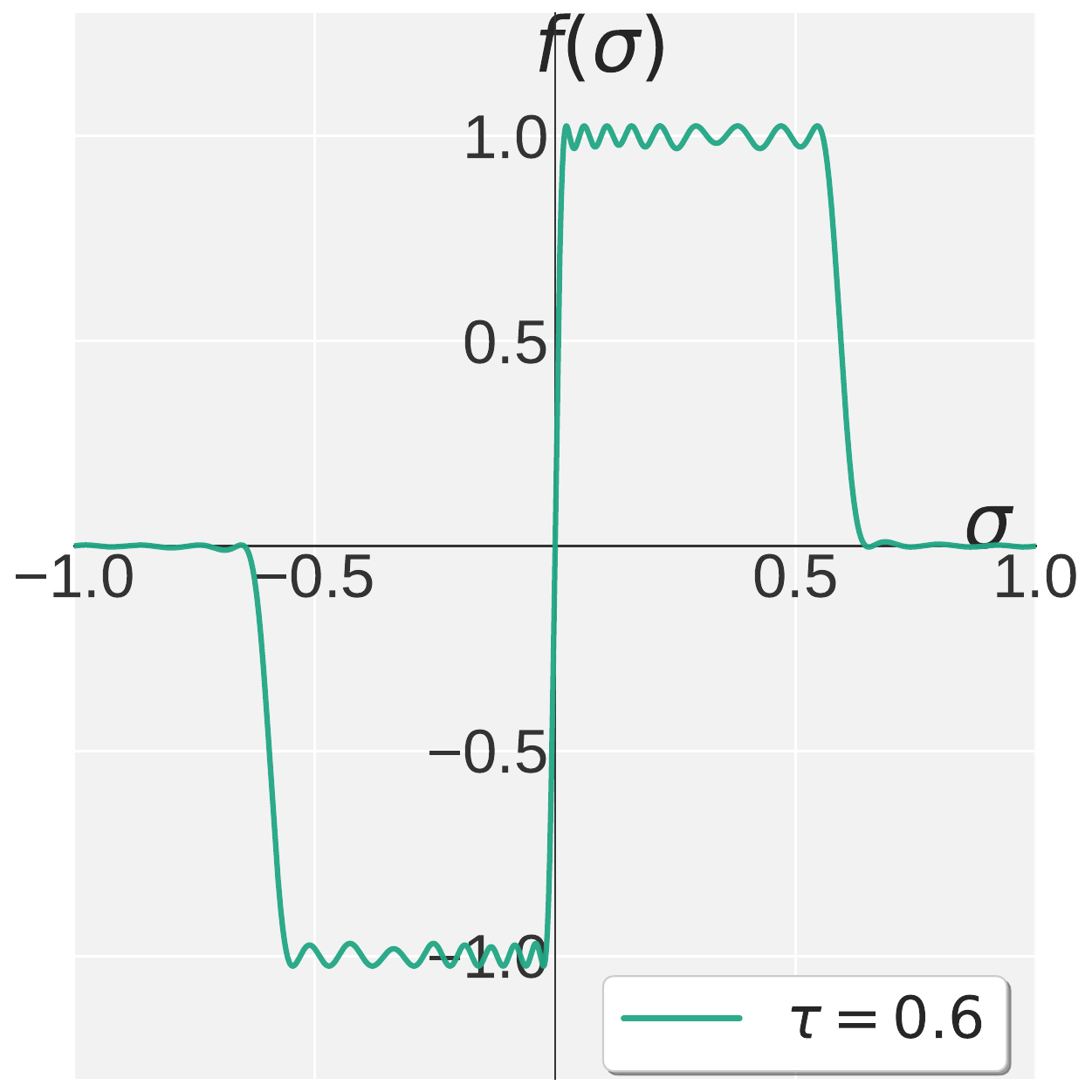} \\
        \small{(d) $\tau = 0.4$} & \small{(e) $\tau = 0.5$} & \small{(f) $\tau = 0.6$} \\[2pt]
        \includegraphics[width=0.225\textwidth]{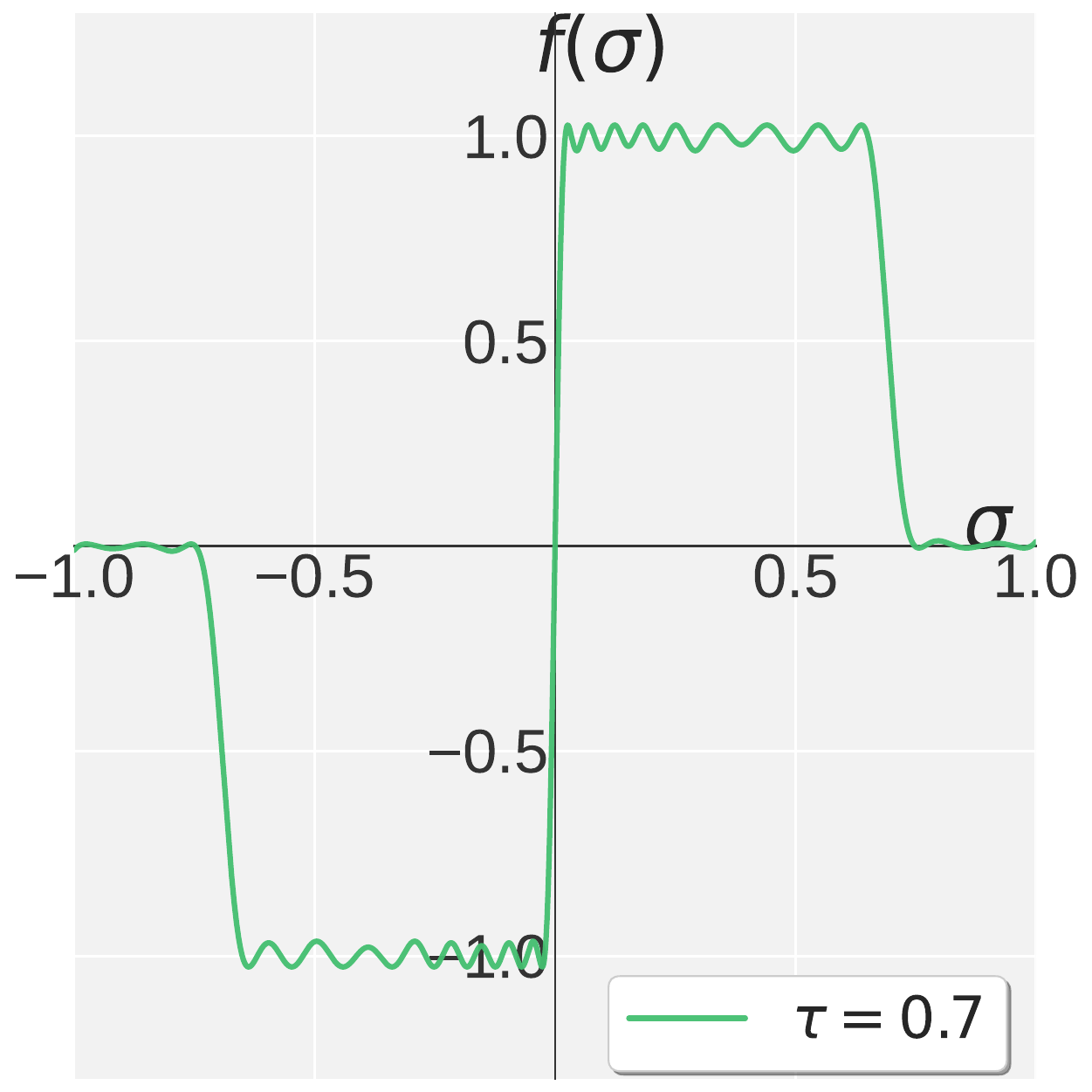} &
        \includegraphics[width=0.225\textwidth]{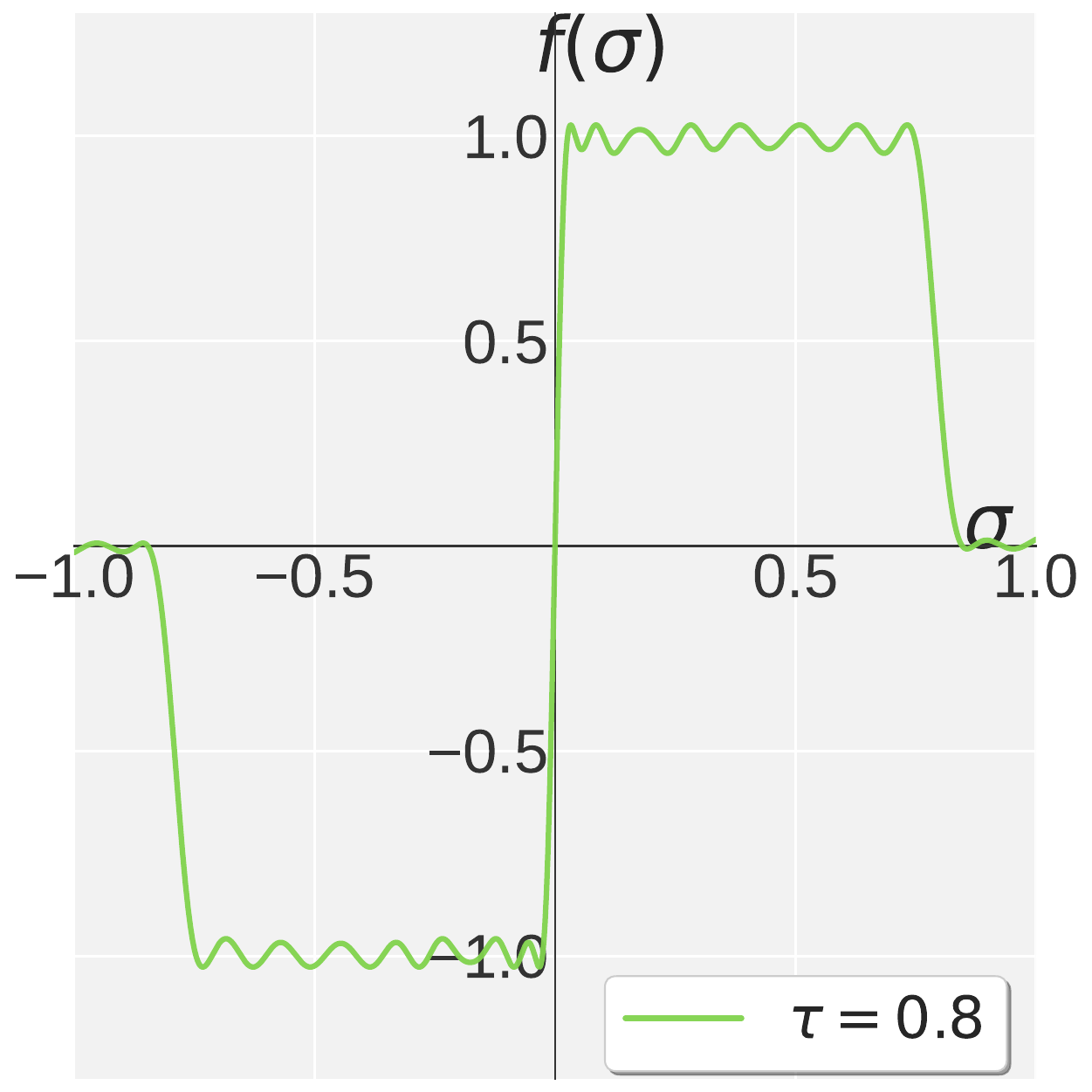} &
        \includegraphics[width=0.225\textwidth]{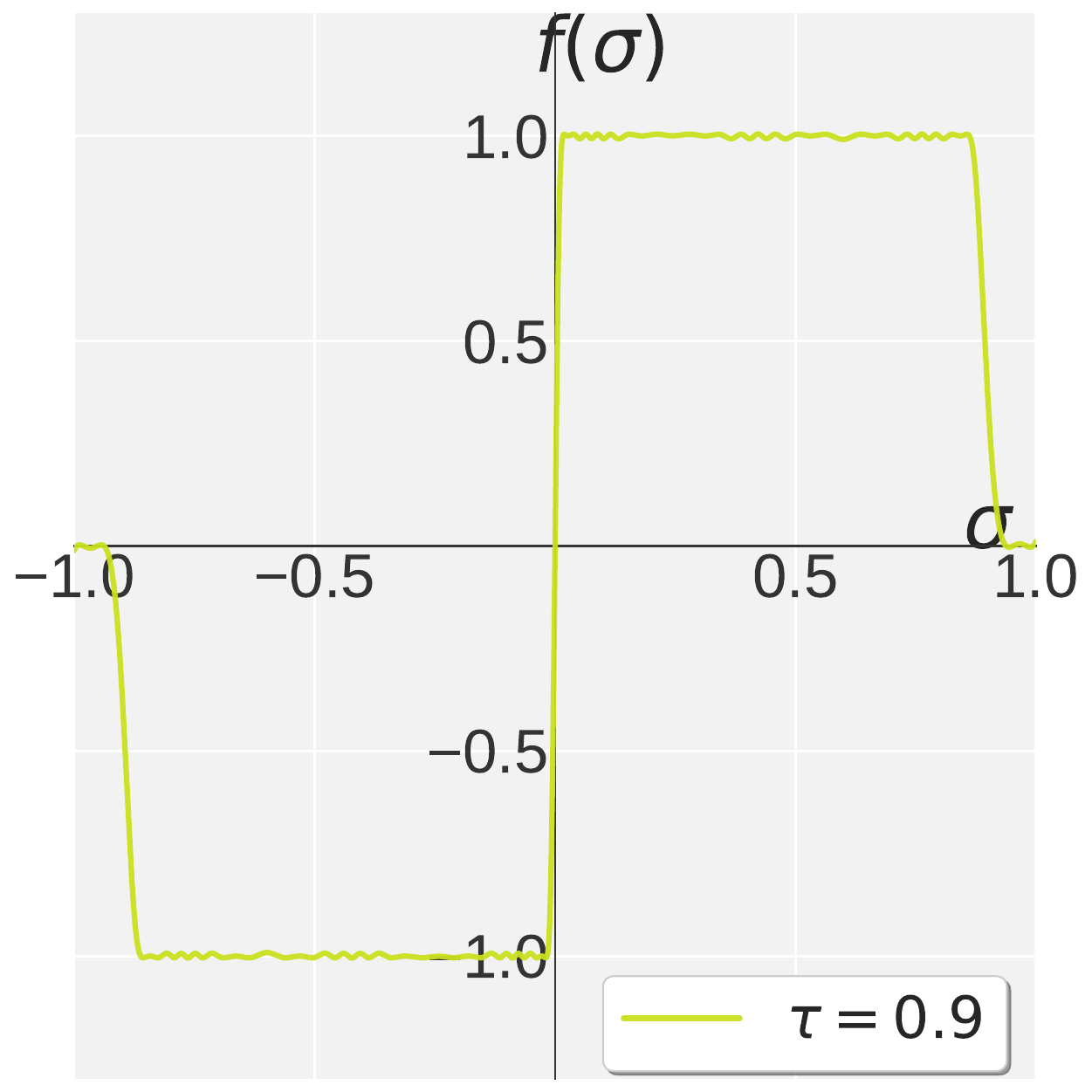} \\
        \small{(g) $\tau = 0.7$} & \small{(h) $\tau = 0.8$} & \small{(i) $\tau = 0.9$} \\
    \end{tabular}
    \vspace{-1mm}
    \caption{\small{Odd scalar extension $\sigma_{\mathrm{in}} \mapsto \sigma_{\mathrm{out}}$ fitted for LPMuon at nine thresholds $\tau \in \{0.1, 0.2, \ldots, 0.9\}$ obtained from Alg.\,\ref{alg:lowpass_fitting}. In the actual SVD update only the nonnegative half $\sigma_{\mathrm{in}}\in[0,1]$ is applied to singular values; the plotted negative half visualizes the antisymmetric extension. Each panel anchors the pass band ($|\sigma| \leq \tau$) at $\pm 1$ and contracts the stop band ($|\sigma| > \tau$) toward $0$.}}
    \label{fig:lowpass_curves}
\end{figure}

\begin{table}[H]
    \centering
    \caption{\small{Fitted coefficients $\hat{\boldsymbol{\theta}}(\tau) = \{(a_{1,k}, a_{3,k}, a_{5,k})\}_{k=1}^{5}$ of the $5$-step odd-quintic composition \eqref{eq: lowpass_composition} solved by Alg.\,\ref{alg:lowpass_fitting} for the cutoff sweep $\tau \in \{0.1, \ldots, 0.9\}$. Each row corresponds to one $\tau$; the $15$ entries are read step by step ($k=1, \ldots, 5$). At the matrix level, step $k$ applies \eqref{eq: lowpass_matrix_step}. The last column reports the converged loss $\mathcal{L}(\hat{\boldsymbol{\theta}}(\tau);\,\tau)$ of \eqref{eq: lowpass_total_loss}.}}
    \label{tab:lowpass_coeffs}
    \vspace{1mm}
    \resizebox{\textwidth}{!}{%
    \setlength{\tabcolsep}{4pt}
    \renewcommand{\arraystretch}{1.15}
    \begin{tabular}{c *{5}{|ccc}|c}
    \toprule
    \multirow{2}{*}{$\tau$} 
    & \multicolumn{3}{c|}{Step $k=1$}
    & \multicolumn{3}{c|}{Step $k=2$}
    & \multicolumn{3}{c|}{Step $k=3$}
    & \multicolumn{3}{c|}{Step $k=4$}
    & \multicolumn{3}{c|}{Step $k=5$}
    & \multirow{2}{*}{$\mathcal{L}(\hat{\boldsymbol{\theta}};\,\tau)$} \\
    \cmidrule(lr){2-4}\cmidrule(lr){5-7}\cmidrule(lr){8-10}\cmidrule(lr){11-13}\cmidrule(lr){14-16}
    & $a_{1,1}$ & $a_{3,1}$ & $a_{5,1}$
    & $a_{1,2}$ & $a_{3,2}$ & $a_{5,2}$
    & $a_{1,3}$ & $a_{3,3}$ & $a_{5,3}$
    & $a_{1,4}$ & $a_{3,4}$ & $a_{5,4}$
    & $a_{1,5}$ & $a_{3,5}$ & $a_{5,5}$ & \\
    \midrule
    $0.1$ & $+4.753$ & $-10.636$ & $+7.172$ & $+2.414$ & $-2.282$ & $+0.877$ & $+2.589$ & $-1.202$ & $+0.245$ & $+1.999$ & $-1.774$ & $+0.525$ & $+2.131$ & $-1.530$ & $+0.274$ & $0.00070$ \\
    $0.2$ & $+3.104$ & $-3.578$  & $+1.639$ & $+2.844$ & $-2.041$ & $+0.616$ & $+2.577$ & $-2.567$ & $+0.639$ & $+2.807$ & $-1.811$ & $+0.113$ & $+1.877$ & $-1.153$ & $+0.292$ & $0.00105$ \\
    $0.3$ & $+2.547$ & $-1.190$  & $+0.122$ & $+3.202$ & $-1.581$ & $+0.326$ & $+3.100$ & $-1.684$ & $+0.229$ & $+2.289$ & $-1.547$ & $+0.342$ & $+2.185$ & $-1.841$ & $+0.645$ & $0.00278$ \\
    $0.4$ & $+2.624$ & $-1.021$  & $-0.555$ & $+2.762$ & $-1.221$ & $+0.238$ & $+2.682$ & $-1.486$ & $+0.293$ & $+2.021$ & $-1.724$ & $+0.483$ & $+2.154$ & $-1.565$ & $+0.283$ & $0.00412$ \\
    $0.5$ & $+2.461$ & $-0.443$  & $-0.811$ & $+3.084$ & $-1.139$ & $+0.188$ & $+2.612$ & $-1.453$ & $+0.220$ & $+2.057$ & $-1.837$ & $+0.558$ & $+2.043$ & $-1.355$ & $+0.224$ & $0.00263$ \\
    $0.6$ & $+2.313$ & $-0.434$  & $-0.335$ & $+2.913$ & $-0.920$ & $+0.130$ & $+2.751$ & $-1.493$ & $+0.217$ & $+1.939$ & $-1.683$ & $+0.470$ & $+2.253$ & $-1.784$ & $+0.353$ & $0.00316$ \\
    $0.7$ & $+1.636$ & $-0.310$  & $-0.039$ & $+3.286$ & $-1.566$ & $+0.333$ & $+3.036$ & $-1.962$ & $+0.338$ & $+2.004$ & $-1.730$ & $+0.476$ & $+2.204$ & $-1.663$ & $+0.313$ & $0.00524$ \\
    $0.8$ & $+1.743$ & $-0.247$  & $+0.015$ & $+2.990$ & $-0.933$ & $+0.130$ & $+2.656$ & $-1.371$ & $+0.189$ & $+2.069$ & $-1.663$ & $+0.423$ & $+2.054$ & $-1.345$ & $+0.220$ & $0.00843$ \\
    $0.9$ & $+3.009$ & $-1.041$  & $-0.021$ & $+2.796$ & $-2.170$ & $+0.433$ & $+3.051$ & $-2.487$ & $+0.507$ & $+2.128$ & $-2.166$ & $+0.772$ & $+2.174$ & $-1.889$ & $+0.709$ & $0.00043$ \\
    \bottomrule
    \end{tabular}%
    }
\end{table}
\clearpage
\newpage
\section{Limitations}
\label{sec: limitations}


Pion is designed for regimes where the informative descent direction concentrates in a few leading singular values, which is not the case for LLM pretraining: pretraining benefits from Muon's uniform whitening, which lifts every singular value to $1$ and maximizes spectral exploration, whereas Pion's high-pass NS attenuates the tail and discards potentially useful directions. We therefore expect Pion to underperform Muon on LLM pretraining, and we leave to future work the question of how to adapt the high-pass cutoff to recover Muon's exploration behavior in pretraining while retaining Pion's noise robustness in VLA and RLVR.

\section{Broader Impact}
\label{sec: broader_impact}


On the positive side, Pion lowers the cost of training capable VLA policies and RLVR-tuned reasoning LLMs by stabilizing post-training under the same compute budget as Muon, which can broaden access to embodied agents and reasoning models. On the negative side, more capable VLA policies and reasoning LLMs carry the standard dual-use risks of robotic and language-based agents, including unsafe deployment and misuse for harmful content. We hope our work encourages further study of matrix-aware optimization beyond LLM pretraining alongside the safety practices already established for VLA and RLVR systems.


\end{document}